\documentclass[11pt]{article}
\usepackage[a4paper,left=2.2cm,right=2.2cm,top=2.8cm,bottom=2.6cm,headheight=14pt,headsep=0.75cm,footskip=1.1cm]{geometry}
\usepackage[T1]{fontenc}
\usepackage{amsmath,amsthm}
\usepackage{XCharter}
\usepackage[xcharter,vvarbb]{newtxmath}
\usepackage{microtype}
\usepackage{booktabs}
\usepackage{multirow}
\usepackage{graphicx}
\usepackage{wrapfig}
\usepackage{placeins}
\usepackage{xcolor}
\usepackage[normalem]{ulem}
\definecolor{cornellred}{HTML}{B31B1B}
\newcommand{\methodchange}[1]{\textcolor{cornellred}{#1}}
\definecolor{reviewgreen}{HTML}{1B7837}   
\definecolor{reviewred}{HTML}{B3261E}
\definecolor{reviewpurple}{HTML}{7B2CBF}
\newif\ifshowreviewedits
\showrevieweditstrue

\usepackage{fontawesome5}
\usepackage{caption}
\DeclareCaptionLabelSeparator{bar}{\ \textbar\ }
\usepackage[square,authoryear]{natbib}
\usepackage{algorithm}
\usepackage{algorithmic}
\usepackage{parskip}
\usepackage{titlesec}
\titleformat{\section}{\Large\bfseries\color{cornellred}}{\thesection.}{0.5em}{}
\titleformat{\subsection}{\large\bfseries\color{cornellred}}{\thesubsection.}{0.5em}{}
\titleformat{\subsubsection}{\normalsize\bfseries\color{cornellred}}{\thesubsubsection.}{0.5em}{}
\titlespacing*{\section}{0pt}{1.4em}{0.6em}
\titlespacing*{\subsection}{0pt}{1.0em}{0.4em}
\titlespacing*{\subsubsection}{0pt}{0.8em}{0.3em}
\usepackage{fancyhdr}
\newcommand{\papertitle}{Does On-Policy Distillation Need to Complete Every Rollout?}
\newcommand{\paperdate}{\number\year-\ifnum\month<10 0\fi\number\month-\ifnum\day<10 0\fi\number\day}
\renewcommand{\headrulewidth}{0.5pt}
\renewcommand{\footrulewidth}{0.5pt}
\fancypagestyle{firstpage}{%
  \fancyhf{}%
  \fancyhead[L]{\small\bfseries\papertitle}%
  \fancyfoot[L]{\footnotesize\textsuperscript{*}Equal contribution.}%
  \fancyfoot[R]{\small\thepage}%
  \renewcommand{\headrulewidth}{0.5pt}%
  \renewcommand{\footrulewidth}{0.5pt}}

\usepackage{hyperref}
\hypersetup{colorlinks=true,linkcolor=cornellred,citecolor=cornellred,urlcolor=cornellred,pdftitle={\papertitle}}

\renewenvironment{abstract}{\noindent\textbf{Abstract}\ \textbar\ \ignorespaces}{\par\medskip}
\newcommand{\E}{\mathbb{E}}
\newcommand{\Prob}{\mathbb{P}}
\newcommand{\ind}{\mathbf{1}}
\newcommand{\method}{STOPD}
\newcommand{\methodfull}{\textbf{S}tochastic \textbf{T}runcated \textbf{O}n-\textbf{P}olicy \textbf{D}istillation}
\begin{document}
\thispagestyle{firstpage}
\begin{flushleft}
{\LARGE\bfseries\color{cornellred}\papertitle\par}
\vspace{0.9em}
{\bfseries
Zhengxin Zhang\textsuperscript{1,*},
Maosen Tang\textsuperscript{1,*},
Ning Wang\textsuperscript{1,*},
Changhao Li\textsuperscript{2,*},
Jingren Zhou\textsuperscript{3},
Bo Dai\textsuperscript{2},
Sainyam Galhotra\textsuperscript{1},
Alex Townsend\textsuperscript{1},
Claire Cardie\textsuperscript{1}\par}

\vspace{0.25em}
{\small
\textsuperscript{1}Cornell University \quad
\textsuperscript{2}Georgia Tech \quad
\textsuperscript{3}Alibaba Group\par}

\vspace{0.25em}
{\small\faEnvelope[regular]\
\href{mailto:zz865@cornell.edu,mt872@cornell.edu,nw366@cornell.edu}{\texttt{\{zz865, mt872, nw366\}@cornell.edu}}\par}
\end{flushleft}
\vspace{-0.4em}
\noindent\rule{\textwidth}{0.5pt}
\vspace{-0.6em}
\begin{center}
\includegraphics[width=\textwidth]{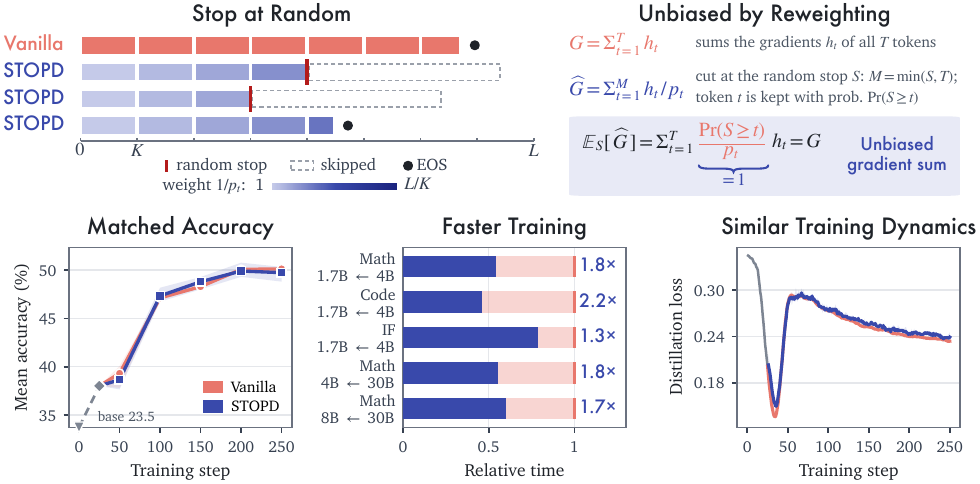}
\captionof{figure}{Overview of \method{}: random truncation with blockwise inverse-probability weighting (top) matches Vanilla OPD's accuracy and distillation loss for an 8B student in mathematics (bottom left and right) while reducing total training time across tasks and model scales (bottom center).}
\label{fig:overview}
\end{center}

\begin{abstract}
On-policy distillation (OPD) transfers teacher knowledge to a student through feedback on student-generated rollouts, but generating complete rollouts and computing per-token teacher feedback {are} computationally expensive. Existing truncation methods save computation by stopping rollouts early. However, they do not compensate for truncated token gradients, which biases the gradient estimates and changes Vanilla OPD's original training objective. Furthermore, their truncation decisions rely on assumptions about supervision quality or criteria that may not generalize across settings. We introduce \methodfull{} (\method), which accelerates OPD by randomly truncating student rollouts and correcting the resulting sampling bias with blockwise inverse-probability weighting (IPW). At each training step, we sample a stopping position for each rollout before generation and let the student generate until that position or natural termination. We obtain teacher feedback on every generated token and use blockwise IPW to compensate for the lower probability of retaining later tokens while controlling the weights. We prove that this weighting preserves Vanilla OPD's gradient sum in expectation and derive one-step update-error bounds for SGD, Adam, and Muon. We evaluate \method{} across multiple teacher--student model pairs, task domains, and optimizers. \method{} reduces generated tokens by $25$--$48\%$ while maintaining accuracy comparable to Vanilla OPD, achieving $1.3$--$2.2\times$ training speedups.

\end{abstract}

\section{Introduction}
\label{sec:intro}

On-policy distillation (OPD) effectively transfers capabilities from a teacher to a student through feedback on student-generated rollouts \citep{gu2024minillm,agarwal2024onpolicydistillationlanguagemodels,lu2025onpolicydistillation}. 
The teacher provides guidance on the student’s own rollouts.
However, each Vanilla OPD update generates complete student rollouts, scores every token with the teacher, and trains the student on all generated tokens \citep{wu2026lightningopd}. As rollouts grow during training, both autoregressive generation and the student's forward and backward passes become more expensive. Scaling up the teacher and student further increases the cost of scoring, generation, and updates. These growing costs make generation and student updates major computational bottlenecks, which motivates the following question: \emph{does OPD need to complete every rollout to obtain an effective student update?}

Existing approaches reduce different parts of this computation. Sparse supervision selects which generated positions contribute to the loss \citep{xu2026tip,wang2026teachability,sheng20261tokensenoughgradient}. These methods reduce student-update time but still require complete rollout generation, which remains the dominant cost.
Other approaches improve efficiency through early rollout truncation \citep{ziheng2026moreearlystoppingrollout,zhang2026rolloutsnecessaryonpolicydistillation,zhang-etal-2026-fast,yang2026pruneopdefficientreliableonpolicy,zhao2026prefixguidedonpolicydistillationmining,xin2026escapingklagreementtrap}. For example, Early Stopping Rollout (ESR) stops generation at a fixed cutoff \citep{ziheng2026moreearlystoppingrollout}, while Progressive On-Policy Distillation (POPD) extends the cutoff based on the assumption that feedback on later tokens is less reliable early in training \citep{zhang2026rolloutsnecessaryonpolicydistillation}. However, they leave truncated token gradients uncompensated, which introduces bias
into the gradient estimates and changes the original training objective of Vanilla OPD. Furthermore, they truncate rollouts based on assumptions about supervision quality or criteria that may not generalize across different settings.

We introduce \methodfull{} (\method), which accelerates OPD by randomly truncating student rollouts and correcting the resulting sampling bias with blockwise inverse-probability weighting (IPW) \citep{McLeish_2011}. At each training step, we sample a stopping position for each rollout before generation and let the student generate until that position or natural termination. Every generated token receives teacher feedback, and its loss is weighted by the inverse of the probability that it is included. However, uniformly sampled stopping positions can give late tokens extreme weights, which can destabilize training. We therefore combine IPW with blockwise truncation. We first divide the maximum training length into fixed-width token blocks. We then uniformly sample a position before generation and stop at the end of the block containing it. This increases the inclusion probabilities of late tokens, thereby limiting their weights.

We show that \method{}
preserves the full-generation gradient and token-count sums in expectation.
We then analyze its normalized gradient, and show how batch
averaging reduces the stochastic discrepancy. Finally, we bound how this
gradient discrepancy affects the one-step parameter update under common optimizers such as SGD, Adam, and Muon.
We further analyze how \method{} saves tokens and time. \method{} saves more tokens in expectation as rollout length increases. These token savings can translate into greater training-time savings when \method{} also improves generation throughput by reducing long-context computation or freeing cache space for more concurrent rollouts.

We evaluate \method{} in multiple domains (e.g., mathematics, code, and instruction following) with students from 1.7B to 8B and teachers from 4B to 30B. \method{} generates $25$--$48\%$
fewer tokens and achieves $1.3$--$2.2\times$ training speedups while closely matching Vanilla OPD in accuracy. The gains also persist when replacing the main AdamW optimizer with SGD or Muon. \method{} remains robust across block widths, maintaining similar accuracy and computational savings. Increasing the batch size from $128$ to $512$ further reduces truncation-induced training fluctuations without reducing the speedup. Together, these results show that OPD need not complete every rollout to learn effectively. STOPD preserves the expected gradient sums of complete rollouts using only partial generation, achieving comparable accuracy at lower training cost. This principle could extend to other autoregressive training settings, including offline and online knowledge distillation and supervised fine-tuning.

\section{Related Work}
\label{sec:related}

\noindent\textbf{Sparse supervision.}
Sparse supervision selects or reweights token positions to determine where and how strongly teacher feedback contributes to distillation \citep{xu2026tip,wang2026teachability,huang2025selectkd,sheng20261tokensenoughgradient}. Token Importance in On-Policy Distillation (TIP) \citep{xu2026tip} uses student uncertainty and teacher-student disagreement to guide token selection. Teachability-Aware On-Policy Distillation (TA-OPD) \citep{wang2026teachability} selects positions with large teacher-student disagreement and high teacher probability mass on the student's top-$K$ tokens. Selective Token-Weighted Knowledge Distillation (SelecTKD) \citep{huang2025selectkd} weights token losses according to whether the teacher accepts the student's proposals. The information-efficiency ratio (IER) \citep{sheng20261tokensenoughgradient} further accounts for the reliability of sampled-token gradient estimates when selecting supervision positions. These methods select or reweight the tokens used in the student update, but still require complete rollout generation, which remains a major computational cost.

\noindent\textbf{Rollout truncation.}
Rollout truncation methods improve training efficiency by stopping student rollouts early \citep{ziheng2026moreearlystoppingrollout,zhang2026rolloutsnecessaryonpolicydistillation,zhang-etal-2026-fast,yang2026pruneopdefficientreliableonpolicy,zhao2026prefixguidedonpolicydistillationmining,xin2026escapingklagreementtrap}. Early Stopping Rollout (ESR) \citep{ziheng2026moreearlystoppingrollout} trains only on a fixed prefix. Progressive On-Policy Distillation (POPD) \citep{zhang2026rolloutsnecessaryonpolicydistillation} extends the generation cutoff during training. FastOPD \citep{zhang-etal-2026-fast} increases the prefix length according to a predefined schedule. Prune-OPD \citep{yang2026pruneopdefficientreliableonpolicy} uses teacher--student top-$k$ overlap to down-weight unreliable suffixes and adjust subsequent rollout budgets. Prefix-Guided On-Policy Distillation (PG-OPD) \citep{zhao2026prefixguidedonpolicydistillationmining} uses prefix-level overlap to select rollouts for longer continuation. KL Agreement Trap Termination (KAT) \citep{xin2026escapingklagreementtrap} stops generation when teacher--student KL remains low across consecutive windows. These truncation methods leave omitted token gradients uncompensated, which biases the gradient estimates and changes Vanilla OPD's training objective. Their truncation decisions also rely on assumptions about supervision quality or criteria that may not generalize across settings.

Rather than computing on complete rollouts or imposing a fixed cutoff, we apply random truncation to student rollouts and correct unequal token coverage with inverse-probability weighting \citep{McLeish_2011,pmlr-v97-beatson19a}. We use blockwise truncation to control extreme token weights and characterize its trade-off between generation cost and estimator variance. We preserve the expected gradient sum while reducing generation, teacher scoring, and student-update computation.

\section{\methodfull{}}
\label{sec:method}

In this section, we present the training procedure in Section~\ref{sec:training}, study gradient preservation and update error in Section~\ref{sec:gradient-error}, and analyze token and time savings in Section~\ref{sec:savings}.

\subsection{Training Procedure}
\label{sec:training}

\begin{algorithm}[!t]
\caption{\method{}, with changes from Vanilla OPD in \methodchange{red}.}
\label{alg:stopd}
\begin{algorithmic}[1]
\REQUIRE Student $\pi_\theta$, teacher $q$, max length $L$, \methodchange{block width $K$}, batch size $B$, \methodchange{warmup steps $W$}
\STATE Run $W$ Vanilla OPD updates and retain the student and optimizer states.
\STATE \methodchange{Set $C\leftarrow N_W$.}
\FOR{each subsequent training step}
  \STATE Sample a batch of $B$ prompts.
  \FOR{each prompt $x_i$}
    \STATE \methodchange{Draw $A_i$ independently and compute $S_i$ using Eq.~\eqref{eq:stop}.}
    \STATE Generate until natural termination or \methodchange{$S_i$}.
    \STATE Obtain local teacher feedback on every generated token.
    \STATE \methodchange{Assign $w_t=1/p_t$ using Eq.~\eqref{eq:survival}.}
  \ENDFOR
  \STATE \methodchange{Compute $\widehat{\mathcal L}$ using the current denominator $C$.}
  \STATE Apply the baseline optimizer to $\nabla_\theta\widehat{\mathcal L}$.
  \STATE \methodchange{Set $C\leftarrow\sum_i\sum_{t=1}^{M_i}w_t$ for the next step.}
\ENDFOR
\end{algorithmic}
\end{algorithm}

At training step $k$, consider a batch of $B$ prompts $x_1,\ldots,x_B$ and the student $\pi_{\theta_k}$. Let $L$ be the max length and $1\leq T_i\leq L$ the length that rollout $i$ would have under full generation, including a naturally generated end-of-sequence token when reached. Before generating each student rollout, we randomly sample a stopping position. The student then generates until that position or natural termination. We then apply inverse-probability weighting (IPW) to the observed token losses to compensate for omitted tokens. A naive IPW implementation samples the stopping position uniformly from $\{1,\ldots,L\}$. However, long OPD rollouts can extend close to the max length, where tokens have very small probabilities of being retained. A token at position $L$, for example, is observed with probability $1/L$ and therefore receives weight $L$. When $L$ is large, observing such a token can produce an unusually large gradient contribution.

To limit these extreme weights, we introduce blockwise truncation. We group rollout positions into consecutive blocks and round each sampled stopping position up to the end of its block. Let $K\geq1$ denote the block width. When $K$ divides $L$, the last block is retained with probability $K/L$, reducing the largest weight from $L$ to $L/K$. Larger blocks thus trade additional generation for less extreme weighting. Before generating each rollout, we sample the anchor $A_i$ to select the stopping block and set the cutoff to that block's end:
\begin{equation}
A_i\sim\mathrm{Uniform}\{0,\ldots,L-1\},
\qquad
S_i=\min\!\left\{K\left(\left\lfloor A_i/K\right\rfloor+1\right),L\right\}.
\label{eq:stop}
\end{equation}
The student generates until natural termination or the sampled cutoff $S_i$, retaining $M_i=\min(T_i,S_i)$ tokens. We obtain teacher feedback on all retained tokens and weight each token's loss by the inverse probability that generation reaches its position. For $1\leq t\leq T_i$, this probability is
\begin{equation}
p_t=\Prob(S_i\geq t)
=\frac{L-K\lfloor(t-1)/K\rfloor}{L}>0.
\label{eq:survival}
\end{equation}
We can compute each token's inclusion probability and blockwise inverse-probability weight $w_t=1/p_t$ without knowing the complete rollout length. Later positions are retained less often and receive correspondingly larger weights. Let $h_{i,t}$ denote the gradient contributed by token $t$ in rollout $i$. The full gradient sum and our blockwise IPW estimate are
\begin{align}
G_k=\sum_{i=1}^B\sum_{t=1}^{T_i}h_{i,t}, \qquad \widehat{G}_k=\sum_{i=1}^B\sum_{t=1}^{M_i}w_t h_{i,t}.
\label{eq:ipw}
\end{align}

Vanilla computes its training loss by averaging token losses over the complete batch of $N_k=\sum_{i=1}^BT_i$ tokens, giving the gradient $g_k^{\mathrm V}=G_k/N_k$. Since random truncation leaves $N_k$ unknown, we normalize our weighted token losses using an estimate $C_k$ from the preceding batch:
\begin{equation}
\widehat{\mathcal L}_k(\theta)
=\frac1{C_k}\sum_{i=1}^B\sum_{t=1}^{M_i}w_t\ell_{i,t}(\theta),
\qquad
\widehat g_k=\widehat G_k/C_k.
\label{eq:loss}
\end{equation}
Here, $\ell_{i,t}(\theta)$ is the distillation loss for token $t$ in rollout $i$, with $h_{i,t}=\nabla_\theta\ell_{i,t}(\theta_k)$. The weights and $C_k$ are held constant during differentiation. Initial rollouts of base models are usually short relative to the max length, so random truncation offers limited savings. We therefore use $W\geq1$ Vanilla OPD warmup steps to avoid truncation noise during this phase and initialize $C_{W+1}$ with the last warmup batch's exact token count $N_W$. After switching to random truncation, each batch estimates its complete token count using the same inclusion weights:
\begin{equation}
\widehat N_k=\sum_{i=1}^B\sum_{t=1}^{M_i}w_t,
\qquad
C_k=
\begin{cases}
N_W,&k=W+1,\\
\widehat N_{k-1},&k>W+1.
\end{cases}
\label{eq:lagged}
\end{equation}

Algorithm~\ref{alg:stopd} summarizes the training procedure of our method. Next, we analyze gradient preservation and update error in Section~\ref{sec:gradient-error} and how our method saves tokens and time in Section~\ref{sec:savings}.

\subsection{Gradient Preservation and Update Error}
\label{sec:gradient-error}

We first establish gradient preservation by showing that
\method{} preserves the full-generation gradient and
token-count sums in expectation. We then analyze the normalized gradient,
separating the variation introduced by random truncation from the remaining
scale difference, and show how batch averaging reduces the stochastic
discrepancy. Finally, we
bound how this gradient discrepancy affects the one-step parameter update
under SGD, Adam, and Muon.

We compare \method{} with full-generation Vanilla OPD at a fixed
training step. We condition on the complete rollouts and their local
teacher feedback, leaving only the sampled stopping positions random, and
write $\E_S$ for expectation over this randomness. The comparison uses
three conditions. (1) Given the same prompt and previously generated
tokens, the next-token distribution does not depend on the remaining
generation budget. (2) The local loss at a retained position depends only
on the prompt, the generated tokens through that position, and its local
teacher feedback, so removing the later tokens leaves the retained token's loss unchanged. (3) Conditional on the training history, stopping positions are mutually
independent across rollouts and are sampled independently of the complete
rollout and teacher-feedback randomness. Each retained position is weighted
by its exact inverse inclusion probability $1/p_t$. Appendix~\ref{app:preservation} states these
conditions formally.

\paragraph{Preservation.}
Let
\(
I_{i,t}=\mathbf 1\{S_i\geq t\}
\)
indicate whether position $t$ is retained. Since
$\E_S[I_{i,t}]=p_t$, the weighted contribution of each position satisfies
\begin{equation}
\E_S\left[
\frac{I_{i,t}}{p_t}h_{i,t}
\right]
=
h_{i,t},
\qquad
\E_S\left[
\frac{I_{i,t}}{p_t}
\right]
=
1.
\label{eq:tokenwise-preservation}
\end{equation}
Thus blockwise inverse-probability weighting restores both the gradient contribution
and the token-count contribution of every position in expectation.
Equivalently, the estimators in Section~\ref{sec:training} can be written as
\begin{equation}
\widehat G_k
=
\sum_{i=1}^{B}
\sum_{t=1}^{T_i}
\frac{I_{i,t}}{p_t}h_{i,t},
\qquad
\widehat N_k
=
\sum_{i=1}^{B}
\sum_{t=1}^{T_i}
\frac{I_{i,t}}{p_t},
\end{equation}
and summing Eq.~\eqref{eq:tokenwise-preservation} gives
\begin{equation}
\E_S[\widehat G_k]
=
G_k,
\qquad
\E_S[\widehat N_k]
=
N_k.
\label{eq:sum-preservation}
\end{equation}

Equation~\eqref{eq:sum-preservation} establishes preservation at the level
directly affected by truncation. The training gradient also
contains the normalization in Eq.~\eqref{eq:loss}. Vanilla uses
$g_k^{\mathrm V}=G_k/N_k$, whereas \method{} uses
$\widehat g_k=\widehat G_k/C_k$, with $C_k$ fixed before the current
stopping positions are sampled. Therefore
\begin{equation}
\E_S[\widehat g_k]
=
\frac{G_k}{C_k}
=
\frac{N_k}{C_k}g_k^{\mathrm V}.
\label{eq:normalized-preservation}
\end{equation}
Blockwise IPW therefore removes the bias introduced by truncating the gradient sum.

\paragraph{Error after normalization.}
Define
\(
\rho_k=(N_k-C_k)/C_k.
\)
Then we have
\begin{equation}
\underbrace{
\widehat g_k-g_k^{\mathrm V}
}_{\text{gradient discrepancy}}
=
\underbrace{
\frac{\widehat G_k-G_k}{C_k}
}_{\text{variation from random truncation}}
+
\underbrace{
\rho_k g_k^{\mathrm V}
}_{\text{scale difference}}.
\label{eq:gradient-error-decomposition}
\end{equation}

To quantify the variation from random truncation, let
\(
V_k=\E_S[\|\widehat G_k-G_k\|^2].
\)
Since Eq.~\eqref{eq:sum-preservation} gives
$\E_S[\widehat G_k-G_k]=0$, taking the expected squared norm of
Eq.~\eqref{eq:gradient-error-decomposition} gives
\begin{equation}
\E_S\!\left[
\|\widehat g_k-g_k^{\mathrm V}\|^2
\right]
=
\frac{V_k}{C_k^2}
+
\rho_k^2\|g_k^{\mathrm V}\|^2.
\label{eq:normalized-error}
\end{equation}
Appendices~\ref{app:preservation} and~\ref{app:variance-block-width} derive
this variation using
\begin{equation}
v(T)
=
\sum_{t=1}^{T}
\left(p_t^{-1}-1\right)
\{2(T-t)+1\}.
\label{eq:variance-profile}
\end{equation}
If $\|h_{i,t}\|\leq H$ for every token, then
\begin{equation}
V_k
\leq
H^2\sum_{i=1}^{B}v(T_i),
\qquad
\E_S\!\left[
\|\widehat g_k-g_k^{\mathrm V}\|^2
\right]
\leq
\frac{
H^2\sum_{i=1}^{B}v(T_i)
}{
C_k^2
}
+
\rho_k^2\|g_k^{\mathrm V}\|^2.
\label{eq:gradient-error-bound}
\end{equation}

Batch averaging reduces the variation from random truncation. Since
$C_k\geq B$ and $v(T_i)\leq v(L)$,
\begin{equation}
\frac{V_k}{C_k^2}
\leq
\frac{H^2v(L)}{B}.
\label{eq:cutoff-batch-bound}
\end{equation}

\paragraph{Large-batch agreement.}
We next analyze how the scale difference behaves as the batch size grows.
For $k>W+1$, Eq.~\eqref{eq:lagged} gives
$C_k=\widehat N_{k-1}$. By Eq.~\eqref{eq:sum-preservation},
$\widehat N_{k-1}$ estimates the preceding complete token count.
{Assume that} complete rollouts are conditionally independent and identically
distributed within each batch {and that} token gradients {are} bounded by $H$. Appendices~\ref{app:lagged-normalization} and~\ref{app:batch-size}
state the formal conditions.

Let $\mu_j$ denote the conditional mean
complete rollout length at step $j$. Then
\begin{equation}
\frac{N_k}{B}
=
\mu_k+O_{\mathbb P}(B^{-1/2}),
\qquad
\frac{C_k}{B}
=
\mu_{k-1}+O_{\mathbb P}(B^{-1/2}),
\label{eq:count-concentration}
\end{equation}
and therefore
\begin{equation}
\frac{N_k}{C_k}
=
\frac{\mu_k}{\mu_{k-1}}
+
O_{\mathbb P}(B^{-1/2}).
\label{eq:normalization-ratio}
\end{equation}

Combining Eqs.~\eqref{eq:cutoff-batch-bound} and~\eqref{eq:normalization-ratio} with
Eq.~\eqref{eq:gradient-error-decomposition} gives
\begin{equation}
\widehat g_k-g_k^{\mathrm V}
=
\left(
\frac{\mu_k}{\mu_{k-1}}-1
\right)
g_k^{\mathrm V}
+
O_{\mathbb P}(B^{-1/2}).
\label{eq:large-batch-agreement}
\end{equation}
Thus batch averaging reduces the stochastic discrepancy at rate
$B^{-1/2}$. When the adjacent
conditional mean lengths agree,
$\widehat g_k-g_k^{\mathrm V}=O_{\mathbb P}(B^{-1/2})$.

\paragraph{Optimizer updates.}
We next translate the gradient discrepancy into a bound on the one-step
update error. We compare \method{} and Vanilla under the same optimizer,
from the same parameters and previous optimizer state, as formalized in
Appendix~\ref{app:optimizers}.

Let $\Phi(x)$ denote the parameter increment produced from gradient input
$x$, with the current parameters, previous optimizer state, and
hyperparameters fixed. Let $\mathcal Q$ be the
global norm-clipping map in Eq.~\eqref{eq:app-global-clip}, and write
\begin{equation}
\Delta\theta^{\mathrm S}
=
\Phi(\mathcal Q(\widehat g_k)),
\qquad
\Delta\theta^{\mathrm V}
=
\Phi(\mathcal Q(g_k^{\mathrm V})).
\label{eq:optimizer-updates}
\end{equation}

Let $\Lambda_{\mathrm{opt}}$ denote the optimizer-specific coefficient
evaluated at the Vanilla reference $\mathcal Q(g_k^{\mathrm V})$, using
the largest coefficient across disjoint parameter blocks. Since
$\mathcal Q$ is nonexpansive,
Eq.~\eqref{eq:app-clipped-update} gives
\begin{equation}
\|\Delta\theta^{\mathrm S}-\Delta\theta^{\mathrm V}\|
\leq
\Lambda_{\mathrm{opt}}
\|\widehat g_k-g_k^{\mathrm V}\|.
\label{eq:update-error}
\end{equation}
Combining Eq.~\eqref{eq:update-error} with
Eq.~\eqref{eq:normalized-error} gives
\begin{equation}
\E_S\!\left[
\|\Delta\theta^{\mathrm S}-\Delta\theta^{\mathrm V}\|^2
\right]
\leq
\Lambda_{\mathrm{opt}}^2
\left\{
\frac{V_k}{C_k^2}
+
\rho_k^2\|g_k^{\mathrm V}\|^2
\right\}.
\label{eq:update-error-ms}
\end{equation}

For SGD, Eq.~\eqref{eq:app-sgd-pointwise} gives
$\Lambda_{\mathrm{SGD}}=\eta$. Proposition~\ref{prop:app-adam} gives the Adam coefficient
$\Lambda_{\mathrm A}$, and Proposition~\ref{prop:app-muon} gives the
Muon coefficient $\Lambda_{\mathrm M}$.

Together, blockwise IPW preserves the complete gradient and token-count sums in
expectation, while normalization separates the gradient discrepancy into
variation from random truncation and the scale difference. Batch averaging
reduces the stochastic discrepancy, and changes in conditional mean
rollout length between adjacent steps determine the remaining scale
difference. The optimizer-specific coefficient then bounds the
corresponding one-step update error.

\subsection{Token and Time Savings}
\label{sec:savings}

We first quantify token savings, then analyze how they translate into
reduced generation time.

\noindent\textbf{Token savings.}
For a complete rollout of length $T$, random truncation skips the token at position $t$ with probability $1-p_t$. Therefore, the expected number of tokens saved is $D(T)=\sum_{t=1}^{T}(1-p_t)$.
For $K\ll T$, $D(T)\approx T^2/(2L)$, so the expected fraction of tokens saved is approximately $T/(2L)$. This corresponds to about $25\%$ expected token savings for $T=L/2$ and nearly $50\%$ for $T=L$.

\noindent\textbf{Time savings.}
We further show why random truncation can yield greater speedups than token savings alone would predict. During decoding, attention reads keys and values for the prompt and previously generated tokens from the key--value (KV) cache. Later tokens attend to longer cached contexts, and random truncation is more likely to skip these positions. Truncation therefore avoids forward-pass computation for skipped tokens, reduces KV-cache reads, and limits cache growth. To relate these savings to generation time, we model decoding time as a fixed cost per round plus a cost proportional to the cached context processed. We measure this cumulative context by the KV area $\mathcal A_i=\sum_{t=1}^{M_i}(T_i^{\mathrm p}+t-1)$, where $T_i^{\mathrm p}$ is the prompt length of rollout $i$. Since position $t$ is generated with probability $p_t$, the expected KV area is
\begin{equation}
\E_S\mathcal{A}_i=\sum_{t=1}^{T_i}p_t\,(T_i^{\mathrm p}+t-1)
\approx T_i^{\mathrm p}T_i\Bigl(1-\frac{T_i}{2L}\Bigr)+\frac{T_i^2}{2}-\frac{T_i^3}{3L}.
\label{eq:method-kv-area}
\end{equation}
The approximation neglects discrete and block-rounding corrections for $K\ll T_i$; Appendix~\ref{app:time-savings} derives the expression and the time model. When rollouts are much longer than their prompts, the prompt term contributes little to the KV area. A rollout reaching half the max length then saves about $25\%$ of its tokens and $33\%$ of its KV area in expectation. At the max length, these savings increase to about $50\%$ and $67\%$, respectively. Thus, random truncation reduces the cumulative cached context by a larger fraction than the token count.

The per-rollout reductions above accumulate across the batch. Let
\[
\mathcal A=\sum_{i=1}^{B}\mathcal A_i,
\qquad
\mathcal A^{\mathrm V}=\sum_{i=1}^{B}\mathcal A_i^{\mathrm V},
\]
where $\mathcal A_i^{\mathrm V}$ is the full-generation KV area of rollout
$i$, and let $N=\sum_iT_i$ and $N^{\mathrm S}=\sum_iM_i$ denote the
generated-token counts of Vanilla and \method{}, respectively. The
batch-level KV-area ratio decomposes as
\begin{equation}
\frac{\mathcal A^{\mathrm V}}{\E_S[\mathcal A]}
=
\underbrace{
\frac{N}{\E_S[N^{\mathrm S}]}
}_{\text{token-count ratio}}
\quad
\underbrace{
\frac{\mathcal A^{\mathrm V}/N}
{\E_S[\mathcal A]/\E_S[N^{\mathrm S}]}
}_{\text{mean-context-per-token ratio}}.
\label{eq:method-kv-area-ratio}
\end{equation}
The first factor captures the reduction in generated tokens, while the
second captures the reduction in cached context processed per generated
token. When the prompt contribution is neglected, both factors are at least one.

These reductions translate into generation-time savings according to the decoding costs and scheduling. Consider the setting where prefill time is negligible compared with decoding time. When attention over long contexts dominates decoding, the model predicts that time savings track the reduction in KV area. The same relation holds when queued rollouts replace completed ones and maintain roughly constant cache occupancy at a similar level in both methods: shorter contexts allow more rollouts to generate concurrently, improving throughput. Larger training batches can help sustain this scheduling regime. In our 1.7B mathematics experiments, generating $34\%$ fewer tokens yields a $2.0\times$ generation speedup, exceeding the $1.5\times$ ratio implied by token savings alone (Table~\ref{tab:time-decomposition}).

If all rollouts start together, a long surviving rollout can keep the number of decoding rounds nearly unchanged. Truncation can still accelerate generation by reducing the context processed in those rounds, but the gain is smaller when fixed costs per round dominate. Random truncation also reduces teacher scoring and student loss computation, further contributing to total training-time savings. We evaluate token and time savings in Section~\ref{sec:main-results}.

\section{Experimental Setup}
\label{sec:experimental-setup}

We use the same experimental settings for Vanilla OPD and \method{}.

\noindent\textbf{Models.}
The main setting distills Qwen3-4B-Instruct-2507 into Qwen3-1.7B-Base \citep{qwen2025qwen3}. For scaling, we distill Qwen3-30B-A3B-Instruct-2507 into Qwen3-4B-Base and Qwen3-8B-Base in mathematics. All models run in non-thinking mode.

\noindent\textbf{Data.}
We use Nemotron-Cascade-RL-Math \citep{wang2025nemotroncascade} for mathematics, DeepCoder-Preview \citep{deepcoder2025dataset} for code, and Nemotron-RL instruction following \citep{nvidia2025nemotronrlif} for instruction following (IF).

\noindent\textbf{Training.}
Our main runs use AdamW \citep{loshchilov2019adamw} with learning rate $10^{-6}$, batch size 256, and gradient clipping at 1, for 250 steps with three seeds. Rollouts use temperature 1 and max lengths of 16{,}384, 8{,}192, and 4{,}096 for mathematics, code, and instruction following, respectively. \method{} uses block width $K=128$ after 25 Vanilla warmup steps. Each run uses two GPUs: H100 for all 1.7B experiments, H200 for the 4B student, and GB200 for the 8B student.


\noindent\textbf{Evaluation.}
We report the average score over $k$ sampled responses per problem (avg@$k$). For mathematics, we report AIME24, AIME25, and AMC23 at avg@8, and MATH500 \citep{hendrycks2021math} and Minerva \citep{lewkowycz2022minerva} at avg@3, allowing up to 32{,}768 tokens. For code we report HumanEval+ and MBPP+ \citep{liu2023evalplus} at avg@4, two LiveCodeBench windows (2024-08 to 2025-02 and 2025-02 to 2025-05; \citealp{jain2024livecodebench}) at avg@8, and 200 held-out training-distribution problems at avg@4, allowing up to 16{,}384 tokens. For instruction following, we report IFEval prompt-level strict accuracy \citep{zhou2023ifeval} at avg@4, allowing up to 8{,}192 tokens. All evaluations sample at temperature 0.7 and top-$p=0.95$. Main results are averaged over three training seeds.

\noindent\textbf{Timing.}
We measure wall-clock time per training step, including generation, teacher scoring, the student update, and other overhead.

\section{Experiments}
\label{sec:experiments}

We evaluate whether \method{} reduces training time while retaining the accuracy of Vanilla OPD. Section~\ref{sec:main-results} compares accuracy, training speed, and training dynamics over 250 steps in three domains, Section~\ref{sec:scaling} turns to larger students, Section~\ref{sec:optimizers} repeats mathematics with SGD and Muon, Section~\ref{sec:block-width} varies the block width, and Section~\ref{sec:batch-size-math} analyzes the batch size. Table~\ref{tab:speed} summarizes training speed for the main setting.

\subsection{Main Results}
\label{sec:main-results}

\begin{figure*}[t]
\centering
\includegraphics[width=\textwidth]{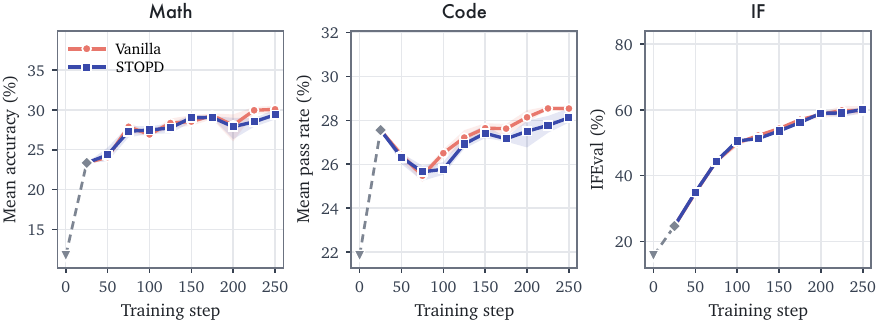}
\caption{Accuracy of the 1.7B student against training step with the base model and the shared warmup in gray.}
\label{fig:main-time}
\end{figure*}


\begin{table}[t]
\centering
\caption{Cost of a training step in the main setting.}
\label{tab:speed}\label{tab:time-decomposition}
\small
\setlength{\tabcolsep}{6pt}
\begin{tabular}{llrrrrrr}
\toprule
\multicolumn{4}{l}{\textit{Qwen3-1.7B-Base $\leftarrow$ Qwen3-4B-Instruct-2507}} & \multicolumn{4}{c}{\textbf{Time per step (s)}}\\
\cmidrule(lr){5-8}
\textbf{Domain} & \textbf{Method} & \textbf{Tokens} & \textbf{k tok/s} & \textbf{Generation} & \textbf{Update} & \textbf{Other} & \textbf{Total}\\
\midrule
\multirow{3}{*}{Math} & Vanilla & 9{,}516 & 3.5 & 697 & 189 & 56 & 941\\
 & \method{} & 6{,}255 & 4.5 & 358 & 119 & 36 & 513\\
 & \textcolor{cornellred}{\textit{Speedup}} & \textcolor{cornellred}{1.5$\times$} & \textcolor{cornellred}{1.3$\times$} & \textcolor{cornellred}{2.0$\times$} & \textcolor{cornellred}{1.6$\times$} & \textcolor{cornellred}{1.5$\times$} & \textcolor{cornellred}{\textbf{1.8$\times$}}\\
\midrule
\multirow{3}{*}{Code} & Vanilla & 6{,}587 & 4.4 & 383 & 142 & 41 & 566\\
 & \method{} & 3{,}444 & 5.6 & 158 & 78 & 24 & 260\\
 & \textcolor{cornellred}{\textit{Speedup}} & \textcolor{cornellred}{1.9$\times$} & \textcolor{cornellred}{1.3$\times$} & \textcolor{cornellred}{2.4$\times$} & \textcolor{cornellred}{1.8$\times$} & \textcolor{cornellred}{1.7$\times$} & \textcolor{cornellred}{\textbf{2.2$\times$}}\\
\midrule
\multirow{3}{*}{IF} & Vanilla & 710 & 3.8 & 54 & 36 & 14 & 104\\
 & \method{} & 533 & 3.8 & 39 & 29 & 12 & 80\\
 & \textcolor{cornellred}{\textit{Speedup}} & \textcolor{cornellred}{1.3$\times$} & \textcolor{cornellred}{1.0$\times$} & \textcolor{cornellred}{1.4$\times$} & \textcolor{cornellred}{1.2$\times$} & \textcolor{cornellred}{1.1$\times$} & \textcolor{cornellred}{\textbf{1.3$\times$}}\\
\bottomrule
\end{tabular}

\end{table}


Figure~\ref{fig:main-time} shows that
\method{} and Vanilla both increase the accuracy across all three domains. Accuracy every 25 steps is shown in Table~\ref{tab:results} in Appendix~\ref{app:accuracy}. For example, student accuracy in mathematics increases from $11.7$ to $30.1$ with Vanilla and $29.4$ with \method{}; in code, from $21.8$ to $28.5$ and $28.1$; and in instruction following, from $15.6$ to $59.8$ and $60.1$, respectively. Besides following similar increasing trends, \method{} and Vanilla OPD
remain closely matched in accuracy across the three domains. The relative \method{}--Vanilla difference is
$-0.90\%\pm2.36\%$ in mathematics, $-1.28\%\pm1.09\%$ in code, and
$-0.31\%\pm1.15\%$ in instruction following. Across all comparisons, the
mean relative difference is $-0.83\%\pm1.62\%$, and the largest absolute difference is $5.00\%$. 

We then compare the training time of the two methods in Table~\ref{tab:speed}. Overall, \method{} speeds up training by $1.8\times$ in mathematics, $2.2\times$ in code, and $1.3\times$ in instruction following. \method{} generates $34\%$, $48\%$, and $25\%$ fewer tokens in
mathematics, code, and instruction following, corresponding to token-count
ratios of $1.5\times$, $1.9\times$, and $1.3\times$. Generation is
$2.0\times$, $2.4\times$, and $1.4\times$ faster, respectively, with
throughput increasing from $3.5$k to $4.5$k tokens/s in mathematics and
from $4.4$k to $5.6$k tokens/s in code. The larger generation gains in
mathematics and code reflect their longer rollouts: truncation reduces
not only the number of generated tokens but also the cached context
processed. Skipped later positions would otherwise attend to larger KV
caches, so removing them reduces KV area per generated token. In
instruction following, rollouts are much shorter, so the additional
KV-area reduction is smaller, leaving the generation speedup close to
the token-count ratio. Student updates are also $1.2$--$1.8\times$ faster, as the student's forward and backward passes process fewer tokens, and the remaining time (Other column in Table~\ref{tab:speed}), mostly the forward pass that recomputes old log-probabilities, falls by $1.1$--$1.7\times$ for the same reason.

Figure~\ref{fig:dynamics} shows that
\method{} and Vanilla follow similar overall training dynamics across all
three domains: the entropy, distillation-loss, gradient-norm, and response-length curves
have similar shapes over training (since \method{} truncates its rollouts, its response length is the estimated complete length per rollout, $\widehat N_k/B$ from Eq.~\eqref{eq:lagged}), while \method{} exhibits larger local
fluctuations. These fluctuations come from the variation introduced by random truncation.
Blockwise inverse-probability weighting preserves the expected contribution
of truncated tokens, but positions retained with lower probability receive
larger weights, so the weighted gradient varies across batches depending on
which positions are retained. We provide more discussion of these fluctuations in Section~\ref{sec:batch-size-math}. The estimated complete rollout lengths under \method{} remain close to Vanilla.



%

\begin{figure*}[t]
\centering
\includegraphics[width=\textwidth]{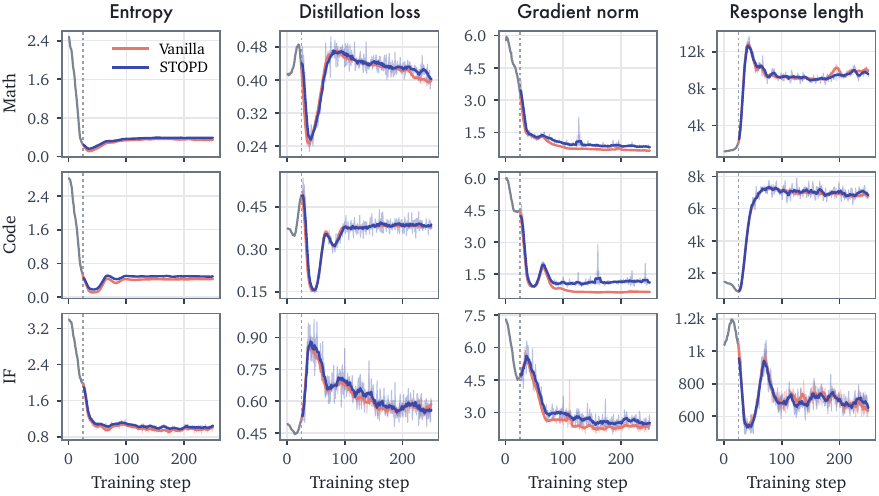}
\caption{Training dynamics of the 1.7B student with the shared warmup in gray.}
\label{fig:dynamics}
\end{figure*}

\subsection{Scaling}
\label{sec:scaling}

\begin{figure*}[t]
\centering
\includegraphics[width=\textwidth]{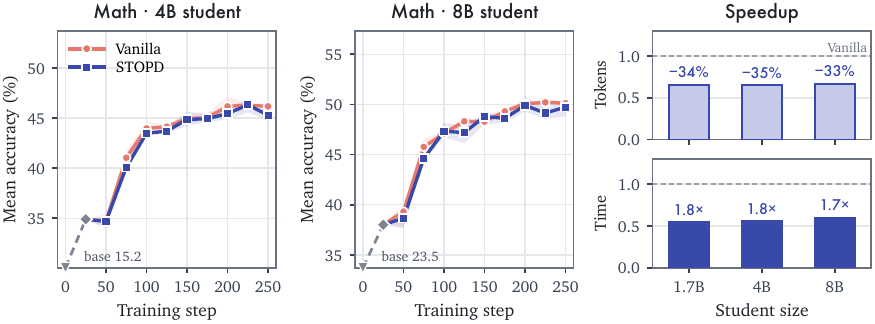}
\caption{Scaling in mathematics: accuracy for 4B and 8B students distilled from
Qwen3-30B-A3B-Instruct-2507 (left, middle), and token and training-time
speedups across student sizes, including the 1.7B main setting of Section~\ref{sec:main-results} (right).}
\label{fig:scaling}
\end{figure*}

\begin{figure*}[t]
\centering
\includegraphics[width=\textwidth]{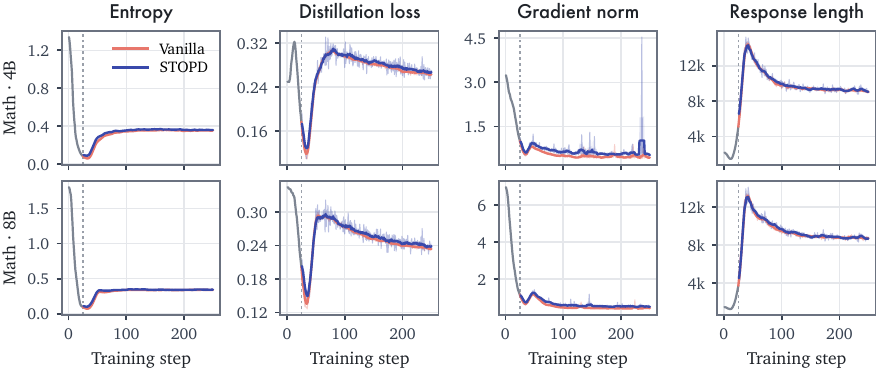}
\caption{Training dynamics of the 4B and 8B students in mathematics with the shared warmup in gray.}
\label{fig:scaling-dynamics}
\end{figure*}

We then evaluate \method{} at larger model scales. Figure~\ref{fig:scaling} shows that both methods improve accuracy when scaling to a larger teacher and larger students, distilling Qwen3-30B-A3B-Instruct-2507 into Qwen3-4B-Base and Qwen3-8B-Base. Accuracy every 25 steps is shown in Table~\ref{tab:results} in Appendix~\ref{app:accuracy}, and additional time decomposition in Table~\ref{tab:speed-scaling} in Appendix~\ref{app:scaling-cost}. For the 4B and 8B students, accuracy increases from $15.2$ and $23.5$ to $46.2$ and $50.1$ with Vanilla, and to $45.3$ and $49.7$ with \method{}, respectively. 
With the 30B
teacher, the relative \method{}--Vanilla differences are
$-0.96\%\pm0.90\%$ for the 4B student and $-1.09\%\pm1.26\%$ for the
8B student, with the largest absolute differences of $2.43\%$ and $2.62\%$,
respectively, compared with $-0.90\%\pm2.36\%$ and $5.00\%$ in the main
4B-teacher/1.7B-student mathematics setting (Section~\ref{sec:main-results}). At larger model scales, STOPD exhibits a narrower spread and smaller worst-case deviations from Vanilla, suggesting that \method{} is more robust and may benefit from scaling to larger models.
In terms of speed, \method{} also retains its gains at larger model scales, accelerating the full training step by $1.8\times$ for the 4B student and $1.7\times$ for the 8B student, while generating $35\%$ and $33\%$ fewer tokens, respectively (Figure~\ref{fig:scaling}). Figure~\ref{fig:scaling-dynamics} further shows that the training dynamics of the larger students follow those of Vanilla.

\subsection{Optimizer}
\label{sec:optimizers}

\begin{figure*}[t]
\centering
\includegraphics[width=\textwidth]{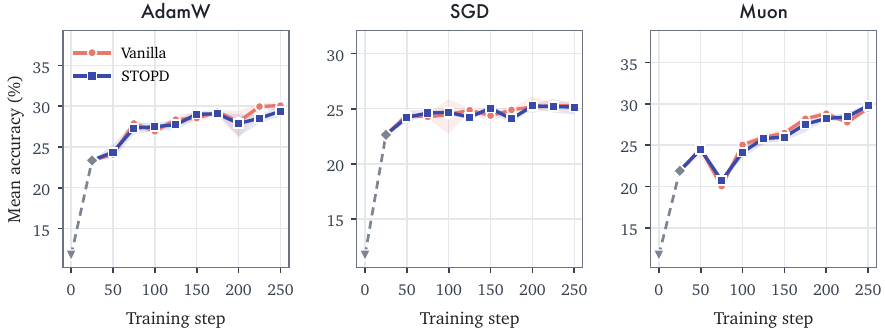}
\caption{Accuracy against training step in mathematics under AdamW in the main setting of Section~\ref{sec:main-results} and under SGD and Muon, with the base model and each optimizer's warmup in gray.}
\label{fig:optim}
\end{figure*}
Along with the main AdamW setting (Section~\ref{sec:experimental-setup}), we further evaluate whether \method{} generalizes across common optimizers by testing it with SGD and Muon (Figure~\ref{fig:optim}). Accuracy every 25 steps is shown in Table~\ref{tab:results-optim} in Appendix~\ref{app:accuracy}, and additional time decomposition in Table~\ref{tab:time-optim} in Appendix~\ref{app:scaling-cost}.
Under SGD, accuracy increases from $11.7$ to $25.2$ with Vanilla
and $25.1$ with \method{}, with the largest absolute relative difference
of $3.2\%$. Under Muon, it increases from $11.7$ to $29.5$ and
$29.8$, respectively, with the largest absolute relative difference of
$3.6\%$. The computational gains also hold under both optimizers. Under SGD,
\method{} generates $41\%$ fewer tokens and accelerates the full training
step by $2.2\times$. Under Muon, it generates $37\%$ fewer tokens and
achieves a $1.9\times$ speedup. The training dynamics also remain similar to Vanilla under both SGD and
Muon (Figure~\ref{fig:optim-dynamics}).


\begin{figure*}[t]
\centering
\includegraphics[width=\textwidth]{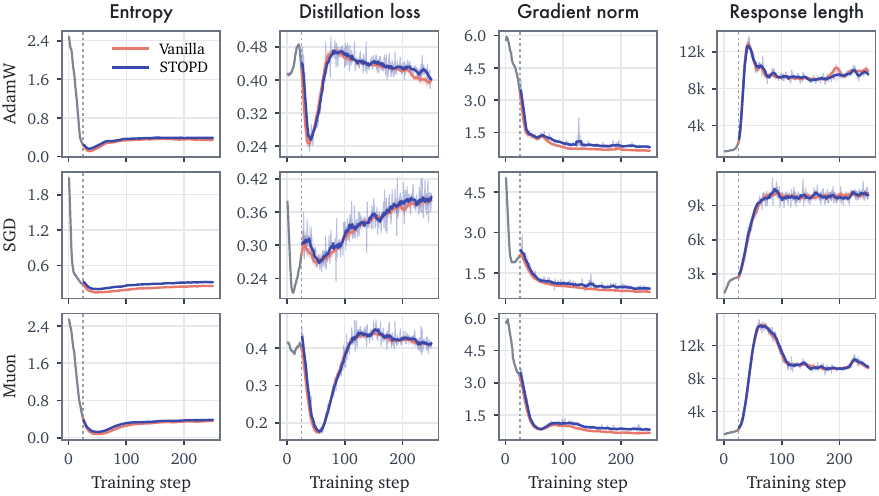}
\caption{Training dynamics in mathematics (1.7B $\leftarrow$ 4B) under AdamW and under SGD and Muon, with each optimizer's warmup in gray.}
\label{fig:optim-dynamics}
\end{figure*}


\subsection{Block Width}
\label{sec:block-width}

\begin{figure*}[t]
\centering
\includegraphics[width=\textwidth]{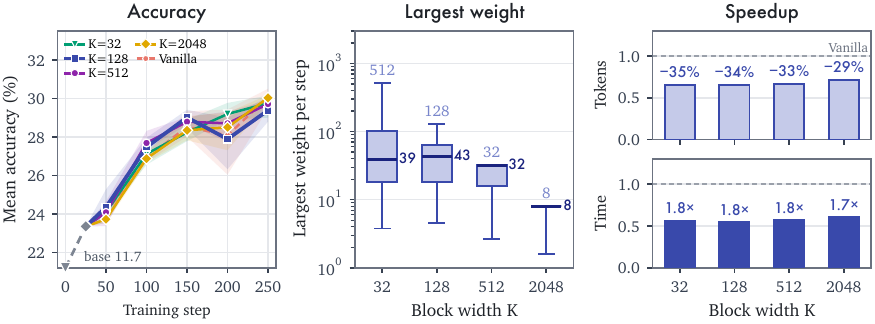}
\caption{Block width in mathematics (1.7B $\leftarrow$ 4B): accuracy against training step with the base model and the shared warmup in gray (left), the largest weight in each training step as a box plot with its bound $L/K$ above and its median beside each box (middle), and tokens and training time relative to Vanilla (right).}
\label{fig:blockwidth}
\end{figure*}

We next study how the block width $K$ affects the trade-off between
weight magnitude and computational savings. We evaluate
$K\in\{32,128,512,2048\}$ over three
seeds in Figure~\ref{fig:blockwidth}. Accuracy every 25 steps is shown in Table~\ref{tab:results-blockwidth} in Appendix~\ref{app:accuracy}, and additional time decomposition in Table~\ref{tab:time-blockwidth} in Appendix~\ref{app:scaling-cost}. For
$K=32$, $128$, and $512$, \method{} generates $34.6\%$, $34.3\%$, and
$33.1\%$ fewer tokens, respectively, and remains $1.8\times$ faster than
Vanilla. At $K=2048$, the saving decreases to $28.6\%$ and the speedup to
$1.7\times$. Accuracy is consistent across block widths. Across all pairwise comparisons
among $K$, the mean absolute relative difference is
$1.29\%\pm0.97\%$, and the largest absolute relative difference is $4.55\%$. 

We then show the largest weight per step across $K$ and use $K=128$ as the default. This is because it balances weight control and
speedup. Reducing $K$ to $32$ raises the largest weight
from $128$ to $512$, while the median largest weight remains similar
($43$ versus $39$). Increasing $K$ to $512$ or $2048$ lowers the weight
bound further, but also reduces token savings. Thus, $K=128$ avoids a
large worst-case weight without sacrificing the savings of smaller blocks.

\subsection{Batch Size}
\label{sec:batch-size-math}

\begin{figure*}[t]
\centering
\includegraphics[width=\textwidth]{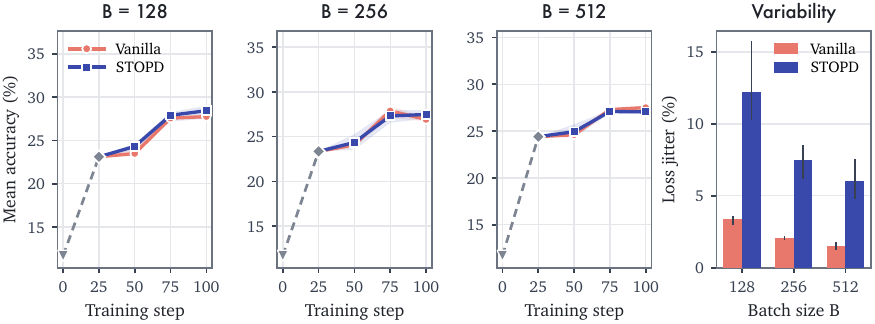}
\caption{Accuracy against training step in mathematics (1.7B $\leftarrow$ 4B) at $B=128$, $256$, and $512$ (left), and loss jitter (right).}
\label{fig:batchsize-math}
\end{figure*}

We vary the training batch size in mathematics with the Qwen3-4B teacher and Qwen3-1.7B student, training at $B=128$ and $B=512$ for 100 steps alongside the main setting of $B=256$ (Figure~\ref{fig:batchsize-math}). Accuracy every 25 steps is shown in Table~\ref{tab:results-batch-math} in Appendix~\ref{app:accuracy}, and additional time decomposition in Table~\ref{tab:time-batch-math} in Appendix~\ref{app:scaling-cost}.  Accuracy remains closely matched to Vanilla at every batch size, and the match tightens as the batch size grows. 
The relative \method{}--Vanilla differences are $2.22\%\pm1.16\%$, $0.44\%\pm2.30\%$, and $-0.44\%\pm1.54\%$, and the largest absolute difference falls from $3.40\%$ to $2.23\%$ and $1.82\%$. 
Over the first 100 steps, the speedup holds across all batch sizes, with \method{} accelerating the full training step by $2.0\times$ at batch size $128$ and $2.1\times$ at batch sizes $256$ and $512$ (Table~\ref{tab:time-batch-math} in Appendix~\ref{app:scaling-cost}). This is because larger batches can help sustain high generation concurrency and KV-cache occupancy, potentially enabling greater speedups.
Increasing the batch size also reduces the fluctuations introduced by random truncation. We measure them by the loss jitter, the median over training steps of the relative deviation $|\mathcal L_k-(\mathcal L_{k-1}+\mathcal L_{k+1})/2|/|\mathcal L_k|$ of the distillation loss $\mathcal L_k$ from the average of its two neighbors, averaged over seeds. The loss jitter of \method{} decreases from $12.2\%$ at $B=128$ to $7.5\%$ at $B=256$ and $6.0\%$ at $B=512$, close to the $8.6\%$ and $6.1\%$ that the $B^{-1/2}$ rate predicts from $B=128$, while Vanilla's decreases from $3.4\%$ to $2.1\%$ and $1.5\%$. Because truncations are sampled independently across rollouts, larger batches average over more realizations, reducing truncation-induced gradient fluctuations at a rate of $O_{\mathbb{P}}(B^{-1/2})$.

\section{Discussion}
\label{sec:discussion}

The principle behind \method{} could accelerate a broader range of autoregressive training workflows, including offline and online knowledge distillation, supervised fine-tuning, on-policy self-distillation, and multi-teacher on-policy distillation. 

\noindent\textbf{Knowledge distillation (KD).}
In offline KD~\citep{kim-rush-2016-sequence}, we store the truncated sequences and their inclusion probabilities during dataset construction, then use them for subsequent student training with blockwise IPW. In online KD~\citep{agarwal2024onpolicydistillationlanguagemodels}, we apply blockwise IPW within each training batch using the current teacher. Both settings could reduce generation and student-update computation to speed up training.

\noindent\textbf{Supervised fine-tuning.}
The same approach applies to SFT by randomly truncating reference responses and applying blockwise IPW to the retained next-token cross-entropy losses.

\noindent\textbf{On-policy self-distillation.}
OPSD uses the same model for both student and teacher, with the teacher additionally conditioned on a reference solution \citep{zhao2026opsd}. We can retain the complete reference solution, randomly truncate the student rollout, and apply blockwise IPW to the distillation losses computed on its retained tokens. This could also reduce computation cost.

\noindent\textbf{Multi-teacher on-policy distillation.}
MOPD assigns each student rollout to a teacher according to its task domain \citep{ma2026mopd}. We can keep this assignment, independently truncate each student rollout, and have its assigned teacher score only the retained prefix. Applying blockwise IPW preserves the expected local gradient sum within each domain and could reduce the computation needed to integrate the specialist teachers' capabilities into a shared student.

\section{Conclusion}
\label{sec:conclusion}

\method{} accelerates OPD by combining random rollout truncation with blockwise IPW. It preserves Vanilla OPD's gradient sum in expectation while controlling extreme token weights. We derive one-step update-error bounds for SGD, Adam, and Muon and analyze how token savings translate into training-time gains. Random truncation reduces both generated tokens and long-context computation, with additional throughput gains when long-context processing is costly or cache capacity limits concurrency. Across mathematics, code, and instruction following with 1.7B--8B students, \method{} generates $25$--$48\%$ fewer tokens and achieves $1.3$--$2.2\times$ training speedups while maintaining accuracy comparable to Vanilla OPD. These results show that OPD does not really need to complete every rollout to learn effectively.

\bibliography{references}

@article{McLeish_2011, title={A general method for debiasing a {Monte Carlo} estimator}, volume={17}, ISSN={0929-9629}, url={http://dx.doi.org/10.1515/mcma.2011.013}, DOI={10.1515/mcma.2011.013}, number={4}, journal={Monte Carlo Methods and Applications}, publisher={Walter de Gruyter GmbH}, author={McLeish, Don}, year={2011}, month=Jan }

@inproceedings{gu2024minillm,
      title={{MiniLLM}: Knowledge Distillation of Large Language Models},
      author={Yuxian Gu and Li Dong and Furu Wei and Minlie Huang},
      booktitle={The Twelfth International Conference on Learning Representations},
      year={2024},
      url={https://openreview.net/forum?id=5h0qf7IBZZ},
}

@inproceedings{agarwal2024onpolicydistillationlanguagemodels,
      title={On-Policy Distillation of Language Models: Learning from Self-Generated Mistakes},
      author={Rishabh Agarwal and Nino Vieillard and Yongchao Zhou and Piotr Stanczyk and Sabela {Ramos Garea} and Matthieu Geist and Olivier Bachem},
      booktitle={The Twelfth International Conference on Learning Representations},
      pages={21246--21263},
      year={2024},
      url={https://proceedings.iclr.cc/paper_files/paper/2024/hash/5be69a584901a26c521c2b51e40a4c20-Abstract-Conference.html},
}

@article{lu2025onpolicydistillation,
      author={Kevin Lu and {Thinking Machines Lab}},
      title={{On-Policy Distillation}},
      journal={Thinking Machines Lab: Connectionism},
      year={2025},
      doi={10.64434/tml.20251026},
      url={https://thinkingmachines.ai/blog/on-policy-distillation/},
}

@misc{ziheng2026moreearlystoppingrollout,
      title={Less is More: Early Stopping Rollout for On-Policy Distillation},
      author={Zhou, Ziheng and Jiaqi Li and Huacong Tang and Ying Nian Wu and Demetri Terzopoulos},
      year={2026},
      eprint={2605.27028},
      archivePrefix={arXiv},
      primaryClass={cs.LG},
      url={https://arxiv.org/abs/2605.27028},
}

@misc{sheng20261tokensenoughgradient,
      title={1\% of Tokens Can Be Enough: On Gradient Estimation in On-Policy Distillation},
      author={Huanxin Sheng and Zhiling Ye and Haonan Wang and Jian Wang and Jinjie Gu and Jian Kang},
      year={2026},
      eprint={2609.24432},
      archivePrefix={arXiv},
      primaryClass={cs.LG},
      url={https://arxiv.org/abs/2609.24432},
}

@misc{tallec2017unbiasingtruncatedbackpropagationtime,
      title={Unbiasing Truncated Backpropagation Through Time},
      author={Corentin Tallec and Yann Ollivier},
      year={2017},
      eprint={1705.08209},
      archivePrefix={arXiv},
      primaryClass={cs.NE},
      url={https://arxiv.org/abs/1705.08209},
}

@InProceedings{pmlr-v97-beatson19a,
  title = 	 {Efficient optimization of loops and limits with randomized telescoping sums},
  author =       {Beatson, Alex and Adams, Ryan P},
  booktitle = 	 {Proceedings of the 36th International Conference on Machine Learning},
  pages = 	 {534--543},
  year = 	 {2019},
  editor = 	 {Chaudhuri, Kamalika and Salakhutdinov, Ruslan},
  volume = 	 {97},
  series = 	 {Proceedings of Machine Learning Research},
  month = 	 {09--15 Jun},
  publisher =    {PMLR},
  url = 	 {https://proceedings.mlr.press/v97/beatson19a.html}
}

@book{boucheron2013concentration,
  author = {Boucheron, St{\'e}phane and Lugosi, G{\'a}bor and Massart, Pascal},
  title = {Concentration Inequalities: A Nonasymptotic Theory of Independence},
  publisher = {Oxford University Press},
  year = {2013},
  doi = {10.1093/acprof:oso/9780199535255.001.0001}
}

@inproceedings{kingma2015adam,
  author = {Kingma, Diederik P. and Ba, Jimmy},
  title = {{Adam}: A Method for Stochastic Optimization},
  booktitle = {International Conference on Learning Representations},
  year = {2015},
  url = {https://arxiv.org/abs/1412.6980}
}

@misc{jordan2024muon,
  author = {Keller Jordan and Yuchen Jin and Vlado Boza and Jiacheng You and Franz Cesista and Laker Newhouse and Jeremy Bernstein},
  title = {{Muon}: An optimizer for hidden layers in neural networks},
  year = {2024},
  url = {https://kellerjordan.github.io/posts/muon/},
  note = {Reference implementation, revision \href{https://github.com/KellerJordan/Muon/blob/64560829f70a216f8263ea8ae0fefeac8b451d36/muon.py}{\texttt{6456082}}}
}

@article{andersson2016operator,
  author = {Andersson, Fredrik and Carlsson, Marcus and Perfekt, Karl-Mikael},
  title = {Operator-{Lipschitz} estimates for the singular value functional calculus},
  journal = {Proceedings of the American Mathematical Society},
  volume = {144},
  number = {5},
  pages = {1867--1875},
  year = {2016},
  doi = {10.1090/proc/12843}
}

@inproceedings{hendrycks2021math,
  title = {Measuring Mathematical Problem Solving With the {MATH} Dataset},
  author = {Hendrycks, Dan and Burns, Collin and Kadavath, Saurav and Arora, Akul and Basart, Steven and Tang, Eric and Song, Dawn and Steinhardt, Jacob},
  booktitle = {Proceedings of the Neural Information Processing Systems Track on Datasets and Benchmarks},
  volume = {1},
  year = {2021},
  url = {https://arxiv.org/abs/2103.03874}
}

@inproceedings{lewkowycz2022minerva,
  title = {Solving Quantitative Reasoning Problems with Language Models},
  author = {Lewkowycz, Aitor and Andreassen, Anders and Dohan, David and Dyer, Ethan and Michalewski, Henryk and Ramasesh, Vinay and Slone, Ambrose and Anil, Cem and Schlag, Imanol and Gutman-Solo, Theo and Wu, Yuhuai and Neyshabur, Behnam and Gur-Ari, Guy and Misra, Vedant},
  booktitle = {Advances in Neural Information Processing Systems},
  volume = {35},
  pages = {3843--3857},
  year = {2022},
  doi = {10.52202/068431-0278},
  url = {https://arxiv.org/abs/2206.14858}
}

@article{qwen2025qwen3,
  title = {{Qwen3} Technical Report},
  author = {An Yang and Anfeng Li and Baosong Yang and Beichen Zhang and Binyuan Hui and Bo Zheng and Bowen Yu and Chang Gao and Chengen Huang and Chenxu Lv and Chujie Zheng and Dayiheng Liu and Fan Zhou and Fei Huang and Feng Hu and Hao Ge and Haoran Wei and Huan Lin and Jialong Tang and Jian Yang and Jianhong Tu and Jianwei Zhang and Jianxin Yang and Jiaxi Yang and Jing Zhou and Jingren Zhou and Junyang Lin and Kai Dang and Keqin Bao and Kexin Yang and Le Yu and Lianghao Deng and Mei Li and Mingfeng Xue and Mingze Li and Pei Zhang and Peng Wang and Qin Zhu and Rui Men and Ruize Gao and Shixuan Liu and Shuang Luo and Tianhao Li and Tianyi Tang and Wenbiao Yin and Xingzhang Ren and Xinyu Wang and Xinyu Zhang and Xuancheng Ren and Yang Fan and Yang Su and Yichang Zhang and Yinger Zhang and Yu Wan and Yuqiong Liu and Zekun Wang and Zeyu Cui and Zhenru Zhang and Zhipeng Zhou and Zihan Qiu},
  journal = {arXiv preprint arXiv:2505.09388},
  year = {2025},
  url = {https://arxiv.org/abs/2505.09388}
}

@article{wang2025nemotroncascade,
  title = {{Nemotron-Cascade}: Scaling Cascaded Reinforcement Learning for General-Purpose Reasoning Models},
  author = {Boxin Wang and Chankyu Lee and Nayeon Lee and Sheng-Chieh Lin and Wenliang Dai and Yang Chen and Yangyi Chen and Zhuolin Yang and Zihan Liu and Mohammad Shoeybi and Bryan Catanzaro and Wei Ping},
  journal = {arXiv preprint arXiv:2512.13607},
  year = {2025},
  url = {https://arxiv.org/abs/2512.13607}
}

@inproceedings{loshchilov2019adamw,
  title = {Decoupled Weight Decay Regularization},
  author = {Ilya Loshchilov and Frank Hutter},
  booktitle = {International Conference on Learning Representations (ICLR)},
  year = {2019},
  url = {https://arxiv.org/abs/1711.05101}
}

@article{zhou2023ifeval,
  title = {Instruction-Following Evaluation for Large Language Models},
  author = {Zhou, Jeffrey and Lu, Tianjian and Mishra, Swaroop and Brahma, Siddhartha and Basu, Sujoy and Luan, Yi and Zhou, Denny and Hou, Le},
  journal = {arXiv preprint arXiv:2311.07911},
  year = {2023},
  url = {https://arxiv.org/abs/2311.07911}
}

@inproceedings{liu2023evalplus,
  title = {Is Your Code Generated by {ChatGPT} Really Correct? {R}igorous Evaluation of Large Language Models for Code Generation},
  author = {Liu, Jiawei and Xia, Chunqiu Steven and Wang, Yuyao and Zhang, Lingming},
  booktitle = {Advances in Neural Information Processing Systems},
  year = {2023},
  url = {https://arxiv.org/abs/2305.01210}
}

@inproceedings{jain2024livecodebench,
  title = {{LiveCodeBench}: Holistic and Contamination Free Evaluation of Large Language Models for Code},
  author = {Jain, Naman and Han, King and Gu, Alex and Li, Wen-Ding and Yan, Fanjia and Zhang, Tianjun and Wang, Sida and Solar-Lezama, Armando and Sen, Koushik and Stoica, Ion},
  booktitle = {The Thirteenth International Conference on Learning Representations},
  pages = {58791--58831},
  year = {2025},
  url = {https://proceedings.iclr.cc/paper_files/paper/2025/hash/94074dd5a072d28ff75a76dabed43767-Abstract-Conference.html}
}

@misc{deepcoder2025dataset,
  title = {{DeepCoder-Preview-Dataset}},
  author = {{Agentica}},
  howpublished = {\url{https://huggingface.co/datasets/agentica-org/DeepCoder-Preview-Dataset}},
  year = {2025}
}

@misc{nvidia2025nemotronrlif,
  title = {{Nemotron-RL-instruction\_following}},
  author = {{NVIDIA}},
  howpublished = {\url{https://huggingface.co/datasets/nvidia/Nemotron-RL-instruction_following}},
  year = {2025}
}

@misc{xu2026tip,
      title={{TIP: Token Importance in On-Policy Distillation}},
      author={Yuanda Xu and Hejian Sang and Zhengze Zhou and Ran He and Zhipeng Wang and Alborz Geramifard},
      year={2026},
      eprint={2604.14084},
      archivePrefix={arXiv},
      primaryClass={cs.LG},
      url={https://arxiv.org/abs/2604.14084},
}

@misc{wang2026teachability,
      title={{Not All Disagreement Is Learnable: Token Teachability in On-Policy Distillation}},
      author={Yuanyi Wang and Su Lu and Yanggan Gu and Pengkai Wang and Yifan Yang and Zhaoyi Yan and Congkai Xie and Jianmin Wu and Hongxia Yang},
      year={2026},
      eprint={2605.26844},
      archivePrefix={arXiv},
      primaryClass={cs.LG},
      url={https://arxiv.org/abs/2605.26844},
}

@misc{ma2026mopd,
      title={{MOPD: Multi-Teacher On-Policy Distillation for Capability Integration in LLM Post-Training}},
      author={Wenhan Ma and Jianyu Wei and Liang Zhao and Hailin Zhang and Bangjun Xiao and Lei Li and Qibin Yang and Bofei Gao and Yudong Wang and Rang Li and Jinhao Dong and Zhifang Sui and Fuli Luo},
      year={2026},
      eprint={2606.30406},
      archivePrefix={arXiv},
      primaryClass={cs.CL},
      url={https://arxiv.org/abs/2606.30406},
}

@misc{zhao2026opsd,
      title={{Self-Distilled Reasoner: On-Policy Self-Distillation for Large Language Models}},
      author={Siyan Zhao and Zhihui Xie and Mengchen Liu and Jing Huang and Guan Pang and Feiyu Chen and Aditya Grover},
      year={2026},
      eprint={2601.18734},
      archivePrefix={arXiv},
      primaryClass={cs.LG},
      url={https://arxiv.org/abs/2601.18734},
}

@misc{wu2026lightningopd,
      title={{Lightning OPD: Efficient Post-Training for Large Reasoning Models with Offline On-Policy Distillation}},
      author={Yecheng Wu and Song Han and Han Cai},
      year={2026},
      eprint={2604.13010},
      archivePrefix={arXiv},
      primaryClass={cs.LG},
      url={https://arxiv.org/abs/2604.13010},
}

@inproceedings{kim-rush-2016-sequence,
    title = "Sequence-Level Knowledge Distillation",
    author = "Kim, Yoon  and
      Rush, Alexander M.",
    editor = "Su, Jian  and
      Duh, Kevin  and
      Carreras, Xavier",
    booktitle = "Proceedings of the 2016 Conference on Empirical Methods in Natural Language Processing",
    month = nov,
    year = "2016",
    address = "Austin, Texas",
    publisher = "Association for Computational Linguistics",
    url = "https://aclanthology.org/D16-1139/",
    doi = "10.18653/v1/D16-1139",
    pages = "1317--1327"
}

@misc{huang2025selectkd,
      title={{SelecTKD: Selective Token-Weighted Knowledge Distillation for LLMs}},
      author={Haiduo Huang and Jiangcheng Song and Yadong Zhang and Pengju Ren},
      year={2025},
      eprint={2510.24021},
      archivePrefix={arXiv},
      primaryClass={cs.CL},
      url={https://arxiv.org/abs/2510.24021},
}

@misc{yang2026pruneopdefficientreliableonpolicy,
      title={{Prune-OPD: Efficient and Reliable On-Policy Distillation for Long-Horizon Reasoning}},
      author={Zhicheng Yang and Zhijiang Guo and Yifan Song and Minrui Xu and Yongxin Wang and Yiwei Wang and Xiaodan Liang and Jing Tang},
      year={2026},
      eprint={2605.07804},
      archivePrefix={arXiv},
      primaryClass={cs.LG},
      url={https://arxiv.org/abs/2605.07804},
}

@misc{xin2026escapingklagreementtrap,
      title={{Escaping the KL Agreement Trap in On-Policy Distillation}},
      author={Haoran Xin and Anhao Zhao and Ying Sun and Jin Li and Xiaoyu Shen and Hui Xiong},
      year={2026},
      eprint={2606.09471},
      archivePrefix={arXiv},
      primaryClass={cs.LG},
      url={https://arxiv.org/abs/2606.09471},
}

@misc{zhang2026rolloutsnecessaryonpolicydistillation,
      title={{Are Full Rollouts Necessary for On-Policy Distillation?}},
      author={Yaocheng Zhang and Jiajun Chai and Yuqian Fu and Songjun Tu and Xiaohan Wang and Wei Lin and Guojun Yin and Qichao Zhang and Yuanheng Zhu and Dongbin Zhao},
      year={2026},
      eprint={2605.31490},
      archivePrefix={arXiv},
      primaryClass={cs.CL},
      url={https://arxiv.org/abs/2605.31490},
}

@inproceedings{zhang-etal-2026-fast,
    title = {{Fast and Effective On-Policy Distillation from Reasoning Prefixes}},
    author = "Zhang, Dongxu  and
      Yang, Zhichao  and
      Janghorbani, Sepehr  and
      Han, Jun  and
      Ressler, II, Andrew  and
      Qian, Qian  and
      Lyng, Gregory D  and
      Batra, Sanjit Singh  and
      Tillman, Robert E.",
    editor = "Liakata, Maria  and
      Moreira, Viviane P.  and
      Zhang, Jiajun  and
      Jurgens, David",
    booktitle = "Findings of the {A}ssociation for {C}omputational {L}inguistics: {ACL} 2026",
    month = jul,
    year = "2026",
    address = "San Diego, California, United States",
    publisher = "Association for Computational Linguistics",
    url = "https://aclanthology.org/2026.findings-acl.1276/",
    doi = "10.18653/v1/2026.findings-acl.1276",
    pages = "25553--25569",
    ISBN = "979-8-89176-395-1"
}

@misc{zhao2026prefixguidedonpolicydistillationmining,
      title={{Prefix-Guided On-Policy Distillation: Mining Golden Trajectories from Rollouts}},
      author={Qingfei Zhao and Huan Song and Shuyu Tian and Jiawei Shao and Xuelong Li},
      year={2026},
      eprint={2606.21994},
      archivePrefix={arXiv},
      primaryClass={cs.LG},
      url={https://arxiv.org/abs/2606.21994},
}
\bibliographystyle{icml2026}

\clearpage
\appendix
\newtheorem{sopdtheorem}{Theorem}[section]
\newtheorem{sopdproposition}[sopdtheorem]{Proposition}
\newtheorem{sopdlemma}[sopdtheorem]{Lemma}
\newtheorem{sopdcorollary}[sopdtheorem]{Corollary}

\section{Gradient Preservation and Normalization}
\label{app:gradient-preservation}

We prove that random truncation preserves the expected gradient sum and bound its gradient error relative to Vanilla OPD, including the effect of lagged normalization. We also analyze position weights, the relationship between token savings and variance, and the effect of batch size. The proofs specialize survival-weighted estimation \citep[Section~2]{McLeish_2011} and randomized telescoping \citep[Section~2.1 and supplementary Proposition~2.1]{pmlr-v97-beatson19a} to finite, naturally terminating rollouts.

\subsection{Conditional preservation and estimation error}
\label{app:preservation}

Fix a training step $k$, batch size $B$, max rollout length $L$, and
block width $K$. Its history $\mathcal H_k$ contains the current student parameters $\theta_k$ and the positive denominator $C_k$ chosen before the current cutoffs. For each prompt, consider the hypothetical complete rollout generated until its first natural end-of-sequence token or the max length $L$, with length $1\leq T_i\leq L$. The natural end-of-sequence token is included in this length and receives its ordinary local supervision. Let $\mathcal Y_k$ contain these complete rollouts and any randomness used for their teacher feedback. Throughout the fixed-step analysis, $\E_S$ and $\Prob_S$ denote expectation and probability conditional on $(\mathcal H_k,\mathcal Y_k)$; only the independent anchors remain random. We suppress $k$ until the lagged-denominator analysis.

The following conditions formalize the setting in Section~\ref{sec:method}.
\begin{enumerate}
\item Changing the remaining generation budget does not affect the next-token distribution when the prompt and previously generated tokens are fixed.
\item We use Vanilla OPD's differentiation rule for sampled tokens and teacher targets. The differentiable local loss $\ell_{i,t}(\theta)$ depends only on the prompt, tokens up to position $t$, and local feedback. Random truncation leaves this loss unchanged at retained positions. Full and truncated rollouts share any randomness used for local feedback.
\item Conditional on history, the anchors are mutually independent, uniform on $\{0,\ldots,L-1\}$, and independent of the complete token and feedback randomness. The method uses the exact weights $1/p_t$ and treats them and $C$ as constants in differentiation.
\end{enumerate}
We compare each observed rollout with its corresponding complete rollout. The local gradients $h_{i,t}=\nabla_\theta\ell_{i,t}(\theta_k)\in\mathbb R^d$ are finite. Norms below are Euclidean.

For the estimators in Eqs.~\eqref{eq:ipw}, \eqref{eq:loss}, and~\eqref{eq:lagged}, write
\[
G=\sum_{i=1}^B\sum_{t=1}^{T_i}h_{i,t},\qquad N=\sum_{i=1}^BT_i,
\qquad g^{\mathrm V}=G/N,\qquad \widehat g=\widehat G/C.
\]
Use the stopping rule in Eq.~\eqref{eq:stop} and let $I_{i,t}=\ind\{S_i\geq t\}$ indicate whether position $t$ is retained. Then $\E_S I_{i,t}=p_t$, with $p_t$ given in Eq.~\eqref{eq:survival}. To express the variance, define
\begin{equation}
\alpha_b=\frac{LK}{(L-bK)(L-(b-1)K)}\quad(1\leq bK<L),
\qquad
v(T)=\sum_{b=1}^{\lfloor(T-1)/K\rfloor}\alpha_b(T-bK)^2.
\label{eq:app-v}
\end{equation}
Empty sums equal zero, so $v(T)=0$ when $T\leq K$.

\begin{sopdtheorem}[Gradient preservation and error]
\label{thm:app-main}
Under the stated conditions, full and truncated generation can be coupled so that every observed local gradient equals its full-generation contribution. Consequently,
\begin{equation}
\E_S\widehat G=G,\qquad \E_S\widehat N=N,\qquad
\E_S\widehat g=\frac NCg^{\mathrm V}.
\label{eq:app-unbiased}
\end{equation}

For $V:=\E_S\|\widehat G-G\|^2$ and every $C>0$, the implemented estimator satisfies

\begin{equation}
\boxed{\E_S\|\widehat g-g^{\mathrm V}\|^2
=\frac{V}{C^2}
+\left(\frac{N-C}{C}\right)^2\|g^{\mathrm V}\|^2.}
\label{eq:app-mse}
\end{equation}

If $\|h_{i,t}\|\leq H$ for every token in the batch, then

\begin{equation}
\E_S\|\widehat g-g^{\mathrm V}\|^2
\leq\frac{H^2\sum_i v(T_i)}{C^2}
+\left(\frac{N-C}{C}\right)^2\|g^{\mathrm V}\|^2.
\label{eq:app-profile-mse}
\end{equation}

The variance bound $V\leq H^2\sum_i v(T_i)$ is attained when, within each rollout, all local gradients equal a fixed vector of norm $H$.
\end{sopdtheorem}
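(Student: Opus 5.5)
The plan is to reduce everything to the indicator representation $\widehat G=\sum_i\sum_{t\le T_i}(I_{i,t}/p_t)h_{i,t}$ and then compute first and second moments of the indicators under $\E_S$.

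\textbf{Coupling and unbiasedness.} We first construct the coupling. For each prompt, draw the complete rollout and its feedback randomness $\mathcal Y_k$, and independently draw the anchor $A_i$ (condition 3). The truncated rollout is defined as the prefix of length $M_i=\min(T_i,S_i)$. By condition 1, this prefix has the same law as what the student would generate under budget $S_i$, since the next-token distribution ignores the remaining budget. By condition 2, every retained position has the same local loss as in the full rollout, and hence the same gradient $h_{i,t}$. Therefore $\widehat G$ and $\widehat N$ are exactly the indicator sums above, with $I_{i,t}=\ind\{S_i\ge t\}$. Conditional on $(\mathcal H_k,\mathcal Y_k)$, only $S_i$ is random and $\E_S I_{i,t}=p_t$. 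Linearity then gives $\E_S\widehat G=G$ and $\E_S\widehat N=N$. Because $C$ is $\mathcal H_k$-measurable, it follows that $\E_S\widehat g=G/C=(N/C)g^{\mathrm V}$.

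\textbf{MSE identity.} We write $\widehat g-g^{\mathrm V}=(\widehat G-G)/C+(G/C-G/N)$ and note that $G/C-G/N=((N-C)/C)\,g^{\mathrm V}$. The second term is deterministic under $\E_S$, and the first has mean zero. Expanding the square, the cross term vanishes, which yields the boxed identity for every $C>0$.

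\textbf{Variance computation.} Anchors are independent across rollouts, so the cross-rollout covariances vanish and $V=\sum_i V_i$. Within a rollout, for $s\le t$ we have $I_sI_t=I_t$, because $S\ge t$ implies $S\ge s$. Hence $\mathrm{Cov}(I_s/p_s,I_t/p_t)=p_t/(p_sp_t)-1=1/p_s-1$, that is, $1/p_{\min(s,t)}-1$. This gives
\[
V_i=\sum_{s,t\le T_i}\Bigl(\frac1{p_{\min(s,t)}}-1\Bigr)\langle h_{i,s},h_{i,t}\rangle .
\]
All coefficients are nonnegative, since $p\le1$. Applying Cauchy--Schwarz, $\langle h_s,h_t\rangle\le H^2$, so $V_i\le H^2\sum_{s,t}(1/p_{\min(s,t)}-1)$. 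Equality holds when all $h_{i,t}$ equal a common vector of norm $H$, which proves attainment.

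\textbf{Matching the $\alpha_b$ form.} It remains to show that $\sum_{s,t\le T}(1/p_{\min(s,t)}-1)=v(T)$ in the $\alpha_b$ form of Eq.~\eqref{eq:app-v}. Counting pairs with a given minimum recovers the profile $\sum_t(1/p_t-1)(2(T-t)+1)$ of Eq.~\eqref{eq:variance-profile}. For the $\alpha_b$ form, telescope $1/p_t-1$ over block boundaries. We have $1/p_t$ constant on each block and $1/p_{bK+1}-1/p_{(b-1)K+1}=L/(L-bK)-L/(L-(b-1)K)=\alpha_b$. Hence $1/p_t-1=\sum_{b:\,bK<t}\alpha_b$. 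Swapping sums, $\alpha_b$ multiplies the number of pairs $(s,t)$ with $\min(s,t)>bK$, which is $(T-bK)^2$. Only $b\le\lfloor(T-1)/K\rfloor$ contribute, and this restriction also guarantees $bK<L$, so $\alpha_b$ is finite.

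\textbf{Main obstacle.} The delicate step is the coupling. One must argue carefully that conditioning on the complete rollout, rather than on the truncated generation, is legitimate. That is, the law of (observed prefix, feedback) under truncation must equal the law of the prefix of the complete rollout, with the anchor independent of it. The algebra afterwards is routine. I would handle this by defining both processes on one probability space with shared per-position sampling randomness, and invoking conditions 1--3 position by position.
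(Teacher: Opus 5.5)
Your proposal is correct and follows the paper's proof essentially step by step. It uses the same shared-randomness coupling, the indicator representation with $\E_S I_{i,t}=p_t$, centering at $G/C$ for the mean-square identity, vanishing cross-rollout terms from anchor independence, the within-rollout covariance $1/p_{\min(s,t)}-1$, and the $\alpha_b$ telescoping.

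The only difference is the order of bounding. You bound each $\langle h_{i,s},h_{i,t}\rangle\le H^2$ against nonnegative coefficients and then evaluate the scalar sum as $v(T_i)$. The paper first rewrites the per-rollout variance exactly as $\sum_b\alpha_b\|\sum_{t>bK}h_{i,t}\|^2$ and then bounds each tail norm by $H(T_i-bK)$. Both give the same bound and the same attainment case; the paper's version additionally exhibits the variance as a positive combination of squared gradient-tail norms.
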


\begin{proof}
Couple full and truncated generation from the same prompt using the same
randomness for token sampling and local feedback. While the truncated run continues, identical tokens generated so far imply the same next-token distribution by budget independence, and the shared draw produces the same next token. By induction, the two runs generate identical tokens through position $M_i$, including a natural end-of-sequence token when reached. At each retained position, removing later tokens leaves the differentiable local loss function unchanged, so Vanilla OPD's differentiation rule gives the same gradient $h_{i,t}$ in both runs. Conditional on $(\mathcal H_k,\mathcal Y_k)$, these gradients are fixed; only their inclusion depends on the random cutoffs, as in the conditioning argument of \citet[Proposition~1 and Section~8]{tallec2017unbiasingtruncatedbackpropagationtime}.
For $t\leq T_i$, Eq.~\eqref{eq:survival} gives $\E_S I_{i,t}=p_t$, including when the final block is shorter than $K$. Therefore
\[
\widehat G=\sum_{i=1}^B\sum_{t=1}^{T_i}\frac{I_{i,t}}{p_t}h_{i,t},
\qquad
\E_S\widehat G=\sum_{i=1}^B\sum_{t=1}^{T_i}\frac{\E_S I_{i,t}}{p_t}h_{i,t}=G.
\]
Taking scalar contributions equal to $1$ gives $\E_S\widehat N=N$, and dividing by the fixed $C$ gives $\E_S\widehat g=G/C=(N/C)g^{\mathrm V}$, proving Eq.~\eqref{eq:app-unbiased}. Natural termination fixes which positions exist, while their inclusion probabilities depend only on $t,L,K$; the weights therefore do not require the complete lengths.

To quantify \method{}'s gradient error relative to Vanilla, center $\widehat g$ at its conditional mean $G/C$:
\begin{equation}
\widehat g-g^{\mathrm V}
=\frac{\widehat G-G}{C}+\left(\frac GC-\frac GN\right)
=\frac{\widehat G-G}{C}+\frac{N-C}{C}g^{\mathrm V}.
\label{eq:app-normalized-identity}
\end{equation}
Here $N,C,g^{\mathrm V}$ are fixed under $\E_S$ and $\E_S(\widehat G-G)=0$. Expanding the squared norm therefore gives
\begin{align*}
\E_S\|\widehat g-g^{\mathrm V}\|^2
&=\frac{\E_S\|\widehat G-G\|^2}{C^2}
+\left(\frac{N-C}{C}\right)^2\|g^{\mathrm V}\|^2\\
&\quad+\frac{2(N-C)}{C^2}\left\langle\E_S(\widehat G-G),g^{\mathrm V}\right\rangle\\
&=\frac{V}{C^2}
+\left(\frac{N-C}{C}\right)^2\|g^{\mathrm V}\|^2,
\end{align*}
proving Eq.~\eqref{eq:app-mse}. It remains to evaluate and bound $V$.

Write the centered error of rollout $i$ as
\[
e_i=\sum_{t=1}^{T_i}\left(\frac{I_{i,t}}{p_t}-1\right)h_{i,t},
\qquad \widehat G-G=\sum_{i=1}^B e_i,
\qquad \E_S e_i=0.
\]
Each $e_i$ depends only on its own anchor under the fixed conditioning. Thus, for $i\ne j$,
$\E_S\langle e_i,e_j\rangle=\langle\E_S e_i,\E_S e_j\rangle=0$ by anchor independence.
This reduces the batch variance to $V=\sum_i\E_S\|e_i\|^2$.

The indicators within a rollout share a cutoff: $I_{i,s}I_{i,t}=\ind\{S_i\geq\max(s,t)\}$. Hence
\begin{equation}
\E_S\left[\left(\frac{I_{i,s}}{p_s}-1\right)
\left(\frac{I_{i,t}}{p_t}-1\right)\right]
=\frac{p_{\max(s,t)}}{p_sp_t}-1
=\frac1{p_{\min(s,t)}}-1.
\label{eq:app-covariance}
\end{equation}
To express $\E_S\|e_i\|^2$ in terms of gradient tails, group token pairs by the block boundaries they both cross. The reciprocal survival probability changes at $bK+1$ by
\[
\frac1{p_{bK+1}}-\frac1{p_{bK}}
=\frac{L}{L-bK}-\frac{L}{L-(b-1)K}=\alpha_b,
\qquad 1\leq bK<L.
\]
Since $p_t^{-1}=1$ on the first block and is constant within each block, telescoping gives
\[
\frac1{p_{\min(s,t)}}-1
=\sum_{\substack{b\geq1\\bK<\min(s,t)}}\alpha_b.
\]
The coefficient $\alpha_b$ contributes exactly to pairs with both positions beyond $bK$. Expanding $\E_S\|e_i\|^2$ and regrouping these pairs yields
\begin{align}
\E_S\|e_i\|^2
&=\sum_{s,t=1}^{T_i}\left(\frac1{p_{\min(s,t)}}-1\right)
\langle h_{i,s},h_{i,t}\rangle\nonumber\\
&=\sum_{b=1}^{\lfloor(T_i-1)/K\rfloor}\alpha_b
\sum_{s,t=bK+1}^{T_i}\langle h_{i,s},h_{i,t}\rangle\nonumber\\
&=\sum_{b=1}^{\lfloor(T_i-1)/K\rfloor}\alpha_b
\left\|\sum_{t=bK+1}^{T_i}h_{i,t}\right\|^2.
\label{eq:app-tail-identity}
\end{align}
Summing over rollouts gives the exact batch variance. For the weighted count, each token contributes the scalar $1$, so its tail sum is $T_i-bK$ and the same identity gives
\begin{align}
V
&=\sum_{i=1}^B\E_S\|e_i\|^2
=\sum_{i=1}^B\sum_{b=1}^{\lfloor(T_i-1)/K\rfloor}
\alpha_b\left\|\sum_{t=bK+1}^{T_i}h_{i,t}\right\|^2,\nonumber\\
\operatorname{Var}_S(\widehat N)
&=\sum_{i=1}^B\sum_{b=1}^{\lfloor(T_i-1)/K\rfloor}
\alpha_b(T_i-bK)^2
=\sum_{i=1}^Bv(T_i).
\label{eq:app-exact-variance}
\end{align}

Under $\|h_{i,t}\|\leq H$, each tail satisfies
$\|\sum_{t=bK+1}^{T_i}h_{i,t}\|\leq\sum_{t=bK+1}^{T_i}\|h_{i,t}\|\leq H(T_i-bK)$.
Since $\alpha_b>0$, Eq.~\eqref{eq:app-exact-variance} gives
\begin{equation}
V\leq H^2\sum_{i=1}^B
\sum_{b=1}^{\lfloor(T_i-1)/K\rfloor}\alpha_b(T_i-bK)^2
=H^2\sum_{i=1}^Bv(T_i).
\label{eq:app-length-envelope}
\end{equation}
Substitution into the normalized-error decomposition yields
\[
\E_S\|\widehat g-g^{\mathrm V}\|^2
\leq\frac{H^2\sum_i v(T_i)}{C^2}
+\left(\frac{N-C}{C}\right)^2\|g^{\mathrm V}\|^2,
\]
which is Eq.~\eqref{eq:app-profile-mse}. When all gradients within each rollout equal a fixed vector of norm $H$, every tail has norm $H(T_i-bK)$, attaining the variance bound.
\end{proof}

\subsection{Position weights and token coverage}
\label{app:position-coverage}

Blockwise inverse-probability weighting compensates for how often each position is retained. A token at position $t$ is retained with probability $p_t$, so assigning it a deterministic weight $c_t$ gives an expected gradient contribution of $p_tc_th_{i,t}$. Matching Vanilla OPD for arbitrary token gradients requires $p_tc_t=1$. This condition is necessary by considering a single nonzero token gradient and sufficient by linearity. Thus $c_t=1/p_t$ is the only deterministic position weight that preserves every token's expected contribution. Since $p_t>0$ in Eq.~\eqref{eq:survival}, all these weights are finite.

We next calculate how often a position receives supervision in a batch. Among $n\geq1$ fixed complete rollouts that reach position $t$, independent random truncation retains that position with probability $p_t$ in each rollout. The number retained is therefore $Q_t\sim\operatorname{Binomial}(n,p_t)$, giving
\begin{equation}
\Prob_S(Q_t\geq1)=1-(1-p_t)^n.
\label{eq:app-position-coverage}
\end{equation}
To retain position $t$ at least once with probability at least $1-\delta$, solve $(1-p_t)^n\leq\delta$. For $0<p_t<1$ and $\delta\in(0,1)$, this requires
\begin{equation}
\Prob_S(Q_t\geq1)\geq1-\delta
\quad\Longleftrightarrow\quad
n\geq\left\lceil\frac{\log(1/\delta)}{-\log(1-p_t)}\right\rceil.
\label{eq:app-exposure-budget}
\end{equation}
Here $n$ counts rollouts that contain the position under full generation. Positions in the first block have $p_t=1$ and are retained in all such rollouts.

\subsection{Truncation variance and block width}
\label{app:variance-block-width}

Random truncation introduces variation into the weighted gradient sum. For a fixed rollout of length $T$, $v(T)$ from Eq.~\eqref{eq:app-v} is the variance of its weighted token-count estimate, and $D(T)$ is the expected token saving derived in Appendix~\ref{app:token-savings}. For every $1\leq T\leq L$, these quantities satisfy
\begin{equation}
v(T)\leq2TD(T),\qquad
V\leq2H^2\sum_iT_iD(T_i)
\quad\text{if }\|h_{i,t}\|\leq H.
\label{eq:app-variance-consequences}
\end{equation}
The bound connects expected token savings to gradient-sum variance through the rollout lengths.

To derive the bound, note that all tokens in a rollout share one stopping position, so their inclusion events are correlated. Applying the covariance formula in Eq.~\eqref{eq:app-covariance} to the weighted token count and grouping pairs by their smaller position gives
\[
v(T)=\sum_{t=1}^T\frac{1-p_t}{p_t}\{2(T-t)+1\}.
\]
Here, $2(T-t)+1$ counts the ordered pairs whose smaller position is $t$. Since $K\lfloor(t-1)/K\rfloor\leq t-1$ and $T\leq L$, the inclusion probability satisfies
\[
p_t\geq\frac{L-t+1}{L}\geq\frac{T-t+1}{T},
\qquad
\frac{2(T-t)+1}{p_t}
\leq T\frac{2(T-t)+1}{T-t+1}\leq2T.
\]
Multiplying by $1-p_t$ and summing proves $v(T)\leq2T\sum_t(1-p_t)=2TD(T)$. Substituting this result into the gradient-variance bound in Eq.~\eqref{eq:app-length-envelope} gives the second inequality in Eq.~\eqref{eq:app-variance-consequences}.

For rollouts reaching the max length, the effect of block width can be computed explicitly. Let $L=mK$, so there are $m$ equal blocks, and write $\mathsf H_m=\sum_{j=1}^m1/j$. The variance of the weighted count relative to $L^2$ is
\begin{equation}
\frac{v(L)}{L^2}=1-\frac{\mathsf H_m}{m}.
\label{eq:app-block-tradeoff}
\end{equation}
Setting $L=mK$ in Eq.~\eqref{eq:app-v} gives $\alpha_b(L-bK)^2/L^2=(m-b)/[m(m-b+1)]$, hence
\[
\frac{v(L)}{L^2}
=\frac1m\sum_{b=1}^{m-1}\frac{m-b}{m-b+1}
=\frac1m\sum_{j=2}^m\left(1-\frac1j\right)
=1-\frac{\mathsf H_m}{m}.
\]
For these max-length rollouts, larger blocks reduce the variance of the weighted count at the cost of more generation (Appendix~\ref{app:token-savings}). With one block, this variance is zero; as the number of blocks grows, $v(L)/L^2$ approaches $1$. The corresponding relative standard deviation is $\sqrt{1-\mathsf H_m/m}$, which is $0.99$, $0.98$, $0.93$, and $0.81$ for the block widths $K=32$, $128$, $512$, and $2048$ of Section~\ref{sec:block-width}.

\subsection{Error under lagged normalization}
\label{app:lagged-normalization}

The normalization contribution in Theorem~\ref{thm:app-main} is $\rho_k g_k^{\mathrm V}$, where $\rho_k=(N_k-C_k)/C_k$. For \method{}'s lagged denominator, $C_{W+1}=N_W$ after $W\geq1$ Vanilla OPD updates, and $C_k=\widehat N_{k-1}$ for $k>W+1$. We bound $\rho_k$ by comparing $N_k/B$ and $C_k/B$ with the conditional mean rollout lengths of their respective batches. This separates fluctuations due to rollout sampling and cutoffs from changes in the mean rollout length between updates.

For $j\in\{k-1,k\}$, let $\mathcal H_j$ be the history before sampling batch $j$, including its current student parameters. Assume additionally that, conditional on $\mathcal H_j$, the complete rollouts within batch $j$ are independent and identically distributed. The anchors have the independence specified in {condition~3 of Appendix~\ref{app:preservation}}. The current count $N_k$ sums complete lengths $T_{k,i}\in[1,L]$. To express the lagged count as a sum of rollout-level quantities, define the weighted rollout count and its upper bound by
\begin{equation}
\widehat T_{j,i}=\sum_{t=1}^{T_{j,i}}\frac{I_{j,i,t}}{p_t},
\qquad U_{L,K}=\sum_{t=1}^L\frac1{p_t}.
\label{eq:app-count-range}
\end{equation}
For a truncated batch, $\widehat N_j/B=B^{-1}\sum_i\widehat T_{j,i}$ is an average of independent and identically distributed counts conditional on $\mathcal H_j$. The first token is retained with weight one, giving $\widehat T_{j,i}\geq1$; the bounds $I_{j,i,t}\leq1$ and $T_{j,i}\leq L$ give $\widehat T_{j,i}\leq U_{L,K}$. Summing the lower bounds gives $C_k\geq B$, also for the complete lengths used at initialization.

The concentration of these averages depends on the means and variances of their rollout-level counts. For complete lengths, write
\begin{equation}
\mu_j=\E[T_{j,i}\mid\mathcal H_j],\qquad
\sigma_j^2=\operatorname{Var}(T_{j,i}\mid\mathcal H_j).
\label{eq:app-length-moments}
\end{equation}
Conditional on $(\mathcal H_j,T_{j,i})$, the weighted count depends only on its independent anchor. Equations~\eqref{eq:app-unbiased} and~\eqref{eq:app-exact-variance}, applied to one rollout, therefore give conditional mean $T_{j,i}$ and variance $v(T_{j,i})$. Averaging over the rollout length, the laws of total expectation and total variance give
\begin{align}
\E[\widehat T_{j,i}\mid\mathcal H_j]
&=\E[T_{j,i}\mid\mathcal H_j]=\mu_j,\nonumber\\
\operatorname{Var}(\widehat T_{j,i}\mid\mathcal H_j)
&=\sigma_j^2+\E[v(T_{j,i})\mid\mathcal H_j]=:\tau_j^2.
\label{eq:app-total-count-variance}
\end{align}
Thus weighting preserves the conditional mean count and adds the expected cutoff variance to the rollout-length variance. The corresponding conditional centers are $\mu_k$ for $N_k/B$ and $\mu_{k-1}$ for $C_k/B$. These centers may differ under adaptively chosen policies; write $\Delta_{\mu,k}=|\mu_k-\mu_{k-1}|$ for this mean-length drift.

The centered counts $N_k-B\mu_k$ and $C_k-B\mu_{k-1}$ are sums of independent centered contributions conditional on their respective histories. The following form of Bernstein's inequality controls their deviations using the variance and range width of each summand.

\begin{sopdlemma}[Bernstein concentration]
\label{lem:app-concentration}
Let $F_1,\ldots,F_B$ be independent real random variables, each supported on an interval of length at most $u\geq0$, and let $s^2=B^{-1}\sum_i\operatorname{Var}(F_i)$. For every $x>0$,
\begin{equation}
\Prob\!\left(\left|\frac1B\sum_i(F_i-\E F_i)\right|
>\sqrt{\frac{2s^2x}{B}}+\frac{ux}{3B}\right)\leq2e^{-x}.
\label{eq:app-scalar-concentration}
\end{equation}
The statement also holds conditional on a history under which its assumptions hold.
\end{sopdlemma}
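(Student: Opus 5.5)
This is the classical Bernstein inequality for bounded independent summands, so the proof reduces the stated form to the standard two-sided version. We first center: set $Z_i=F_i-\E F_i$. Each $F_i$ lies in an interval $[a_i,a_i+u]$, and $\E F_i$ lies in the same interval. Hence $|Z_i|\leq u$, and $\sum_i\operatorname{Var}(Z_i)=Bs^2$. The trivial case $u=0$ forces every $Z_i=0$ almost surely, and the bound holds. If $s^2=0$, each $Z_i=0$ almost surely as well, so we may assume $u>0$ and $s^2>0$.

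The core step is the one-sided tail bound via the Cramér--Chernoff method, as in \citet{boucheron2013concentration}. For $\lambda\in(0,3/u)$ and $|Z_i|\leq u$, we use the standard moment-generating-function estimate
\[
\log\E e^{\lambda Z_i}\leq\frac{\lambda^2\operatorname{Var}(Z_i)}{2(1-\lambda u/3)}.
\]
This follows from $\E Z_i^k\leq u^{k-2}\operatorname{Var}(Z_i)$ together with $k!\geq2\cdot3^{k-2}$. By independence, the log-MGF of $\sum_iZ_i$ is at most $\lambda^2Bs^2/\{2(1-\lambda u/3)\}$, which is the sub-gamma condition with variance factor $Bs^2$ and scale $u/3$. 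The standard sub-gamma inversion then gives
\[
\Prob\Bigl(\sum_iZ_i>\sqrt{2Bs^2x}+ux/3\Bigr)\leq e^{-x}.
\]
This inversion is obtained by choosing the optimal $\lambda$; equivalently, one can use the Legendre-transform lower bound $h^*(t)\geq t^2/\{2(v+ct/3)\}$ and solve the resulting quadratic in $t$. Dividing the event by $B$ gives the threshold $\sqrt{2s^2x/B}+ux/(3B)$.

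Applying the same bound to $-Z_i$, which satisfies the same hypotheses, and taking a union bound gives the two-sided statement with probability at most $2e^{-x}$. For the conditional version, we repeat the argument with all expectations and probabilities conditional on the history. The MGF bound and Markov's inequality hold conditionally whenever, given the history, the $F_i$ are independent, bounded in an interval of length $u$, and have the stated conditional variances. In that case $s^2$ and $u$ may depend on the history.

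The only delicate step is the sub-gamma inversion, that is, checking that the threshold $\sqrt{2vx}+cx$ with $c=u/3$ is indeed achieved. If a self-contained proof is preferred to citing \citet[Theorem~2.10 and Corollary~2.11]{boucheron2013concentration}, we verify it by substituting $t=\sqrt{2vx}+cx$ into $t^2/\{2(v+ct)\}\geq x$. This inequality expands to $2vx+2cx\sqrt{2vx}+c^2x^2\geq2vx+2cx\sqrt{2vx}+2c^2x^2$, which fails by the term $c^2x^2$. So the weaker exponent bound is not enough, and we instead use the sharper sub-gamma form $\psi^*(t)=(v/c^2)\,h_1(ct/v)$ with $h_1(y)=1+y-\sqrt{1+2y}$. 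For this form the inversion is exact: $\psi^{*-1}(x)=\sqrt{2vx}+cx$.
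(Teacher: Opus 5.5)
Your proof is correct and follows the paper's route. The paper verifies the moment condition of the moment form of Bernstein's inequality (Boucheron et al., Theorem~2.10) with scale $c=u/3$, using exactly your chain $\E|F_i^\circ|^m\leq u^{m-2}\E(F_i^\circ)^2$ together with $m!\geq2\cdot3^{m-2}$, and then takes a union bound over both signs. You instead unpack that theorem's proof through the log-MGF bound and the exact sub-gamma inversion $\psi^{*-1}(x)=\sqrt{2vx}+cx$.

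One thing to fix: delete the ``equivalently'' aside in your second paragraph and use the sharp sub-gamma form throughout. As your own last paragraph shows, the weaker exponent $t^2/\{2(v+ct)\}$ with $c=u/3$ yields only the larger threshold $cx+\sqrt{c^2x^2+2vx}$.
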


\begin{proof}
Write $F_i^\circ=F_i-\E F_i$, so $\sum_i\E(F_i^\circ)^2=Bs^2$. If $s^2=0$, every centered summand is zero almost surely and the result follows. Otherwise $u>0$. The moment form of Bernstein's inequality \citep[Theorem~2.10]{boucheron2013concentration} gives the one-sided threshold $\sqrt{2Bs^2x}+cx$ with failure probability $e^{-x}$ whenever, for some scale $c>0$,
\[
\sum_{i=1}^B\E|F_i^\circ|^m
\leq\frac{m!}{2}Bs^2c^{m-2},
\qquad m=3,4,\ldots.
\]
This absolute-moment condition suffices for both signs, since it bounds the positive-part moments in the cited theorem. To verify it here, each mean lies in its supporting interval, so $|F_i^\circ|\leq u$. Consequently,
\[
\sum_{i=1}^B\E|F_i^\circ|^m
\leq u^{m-2}\sum_{i=1}^B\E(F_i^\circ)^2
=u^{m-2}Bs^2
\leq\frac{m!}{2}Bs^2\left(\frac u3\right)^{m-2},
\]
where $m!\geq2\cdot3^{m-2}$ gives the last inequality. Thus the scale $c=u/3$ satisfies the required condition. Applying the one-sided bound to both signs and taking a union bound gives
\[
\Prob\!\left(\left|\sum_iF_i^\circ\right|
>\sqrt{2Bs^2x}+\frac{ux}{3}\right)\leq2e^{-x}.
\]
Dividing the sum and its threshold by $B$ yields the average-deviation bound in Eq.~\eqref{eq:app-scalar-concentration}, since
\[
\frac{\sqrt{2Bs^2x}+ux/3}{B}
=\sqrt{\frac{2s^2x}{B}}+\frac{ux}{3B}.
\]
Conditional on a history satisfying the lemma's assumptions, the same argument applies with its conditional means and variances.
\end{proof}

To apply Lemma~\ref{lem:app-concentration} to the two counts, denote its deviation threshold by
\begin{equation}
\mathfrak r_B(s^2,u,x)
:=\sqrt{\frac{2s^2x}{B}}+\frac{ux}{3B},
\qquad s^2,u\geq0,\quad x>0.
\label{eq:app-radius}
\end{equation}
Choosing $x=\log(4/\delta)$ gives $2e^{-x}=\delta/2$ for any $\delta\in(0,1)$, assigning failure probability at most $\delta/2$ to each count bound. The next theorem substitutes the two counts' respective variances and range widths into this threshold and combines their deviations with the mean-length drift.

\begin{samepage}
\begin{sopdtheorem}[Finite-batch error with lagged normalization]
\label{thm:app-length-tracking}
Under these additional sampling assumptions, fix a step $k\geq W+1$ and $\delta\in(0,1)$, and set $x=\log(4/\delta)$. Define the deviation radii for $N_k/B$ and $C_k/B$ by
\begin{align}
r_{N,k}&=\mathfrak r_B(\sigma_k^2,L-1,x), && k\geq W+1,\nonumber\\
r_{C,k}&=\mathfrak r_B(\tau_{k-1}^2,U_{L,K}-1,x), && k>W+1.
\label{eq:app-count-radii}
\end{align}
At the first truncated step, $C_{W+1}=N_W$, so use $r_{C,W+1}=\mathfrak r_B(\sigma_W^2,L-1,x)$. With probability at least $1-\delta$, both
\begin{equation}
\left|\frac{N_k}{B}-\mu_k\right|\leq r_{N,k},\qquad
\left|\frac{C_k}{B}-\mu_{k-1}\right|\leq r_{C,k}
\label{eq:app-count-tracking}
\end{equation}
hold. These deviations and the mean-length drift bound the relative normalization error on the same event:
\begin{equation}
|\rho_k|=\frac{|N_k-C_k|}{C_k}
\leq\frac{\Delta_{\mu,k}+r_{N,k}+r_{C,k}}{C_k/B}.
\label{eq:app-relative-normalization}
\end{equation}
Combining this normalization control with the current cutoff error gives a bound on the gradient discrepancy. If $\|h_{k,i,t}\|\leq H$ almost surely for the current batch, then for any $\zeta>0$ satisfying $\delta+\zeta<1$, the following holds with probability at least $1-\delta-\zeta$:
\begin{equation}
\|\widehat g_k-g_k^{\mathrm V}\|
\leq
\frac{\displaystyle \frac{H}{B}\sqrt{\frac{\sum_i v(T_{k,i})}{\zeta}}
+(\Delta_{\mu,k}+r_{N,k}+r_{C,k})\|g_k^{\mathrm V}\|}
{C_k/B}.
\label{eq:app-joint-gradient}
\end{equation}
\end{sopdtheorem}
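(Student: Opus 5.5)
The plan has three parts: prove the two count bounds with Lemma~\ref{lem:app-concentration}, combine them with the triangle inequality to bound $\rho_k$, and then add a Chebyshev bound for the current-step cutoff error. A union bound over the three events gives the stated probabilities.

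\textbf{Count bounds.} For $N_k/B$, apply Lemma~\ref{lem:app-concentration} conditionally on $\mathcal H_k$ with $F_i=T_{k,i}$. These are i.i.d., supported in $[1,L]$ (range width $L-1$), with mean $\mu_k$ and variance $\sigma_k^2$. With $x=\log(4/\delta)$, the deviation exceeds $r_{N,k}$ with probability at most $\delta/2$.

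For $k>W+1$, $C_k/B$ is the average of $\widehat T_{k-1,i}$. Conditional on $\mathcal H_{k-1}$, these are i.i.d. Each pair (length, anchor) is independent across $i$ by the i.i.d. rollout assumption and condition~3. Each count is supported in $[1,U_{L,K}]$, and by Eq.~\eqref{eq:app-total-count-variance} it has mean $\mu_{k-1}$ and variance $\tau_{k-1}^2$. The lemma again gives failure probability $\delta/2$ at radius $r_{C,k}$. At $k=W+1$, $C_k=N_W$ and the length argument repeats with $\sigma_W^2$.

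These probabilities are conditional on different histories. Because $\mathcal H_{k-1}\subset\mathcal H_k$, I will turn the conditional bound for $C_k$ into an unconditional one by the tower property, and do the same for $N_k$. A union bound then gives Eq.~\eqref{eq:app-count-tracking} with probability at least $1-\delta$. I expect this to be the main subtlety: the centers $\mu_k$ and $\sigma_k$ are themselves random, depending on $\mathcal H_k$, which in turn depends on $C_k$. The bound still holds, since it is stated as a probability of an event defined through these random centers, and each conditional probability is at most $\delta/2$.

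\textbf{Bound on $\rho_k$.} On the event above, the triangle inequality gives
$|N_k-C_k|/B\le|N_k/B-\mu_k|+|\mu_k-\mu_{k-1}|+|\mu_{k-1}-C_k/B|\le\Delta_{\mu,k}+r_{N,k}+r_{C,k}$.
Dividing by $C_k/B$, which is positive since $C_k\ge B$, gives Eq.~\eqref{eq:app-relative-normalization}.

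\textbf{Gradient bound.} Start from the identity \eqref{eq:app-normalized-identity}:
$\|\widehat g_k-g_k^{\mathrm V}\|\le\|\widehat G_k-G_k\|/C_k+|\rho_k|\,\|g_k^{\mathrm V}\|$.
By Theorem~\ref{thm:app-main}, conditional on $(\mathcal H_k,\mathcal Y_k)$, we have $\E_S\|\widehat G_k-G_k\|^2\le H^2\sum_i v(T_{k,i})$. Chebyshev/Markov applied to the squared norm then gives
$\|\widehat G_k-G_k\|\le H\sqrt{\sum_i v(T_{k,i})/\zeta}$
except with conditional probability at most $\zeta$, and hence unconditionally at most $\zeta$.

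Writing $1/C_k=(1/B)/(C_k/B)$ and taking a union bound with the count event gives Eq.~\eqref{eq:app-joint-gradient} with probability at least $1-\delta-\zeta$.
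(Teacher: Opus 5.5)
Your proposal is correct and follows the paper's proof essentially step for step. You apply Lemma~\ref{lem:app-concentration} to each count conditional on its own history at failure probability $\delta/2$, average over histories and take a union bound, use the three-term triangle inequality with $C_k/B\geq1$, and finish with Markov on $\|\widehat G_k-G_k\|^2$ conditional on $(\mathcal H_k,\mathcal Y_k)$ intersected with the count event. The only detail the paper adds is the degenerate case $\sum_i v(T_{k,i})=0$ (e.g.\ all $T_{k,i}\leq K$): there Markov's threshold is zero, but the cutoff error vanishes almost surely, so the bound still holds.
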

\end{samepage}

\begin{proof}[Proof of Theorem~\ref{thm:app-length-tracking}]
We first apply Lemma~\ref{lem:app-concentration} to the two batch averages using their rollout-level moments. For $N_k=\sum_iT_{k,i}$, condition on $\mathcal H_k$: its summands have mean $\mu_k$, variance $\sigma_k^2$, and range $[1,L]$. For $k>W+1$, the denominator is $C_k=\sum_i\widehat T_{k-1,i}$; conditional on $\mathcal H_{k-1}$, Eq.~\eqref{eq:app-total-count-variance} gives summand mean $\mu_{k-1}$ and variance $\tau_{k-1}^2$, with range $[1,U_{L,K}]$. At the first truncated step, $C_{W+1}=N_W$, whose summands are complete lengths with variance $\sigma_W^2$ and range $[1,L]$. These choices give $r_{N,k}$ and the two cases of $r_{C,k}$. Taking $x=\log(4/\delta)$ assigns failure probability $\delta/2$ to each count bound:
\begin{align*}
\Prob\!\left(\left|\frac{N_k}{B}-\mu_k\right|>r_{N,k}\,\middle|\,\mathcal H_k\right)
&\leq\frac\delta2,\\
\Prob\!\left(\left|\frac{C_k}{B}-\mu_{k-1}\right|>r_{C,k}\,\middle|\,\mathcal H_{k-1}\right)
&\leq\frac\delta2.
\end{align*}
Averaging over each respective history gives an unconditional failure probability at most $\delta/2$ for each event. A union bound then proves Eq.~\eqref{eq:app-count-tracking}.

On this joint count event, the difference between the two averages is bounded by their deviations from their own means and the gap between those means:
\begin{align*}
\frac{|N_k-C_k|}{B}
&\leq\left|\frac{N_k}{B}-\mu_k\right|
+|\mu_k-\mu_{k-1}|
+\left|\mu_{k-1}-\frac{C_k}{B}\right|\\
&\leq r_{N,k}+\Delta_{\mu,k}+r_{C,k}.
\end{align*}
Dividing by $C_k/B\geq1$ gives Eq.~\eqref{eq:app-relative-normalization}, which controls the normalization contribution to the gradient error.

\begin{samepage}
For the remaining cutoff contribution, condition on $(\mathcal H_k,\mathcal Y_k)$ so that the current complete rollouts and their local gradients are fixed. Equation~\eqref{eq:app-length-envelope} bounds the conditional second moment of $\widehat G_k-G_k$ by $H^2\sum_i v(T_{k,i})$. When this bound is positive, Markov's inequality at the squared-norm threshold $H^2\sum_i v(T_{k,i})/\zeta$ gives
\[
\Prob\!\left(\|\widehat G_k-G_k\|^2
>\frac{H^2\sum_i v(T_{k,i})}{\zeta}
\,\middle|\,\mathcal H_k,\mathcal Y_k\right)
\leq\frac{\E[\|\widehat G_k-G_k\|^2\mid\mathcal H_k,\mathcal Y_k]}
{H^2\sum_i v(T_{k,i})/\zeta}
\leq\zeta.
\]
Taking square roots on the complementary event yields, with conditional probability at least $1-\zeta$,
\[
\|\widehat G_k-G_k\|
\leq H\sqrt{\frac{\sum_i v(T_{k,i})}{\zeta}}.
\]
This also holds when the second-moment bound is zero, since the error is then zero almost surely.
\end{samepage}

\begin{samepage}
Averaging this conditional failure probability and intersecting with the count event gives probability at least $1-\delta-\zeta$. On this intersection, the normalized-error identity in Eq.~\eqref{eq:app-normalized-identity} combines the two contributions:
\begin{align*}
\|\widehat g_k-g_k^{\mathrm V}\|
&\leq\frac{\|\widehat G_k-G_k\|+|N_k-C_k|\,\|g_k^{\mathrm V}\|}{C_k}\\
&\leq\frac{H\sqrt{\sum_i v(T_{k,i})/\zeta}
+B(\Delta_{\mu,k}+r_{N,k}+r_{C,k})\|g_k^{\mathrm V}\|}{C_k}.
\end{align*}
Dividing the numerator and denominator by $B$ proves Eq.~\eqref{eq:app-joint-gradient}.
\end{samepage}
\end{proof}

\subsection{Batch size and gradient error}
\label{app:batch-size}

Larger batches reduce the variation from random truncation and the relative sampling error in the token-count estimate. Since $C_k$ comes from the preceding batch, changes in mean rollout length can still introduce a scale difference. The following corollary separates these effects and shows when our gradient approaches Vanilla OPD's gradient.

\begin{sopdcorollary}[Large-batch local agreement]
\label{cor:app-agreement}
Keep $L,K,H$ fixed and increase the batch size $B$ in both the current batch and its preceding normalization batch. Assume the sampling conditions of Theorem~\ref{thm:app-length-tracking} and $\|h_{k,i,t}\|\leq H$ almost surely for the current gradients. The histories and conditional means may depend on $B$. Then
\begin{equation}
\widehat g_k-g_k^{\mathrm V}
=\left(\frac{\mu_k}{\mu_{k-1}}-1\right)g_k^{\mathrm V}
+O_{\Prob}(B^{-1/2}).
\label{eq:app-local-agreement}
\end{equation}
Here the stochastic order concerns the Euclidean norm over the joint training and sampling randomness, with constants depending only on the fixed bounds. If $\mu_k/\mu_{k-1}\to1$ in probability, the local discrepancy converges to zero in probability. When the adjacent conditional mean lengths agree, its order is $B^{-1/2}$.
\end{sopdcorollary}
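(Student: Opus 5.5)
We start from the normalized-error identity Eq.~\eqref{eq:app-normalized-identity} and rewrite the scale term around the ratio of conditional means:
\[
\widehat g_k-g_k^{\mathrm V}
=\frac{\widehat G_k-G_k}{C_k}
+\left(\frac{\mu_k}{\mu_{k-1}}-1\right)g_k^{\mathrm V}
+\left(\frac{N_k}{C_k}-\frac{\mu_k}{\mu_{k-1}}\right)g_k^{\mathrm V}.
\]
It then suffices to show that the first and third terms are $O_{\Prob}(B^{-1/2})$ in norm, with constants depending only on $L,K,H$.

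\textbf{Cutoff term.} We use $C_k\geq B$ and the second-moment bound Eq.~\eqref{eq:app-length-envelope}, which gives
$\E_S\|\widehat G_k-G_k\|^2\leq H^2\sum_iv(T_{k,i})\leq BH^2v(L)$, since $v$ is nondecreasing in $T$. Hence
\[
\E\bigl[\|\widehat G_k-G_k\|^2/C_k^2\bigr]\leq H^2v(L)/B
\]
after averaging over histories. Markov's inequality, or equivalently the $\zeta$-part of Theorem~\ref{thm:app-length-tracking} with $\zeta$ fixed, then yields $O_{\Prob}(B^{-1/2})$ with a constant depending only on $H,L,K$.

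\textbf{Ratio term.} Fix $\delta$ and $x=\log(4/\delta)$. The variances satisfy $\sigma_j^2\leq L^2$ and $\tau_j^2\leq U_{L,K}^2$, since the summands lie in $[1,L]$ and $[1,U_{L,K}]$ respectively. The radii $r_{N,k},r_{C,k}$ in Eq.~\eqref{eq:app-count-radii} are therefore at most $c(\delta)B^{-1/2}$ uniformly in the histories. On the event of Eq.~\eqref{eq:app-count-tracking}, we write
\[
\frac{N_k}{C_k}-\frac{\mu_k}{\mu_{k-1}}
=\frac{(N_k/B-\mu_k)\mu_{k-1}-\mu_k(C_k/B-\mu_{k-1})}{(C_k/B)\,\mu_{k-1}}.
\]
Using $\mu_j\in[1,L]$ and $C_k/B\geq1$, its absolute value is at most $L(r_{N,k}+r_{C,k})$. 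The first-step case $C_{W+1}=N_W$ is covered by the alternative radius and is handled the same way.

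\textbf{Completing the bound.} Finally, $\|g_k^{\mathrm V}\|\leq H$ because $g_k^{\mathrm V}$ is an average of token gradients of norm at most $H$. So the third term is at most $HLc(\delta)B^{-1/2}$ with probability at least $1-\delta$, uniformly over the histories, which may depend on $B$. This gives the $O_{\Prob}$ claim. Combining the three terms proves Eq.~\eqref{eq:app-local-agreement}. The two closing claims follow directly. If $\mu_k/\mu_{k-1}\to1$ in probability, the leading term vanishes because $\|g_k^{\mathrm V}\|\leq H$. If the adjacent means agree, that term is zero and only the $O_{\Prob}(B^{-1/2})$ remainder is left.

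\textbf{Main obstacle.} The delicate step is making the stochastic order uniform when histories, and hence $\mu_j,\sigma_j,\tau_j$, depend on $B$ and on the earlier randomness. The plan is to derive every constant from the deterministic ranges $[1,L]$ and $[1,U_{L,K}]$ rather than from the conditional moments. The conditional Bernstein bounds of Lemma~\ref{lem:app-concentration} can then be averaged over histories, as in the proof of Theorem~\ref{thm:app-length-tracking}.
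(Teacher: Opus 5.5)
Your proposal is correct and follows essentially the same route as the paper. It uses the same identity separating the cutoff term $(\widehat G_k-G_k)/C_k$ from $(N_k/C_k-\mu_k/\mu_{k-1})g_k^{\mathrm V}$, the same second-moment and Markov bound $H^2v(L)/B$ via $C_k\geq B$, and the Bernstein radii of Theorem~\ref{thm:app-length-tracking} with $C_k/B\geq1$, $\mu_j\in[1,L]$, and $\|g_k^{\mathrm V}\|\leq H$; your explicit bound $c(\delta)B^{-1/2}$ on the radii from the deterministic ranges spells out the uniformity that the paper asserts in one line.
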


\begin{proof}
We first bound the variation in the current token count and the preceding batch's token-count estimate. For fixed $L,K$, each rollout's complete and weighted counts are bounded independently of $B$, so their conditional variances are uniformly bounded over training histories. At any fixed failure probability, Theorem~\ref{thm:app-length-tracking} then gives
\[
N_k/B=\mu_k+O_{\Prob}(B^{-1/2}),\qquad
C_k/B=\mu_{k-1}+O_{\Prob}(B^{-1/2}).
\]
Thus $N_k/B$ estimates the current mean length $\mu_k$, while $C_k/B$ estimates the preceding mean length $\mu_{k-1}$. To compare the normalization ratio with the ratio of these means, write
\[
\frac{N_k}{C_k}-\frac{\mu_k}{\mu_{k-1}}
=\frac{N_k/B-\mu_k}{C_k/B}
+\frac{\mu_k(\mu_{k-1}-C_k/B)}{(C_k/B)\mu_{k-1}}.
\]
Since $C_k/B\geq1$ and $1\leq\mu_j\leq L$, the two terms on the right are $O_{\Prob}(B^{-1/2})$.

We next bound the variation from random truncation. Equation~\eqref{eq:app-length-envelope} gives
\[
\E_{S_k}\left\|\frac{\widehat G_k-G_k}{C_k}\right\|^2
\leq\frac{H^2\sum_i v(T_{k,i})}{C_k^2}
\leq\frac{H^2v(L)}{B},
\]
where $C_k\geq B$ and $v(T_{k,i})\leq v(L)$; the latter follows because each tail-length term in Eq.~\eqref{eq:app-v} is nondecreasing in the complete length. Averaging this conditional bound and applying Markov's inequality gives, for every $a>0$,
\[
\Prob\!\left(\left\|\frac{\widehat G_k-G_k}{C_k}\right\|>
\frac{a}{\sqrt B}\right)\leq\frac{H^2v(L)}{a^2}.
\]
The truncation term is therefore $O_{\Prob}(B^{-1/2})$. Combining it with the normalization error using Eq.~\eqref{eq:app-normalized-identity} gives
\[
\widehat g_k-g_k^{\mathrm V}
-\left(\frac{\mu_k}{\mu_{k-1}}-1\right)g_k^{\mathrm V}
=\frac{\widehat G_k-G_k}{C_k}
+\left(\frac{N_k}{C_k}-\frac{\mu_k}{\mu_{k-1}}\right)g_k^{\mathrm V}.
\]
Both terms on the right have order $B^{-1/2}$ in probability, using $\|g_k^{\mathrm V}\|\leq H$. This proves Eq.~\eqref{eq:app-local-agreement} and its stated consequences.
\end{proof}

We can also bound the normalization error for a finite batch. The estimate $C_k$ first approximates the preceding count $N_{k-1}$, which can differ from the current count $N_k$. To bound the first difference, fix $k>W+1$ and $\delta\in(0,1)$, and condition on $(\mathcal H_{k-1},\mathcal Y_{k-1})$. The preceding rollout counts $\widehat T_{k-1,i}$ are independent over the sampled stopping positions, with means $T_{k-1,i}$, variances $v(T_{k-1,i})$, and range width at most $U_{L,K}-1$. Applying Lemma~\ref{lem:app-concentration} with average variance $B^{-1}\sum_i v(T_{k-1,i})$ and $x=\log(2/\delta)$, then multiplying by $B$, gives
\begin{equation}
|C_k-N_{k-1}|\leq q_{k-1,\delta},\qquad
q_{k-1,\delta}:=
\sqrt{2\log(2/\delta)\sum_i v(T_{k-1,i})}
+\frac{U_{L,K}-1}{3}\log\frac2\delta
\label{eq:app-cutoff-count}
\end{equation}
with conditional probability at least $1-\delta$. On this event,
\begin{equation}
|C_k-N_k|\leq q_{k-1,\delta}+|N_k-N_{k-1}|,
\label{eq:app-realized-drift}
\end{equation}
by the triangle inequality. The first term bounds the error in estimating the preceding count; the second is the difference between the two complete counts. Since the probability bound holds for each fixed training history, it also holds after averaging over histories. At the first step with random truncation, $C_{W+1}=N_W$, so the preceding count is known exactly and its estimation error is zero.

\begin{samepage}
At a fixed student policy, we can also average over the rollouts and teacher feedback. Let $T$ be the complete rollout length and $h_t$ its token gradients, with $\|h_t\|\leq H$. Here, $\E$ averages over the rollout, teacher feedback, and independent stopping position. The unbiased gradient-sum and token-count estimates in Eq.~\eqref{eq:app-unbiased} give
\begin{equation}
g_\star:=\frac{\E[\sum_{t=1}^T h_t]}{\E T}
=\frac{\E[\sum_{t=1}^T I_t h_t/p_t]}
{\E[\sum_{t=1}^T I_t/p_t]}.
\label{eq:app-population-objects}
\end{equation}
Both methods therefore have the same population token-mean gradient, defined as the expected gradient sum divided by the expected token count.
\end{samepage}

\section{Optimizer Update Error}
\label{app:optimizers}

At a fixed training step, we compare the parameter updates of \method{}
and Vanilla under the same optimizer. Their gradients are
$\widehat g=\widehat G/C$ and $g^{\mathrm V}=G/N$, respectively. Both updates
start from the same parameters and previous optimizer state. We use the
gradient-error decomposition from Theorem~\ref{thm:app-main} to bound the
update error of \method{} relative to Vanilla for SGD, Adam, and Muon.

Condition on the training history $\mathcal H_k$ and hypothetical complete
batch $\mathcal Y_k$, so only the current cutoffs remain random. For a
fixed step size $\eta>0$, let $\Phi(x)$ be the parameter increment produced
from gradient input $x$ in real arithmetic. A common decoupled weight-decay
increment cancels because the two updates start from the same parameters. With
$\rho=(N-C)/C$, Theorem~\ref{thm:app-main} gives
\[
\E_S\|\widehat g-g^{\mathrm V}\|^2
=\frac{V}{C^2}+\rho^2\|g^{\mathrm V}\|^2.
\]
The two terms measure cutoff noise and the mismatch between the lagged
denominator and the current complete token count.

The global norm clipping map $\mathcal Q$ fits the same comparison. For a threshold
$a_{\rm clip}>0$ and denominator stabilizer $\delta_{\rm clip}\geq0$,
define the map on the full gradient vector by
\begin{equation}
\mathcal Q(x)=\frac{x}{\max\{1,(\|x\|+\delta_{\rm clip})/a_{\rm clip}\}}.
\label{eq:app-global-clip}
\end{equation}
This map is nonexpansive.

For an optimizer below, evaluate its coefficient at the clipped Vanilla
reference $\mathcal Q(g^{\mathrm V})$ and denote the result by
$\Lambda_{\rm opt}$. For disjoint parameter blocks, use the largest block
coefficient; the full squared norm is the sum of the blockwise squared
norms. The resulting update satisfies
\begin{equation}
\begin{split}
\|\Phi(\mathcal Q(\widehat g))-\Phi(\mathcal Q(g^{\mathrm V}))\|
&\leq\Lambda_{\rm opt}\|\mathcal Q(\widehat g)-\mathcal Q(g^{\mathrm V})\|\\
&\leq\Lambda_{\rm opt}\|\widehat g-g^{\mathrm V}\|.
\end{split}
\label{eq:app-clipped-update}
\end{equation}
The reference and coefficients remain fixed under the conditioning on
$(\mathcal H_k,\mathcal Y_k)$.
Squaring Eq.~\eqref{eq:app-clipped-update}, taking conditional expectations,
and applying Eq.~\eqref{eq:app-mse} bounds the mean-square update error by
$\Lambda_{\rm opt}^2\{V/C^2+\rho^2\|g^{\mathrm V}\|^2\}$.

To verify nonexpansiveness, consider the Jacobian of $\mathcal Q$.
It is the identity in the interior of the unscaled region. In the interior of the scaled region, with
$s=\|x\|>0$, its tangential and radial
eigenvalues are $a_{\rm clip}/(s+\delta_{\rm clip})$ and
$a_{\rm clip}\delta_{\rm clip}/(s+\delta_{\rm clip})^2$, respectively.
Both lie in $[0,1]$ because $s+\delta_{\rm clip}\geq a_{\rm clip}$.
The map is continuous at zero and at the branch boundary. Integrating
its derivative along the smooth pieces of a line segment therefore gives
$\|\mathcal Q(x)-\mathcal Q(y)\|\leq\|x-y\|$.

\subsection{Stochastic gradient descent}
\label{app:sgd}

For SGD, $\Phi(x)=-\eta x$, so the update error of \method{} relative to
Vanilla is the gradient error multiplied by $\Lambda_{\mathrm{SGD}}=\eta$:
\begin{equation}
\|\Phi(\widehat g)-\Phi(g^{\mathrm V})\|
=\eta\|\widehat g-g^{\mathrm V}\|.
\label{eq:app-sgd-pointwise}
\end{equation}
The same equality holds for any input $x$ in place of $\widehat g$.
Taking the expected squared difference over the current cutoffs gives
\begin{equation}
\E_S\|\Phi(\widehat g)-\Phi(g^{\mathrm V})\|^2
=\eta^2\left\{\frac{V}{C^2}
+\rho^2\|g^{\mathrm V}\|^2\right\}.
\label{eq:app-sgd-error}
\end{equation}
Our SGD runs (Section~\ref{sec:optimizers}) use neither momentum nor clipping, so their coefficient is
$\Lambda_{\mathrm{SGD}}=\eta=5\times10^{-4}$ at every step.

The momentum update is also affine in the current gradient. With fixed
previous momentum $\boldsymbol m$, its update is
$\boldsymbol m^+(x)=\beta\boldsymbol m+(1-\beta)x$ and $\Phi(x)=-\eta\boldsymbol m^+(x)$.
The previous-momentum term cancels between the two inputs, leaving
$\Phi(x)-\Phi(g^{\mathrm V})=-\eta(1-\beta)(x-g^{\mathrm V})$.
Its coefficient is therefore $\eta|1-\beta|$, and the squared-error
equality above has multiplier $\eta^2(1-\beta)^2$.
For any affine update,
$\E_S\Phi(\widehat g)=\Phi(G/C)$ because
$\E_S\widehat g=G/C$. The Adam and Muon maps below are nonlinear in the
current input; their bounds account for the full input dependence.

\subsection{Adam}
\label{app:adam}

Adam incorporates the current gradient into both its first moment and the
second moment in its denominator. We account for this coupled dependence
when bounding the update error of \method{} relative to Vanilla.

At step $k\geq1$, let $\boldsymbol m$ and $\boldsymbol q\geq0$ be Adam's
previous first and second moments. Fix $\eta>0$,
$\beta_1,\beta_2\in[0,1)$, and a stabilization parameter $\epsilon_{\mathrm A}>0$. For a gradient
input $x\in\mathbb R^d$, Adam's parameter increment
\citep[Algorithm~1]{kingma2015adam} is
\begin{equation}
\Phi_{\mathrm A}(x)=-\frac{\eta}{1-\beta_1^k}
\frac{\beta_1\boldsymbol m+(1-\beta_1)x}
{\sqrt{\{\beta_2\boldsymbol q+(1-\beta_2)x^{\odot2}\}/(1-\beta_2^k)}+\epsilon_{\mathrm A}},
\label{eq:app-adam-map}
\end{equation}
where the square, square root, and division are coordinatewise. The factors
$1-\beta_1^k$ and $1-\beta_2^k$ provide Adam's moment bias correction. Each
output coordinate depends on only one input coordinate, but that input
changes both the numerator and denominator.

We evaluate this dependence through the scalar function
\begin{equation}
F(z)=\frac{a+bz}{\sqrt{c+\nu z^2}+\epsilon},
\qquad
\mathfrak L(a,b,c,\nu,\epsilon;y)
=\sup_{z\ne y}\left|\frac{F(z)-F(y)}{z-y}\right|,
\label{eq:app-adam-scalar-coefficient}
\end{equation}
with $a,y\in\mathbb R$, $b,\nu,\epsilon>0$, and $c\geq0$. The reference
$y$ is fixed. The coefficient $\mathfrak L$ is the largest absolute
change in $F$ per unit change from $y$ to any other input $z$.
For Adam's coordinate $j$, use
\[
a=\beta_1m_j,\quad b=1-\beta_1,\quad
c=c_j:=\frac{\beta_2q_j}{1-\beta_2^k},\quad
\nu=\frac{1-\beta_2}{1-\beta_2^k},\quad
\epsilon=\epsilon_{\mathrm A},\quad y=g_j^{\mathrm V}.
\]
Here $a$ and $c$ contain the previous moments; $b$ and $\nu$ weight the
current gradient and its square. These values satisfy the scalar domain
conditions and give
\[
\Phi_{\mathrm A}(x)_j=-\frac{\eta}{1-\beta_1^k}F(x_j).
\]
The scalar coefficient can be computed exactly from a closed formula or
a finite set of polynomial roots, as established in the proof.

\begin{sopdproposition}[Update error with Adam]
\label{prop:app-adam}
For the current batch, let $\widehat g$ and $g^{\mathrm V}$ denote the
gradients of \method{} and Vanilla, respectively. The Adam updates
in Eq.~\eqref{eq:app-adam-map}, from the same parameter vector and previous
moments $\boldsymbol m,\boldsymbol q$, satisfy
\begin{equation}
\boxed{\|\Phi_{\mathrm A}(\widehat g)-\Phi_{\mathrm A}(g^{\mathrm V})\|
\leq\Lambda_{\mathrm A}\|\widehat g-g^{\mathrm V}\|}.
\label{eq:app-adam-error}
\end{equation}
Here the coordinate coefficients and their maximum are
\begin{equation}
\lambda_{\mathrm A,j}=\frac{\eta}{1-\beta_1^k}
\mathfrak L(\beta_1m_j,1-\beta_1,c_j,\nu,\epsilon_{\mathrm A};g_j^{\mathrm V}),
\qquad \Lambda_{\mathrm A}=\max_{1\leq j\leq d}\lambda_{\mathrm A,j}.
\label{eq:app-adam-coefficient}
\end{equation}
These coefficients are finite and fixed conditional on the training
history $\mathcal H_k$ and complete batch $\mathcal Y_k$.
The same pointwise bound holds with $\widehat g$ replaced by any gradient
input $x\in\mathbb R^d$.

To state the conditional mean-square bound, let
$V_j=\operatorname{Var}_S(\widehat G_j)$ and
$\rho=(N-C)/C$. The coordinate form of Eq.~\eqref{eq:app-mse} is
\begin{equation}
\mathcal E_j:=\E_S(\widehat g_j-g_j^{\mathrm V})^2
=\frac{V_j}{C^2}+\rho^2(g_j^{\mathrm V})^2.
\label{eq:app-coordinate-error}
\end{equation}
The variance $V_j$ is obtained from
Eq.~\eqref{eq:app-exact-variance} by replacing each token gradient with
its $j$th component, and $\sum_jV_j=V$.
Averaging over the current cutoffs then gives
\begin{equation}
\E_S\|\Phi_{\mathrm A}(\widehat g)-\Phi_{\mathrm A}(g^{\mathrm V})\|^2
\leq\sum_j\lambda_{\mathrm A,j}^2\mathcal E_j
\leq\Lambda_{\mathrm A}^2
\left\{\frac{V}{C^2}+\rho^2\|g^{\mathrm V}\|^2\right\}.
\label{eq:app-adam-variance}
\end{equation}
\end{sopdproposition}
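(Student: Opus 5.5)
The argument is coordinatewise. Since $\Phi_{\mathrm A}(x)_j=-\frac{\eta}{1-\beta_1^k}F(x_j)$ depends only on $x_j$, the definition of $\mathfrak L$ in Eq.~\eqref{eq:app-adam-scalar-coefficient} gives, for every input $x$,
\[
|\Phi_{\mathrm A}(x)_j-\Phi_{\mathrm A}(g^{\mathrm V})_j|\leq\lambda_{\mathrm A,j}|x_j-g_j^{\mathrm V}|.
\]
This holds trivially when $x_j=g_j^{\mathrm V}$ and by the supremum otherwise. Squaring, summing over $j$, and bounding each $\lambda_{\mathrm A,j}\leq\Lambda_{\mathrm A}$ yields Eq.~\eqref{eq:app-adam-error} for any input $x$. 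The coefficients depend only on $\boldsymbol m,\boldsymbol q,k$, the hyperparameters and $g^{\mathrm V}$. All of these are fixed given $(\mathcal H_k,\mathcal Y_k)$, so the coefficients are constants under $\E_S$.

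For the mean-square statement, I would apply the coordinate inequality with $x=\widehat g$, square it, and take $\E_S$. This gives $\sum_j\lambda_{\mathrm A,j}^2\mathcal E_j$. The identity \eqref{eq:app-coordinate-error} is the proof of Eq.~\eqref{eq:app-mse} applied to the scalar coordinate $j$. The decomposition \eqref{eq:app-normalized-identity} holds coordinatewise, and the cross term vanishes because $\E_S(\widehat G_j-G_j)=0$. The exact variance formula \eqref{eq:app-exact-variance} also applies coordinatewise, since the tail identity only uses inner products and $\|\cdot\|^2=\sum_j(\cdot)_j^2$. Summing over $j$ then gives $\sum_jV_j=V$, and bounding $\lambda_{\mathrm A,j}\leq\Lambda_{\mathrm A}$ gives the final inequality.

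The main obstacle is showing that $\mathfrak L$ is finite and computable. I would show that $F$ is globally Lipschitz on $\mathbb R$, so that the difference quotient is bounded by $\sup|F'|$. With $D(z)=\sqrt{c+\nu z^2}+\epsilon\geq\epsilon>0$, $F$ is smooth and
\[
F'(z)=\frac{b}{D}-\frac{(a+bz)\nu z}{\sqrt{c+\nu z^2}\,D^2}.
\]
When $c=0$, I would read the second term through $\nu z/\sqrt{\nu z^2}=\sqrt\nu\,\mathrm{sign}(z)$, which is bounded, so $F$ is Lipschitz even at $z=0$. Since $|\nu z|/\sqrt{c+\nu z^2}\leq\sqrt\nu$ and $D\geq\max(\epsilon,\sqrt\nu|z|)$, we get $|F'|\leq b/\epsilon+\sqrt\nu(|a|+b|z|)/D^2$. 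This bound is finite and decays as $|z|\to\infty$, so $\mathfrak L\leq\sup|F'|<\infty$.

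For exact computability, I would note that the quotient $(F(z)-F(y))/(z-y)$ extends continuously to $z=y$ with value $F'(y)$. As $z\to\pm\infty$, it tends to $0$, because $F$ is bounded with limits $\pm b/\sqrt\nu$. Hence the supremum is either one of these limit values or is attained at a critical point. After isolating the square roots and squaring, the critical points are roots of a polynomial equation in $z$, giving finitely many candidates. This is the step that needs the most care, but it is only needed for the computability remark; finiteness alone suffices for the proposition.
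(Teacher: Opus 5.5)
Your proof is correct. Its outer structure matches the paper's: the coordinatewise secant inequality from the definition of $\mathfrak L$, squaring and summing over $j$, treating the coefficients as constants under $\E_S$, and the coordinate form of Eq.~\eqref{eq:app-mse} with $\sum_jV_j=V$. The genuine difference is how you show that $\mathfrak L$ is finite.

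The paper computes $\mathfrak L$ exactly. For $c>0$ it substitutes $t=\sqrt\nu\,z/(\sqrt c+\sqrt{c+\nu z^2})$, a bijection onto $(-1,1)$ that rationalizes the square root. The secant becomes a rational function $\mathsf n(t)/\mathsf d(t)$, continuous on $[-1,1]$ and zero at $t=\pm1$, so its maximum sits at a real root of a stationary polynomial of degree at most four. For $c=0$ it runs a separate half-line analysis.

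You instead bound $\mathfrak L$ by the global Lipschitz constant $\sup|F'|$, with a piecewise mean-value argument across the kink when $c=0$. Your estimate becomes explicit once you note that your denominator $D$ satisfies $D^2\geq\epsilon\sqrt\nu|z|$, which gives $\mathfrak L\leq 2b/\epsilon+\sqrt\nu|a|/\epsilon^2$. This is far shorter and suffices for the proposition, which asserts only finiteness. It also gives a closed-form bound on $\lambda_{\mathrm A,j}$ that holds uniformly in the reference $g_j^{\mathrm V}$. What it gives up is the sharp, reference-dependent value of $\mathfrak L$. The paper's exact computation delivers that value and backs its remark that the coefficient can be evaluated from a closed formula or finitely many polynomial roots.

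Your optional computability sketch would need repair when $c=0$, because $F(z)=(a+bz)/(\sqrt\nu|z|+\epsilon)$ then has a kink at $z=0$. For $y=0$ and $a\neq0$, the secant has distinct one-sided limits $(b\epsilon\mp a\sqrt\nu)/\epsilon^2$ as $z\to0^{\pm}$. So it does not extend continuously by $F'(y)$, and the supremum $(b\epsilon+|a|\sqrt\nu)/\epsilon^2$ is never attained. For $y\neq0$, the supremum over the reference's half-line sits at the kink $z=0$, where the secant is not stationary, so a search over limits at infinity and critical points can miss it. The paper treats these cases separately in Eqs.~\eqref{eq:app-adam-zero-reference} and~\eqref{eq:app-adam-quadratic}. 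Since your proof of the proposition uses only finiteness, this does not affect its correctness.
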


For the globally clipped update in Eq.~\eqref{eq:app-clipped-update},
evaluate the coefficient in Eq.~\eqref{eq:app-adam-coefficient} using
$y=\mathcal Q(g^{\mathrm V})_j$ in each coordinate.

\begin{proof}
We first compute the largest absolute secant from the fixed reference $y$
of $F(z)=(a+bz)/(\sqrt{c+\nu z^2}+\epsilon)$, namely
$\mathfrak L(a,b,c,\nu,\epsilon;y)$. Only $z$ varies in this calculation;
the scalar parameters and reference are fixed.
The value of $c$ determines the form of the denominator: for $c>0$ the
square root stays positive, whereas for $c=0$ it becomes
$\sqrt\nu\,|z|$. We compute the secant maximum in these two cases
before inserting the Adam state and combining coordinates.

Suppose $c>0$. To maximize the secant over the whole real line, introduce
a coordinate that both bounds the search interval and makes the square
root rational:
\[
t=\frac{\sqrt\nu\,z}{\sqrt c+\sqrt{c+\nu z^2}},
\qquad
t_y=\frac{\sqrt\nu\,y}{\sqrt c+\sqrt{c+\nu y^2}}.
\]
The first transformation is a bijection from $\mathbb R$ to $(-1,1)$;
$t_y$ is the image of the fixed reference. Its inverse and the resulting
square root are
\[
z(t)=\frac{2\sqrt c\,t}{\sqrt\nu(1-t^2)},
\qquad
\sqrt{c+\nu z(t)^2}=\frac{\sqrt c(1+t^2)}{1-t^2}.
\]
Substitution gives rational expressions for both quantities needed in
the secant:
\begin{align*}
F(z(t))&=\frac{a(1-t^2)+2b\sqrt c\,t/\sqrt\nu}
{\sqrt c(1+t^2)+\epsilon(1-t^2)},\\
z(t)-y&=\frac{2\sqrt c(t-t_y)(1+tt_y)}
{\sqrt\nu(1-t^2)(1-t_y^2)}.
\end{align*}
Subtract $F(y)=F(z(t_y))$ in the first expression and divide by the
second. Cancelling the common factor $t-t_y$ gives the exact secant
\begin{equation}
\frac{F(z)-F(y)}{z-y}=\frac{\mathsf n(t)}{\mathsf d(t)},
\label{eq:app-adam-exact-secant}
\end{equation}
where the numerator and denominator polynomials are
\begin{align*}
\mathsf n(t)
&=(1-t^2)(1-t_y^2)
\left\{b\bigl[\sqrt c(1-tt_y)+\epsilon(1+tt_y)\bigr]
-a\sqrt\nu(t+t_y)\right\},\\
\mathsf d(t)
&=(1+tt_y)
\bigl[\sqrt c(1+t^2)+\epsilon(1-t^2)\bigr]
\bigl[\sqrt c(1+t_y^2)+\epsilon(1-t_y^2)\bigr].
\end{align*}

The denominator has no zero on $[-1,1]$:
$1+tt_y\geq1-|t_y|>0$, and each bracket in $\mathsf d$ is positive.
Consequently $\mathsf n/\mathsf d$ is continuous on the closed interval.
At the omitted reference point $t=t_y$, its value is the limiting secant
$F'(y)$. At $t=\pm1$, which correspond to $z\to\pm\infty$, its value is
zero. Including the reference and endpoint values therefore leaves the
supremum unchanged and makes it a maximum on $[-1,1]$.

The limits of $F$ at $\pm\infty$ are $\pm b/\sqrt\nu$, so the
secant quotient is not identically zero. Its positive absolute maximum
therefore occurs inside $(-1,1)$, where the quotient is nonzero and its
derivative vanishes. The stationary equation is
$\mathsf n'\mathsf d-\mathsf n\mathsf d'=0$.
Since both polynomials have degree at most three, their degree-five terms
cancel in this equation, leaving degree at most four. The stationary
polynomial is nonzero because the quotient is nonconstant. Its roots in
$(-1,1)$ therefore form a finite nonempty set that contains a maximizer:
\begin{equation}
\begin{split}
\mathcal Z_c&=\{t\in(-1,1):
\mathsf n'(t)\mathsf d(t)-\mathsf n(t)\mathsf d'(t)=0\},\\
\mathfrak L(a,b,c,\nu,\epsilon;y)
&=\max_{t\in\mathcal Z_c}\frac{|\mathsf n(t)|}{\mathsf d(t)}.
\end{split}
\label{eq:app-adam-quartic}
\end{equation}
To evaluate this coefficient, solve the displayed stationary polynomial,
retain its real roots in $(-1,1)$, and compare $|\mathsf n|/\mathsf d$ at
those roots. A root equal to $t_y$ is retained: cancellation has already
removed the apparent division by zero in the original secant.

It remains to compute the scalar coefficient when $c=0$. The absolute
value in $F(z)=(a+bz)/(\sqrt\nu\,|z|+\epsilon)$ makes it useful to
consider the two half-lines separately. If the reference is $y=0$, then
\[
\frac{F(z)-F(0)}{z}
=\frac{b\epsilon-a\sqrt\nu\,\operatorname{sgn}(z)}
{\epsilon(\sqrt\nu\,|z|+\epsilon)},\qquad z\ne0.
\]
On each half-line the numerator is constant and the positive denominator
increases with $|z|$. The supremum on that half-line is therefore the
limiting absolute slope at zero. Taking the larger of the two limits
gives
\begin{equation}
\mathfrak L(a,b,0,\nu,\epsilon;0)
=\frac{b\epsilon+|a|\sqrt\nu}{\epsilon^2}.
\label{eq:app-adam-zero-reference}
\end{equation}

Now suppose $c=0$ and $y\ne0$. On the same side of zero as the reference,
the secant reduces to
\[
\frac{F(z)-F(y)}{z-y}
=\frac{b\epsilon-a\sqrt\nu\,\operatorname{sgn}(y)}
{(\sqrt\nu\,|z|+\epsilon)(\sqrt\nu\,|y|+\epsilon)},
\qquad zy\geq0,\quad z\ne y.
\]
The absolute value is largest at $z=0$, since the numerator is constant
and the denominator increases with $|z|$.
For inputs on the opposite side, write
$z=-\operatorname{sgn}(y)u$, where $u\geq0$ is the input's distance from
zero. Subtracting $F(y)$ and dividing by
$z-y=-\operatorname{sgn}(y)(u+|y|)$ gives the rational secant
$\mathsf n_0(u)/\mathsf d_0(u)$ with
\begin{align*}
\mathsf n_0(u)
&=b\bigl\{\epsilon(u+|y|)+2\sqrt\nu\,|y|u\bigr\}
+a\sqrt\nu\,\operatorname{sgn}(y)(u-|y|),\\
\mathsf d_0(u)
&=(u+|y|)(\sqrt\nu\,u+\epsilon)(\sqrt\nu\,|y|+\epsilon).
\end{align*}
Its value at $u=0$ is the secant to $z=0$, which already captures the
maximum on the reference's half-line. It therefore suffices to maximize
$|\mathsf n_0|/\mathsf d_0$ over $u\geq0$.

The denominator $\mathsf d_0$ is positive on this domain, so the quotient
is continuous. The quotient tends to zero as $u\to\infty$ and is not
identically zero: if $\mathsf n_0(0)=0$, the coefficient of $u$ in
$\mathsf n_0$ is $2b(\epsilon+\sqrt\nu\,|y|)>0$.
Thus its absolute value has a positive maximum on a finite interval and
hence on $[0,\infty)$. The maximum occurs at $u=0$ or at a positive
stationary point. Since $\mathsf n_0$ and $\mathsf d_0$ have degrees at
most one and two, respectively, the stationary polynomial has degree at
most two. It is nonzero because the quotient is nonconstant. Consequently
\begin{equation}
\begin{split}
\mathcal Z_0&=\{0\}\cup\{u>0:
\mathsf n_0'(u)\mathsf d_0(u)-\mathsf n_0(u)\mathsf d_0'(u)=0\},\\
\mathfrak L(a,b,0,\nu,\epsilon;y)
&=\max_{u\in\mathcal Z_0}\frac{|\mathsf n_0(u)|}{\mathsf d_0(u)}.
\end{split}
\label{eq:app-adam-quadratic}
\end{equation}
For this case, solve the displayed polynomial, retain its positive real
roots, and compare the absolute quotient at those roots and at $u=0$.
Together with Eqs.~\eqref{eq:app-adam-quartic}
and~\eqref{eq:app-adam-zero-reference}, this gives a finite value of
$\mathfrak L$ for every admissible scalar parameter choice.

Return to Adam's coordinate $j$, whose scalar reference is
$y=g_j^{\mathrm V}$ and previous second-moment contribution is
$c=c_j=\beta_2q_j/(1-\beta_2^k)$. Insert these and the remaining
state parameters from the substitution following
Eq.~\eqref{eq:app-adam-scalar-coefficient} into
the applicable case of Eqs.~\eqref{eq:app-adam-quartic}--\eqref{eq:app-adam-quadratic}.
Scaling its finite value by $\eta/(1-\beta_1^k)$ gives
$\lambda_{\mathrm A,j}$; the largest coordinate value gives
$\Lambda_{\mathrm A}$. Both are determined by the fixed state,
hyperparameters, and reference gradient.

For any input $x$, the scalar secant bound and the identity
$\Phi_{\mathrm A}(x)_j=-\eta F(x_j)/(1-\beta_1^k)$ now give
\[
|\Phi_{\mathrm A}(x)_j-\Phi_{\mathrm A}(g^{\mathrm V})_j|
\leq\lambda_{\mathrm A,j}|x_j-g_j^{\mathrm V}|.
\]
We examine the size of Adam's coefficient bounds during training
(Figure~\ref{fig:adam-lambda}) and find that they remain moderate,
so small gradient errors also give small update-error bounds.

This also holds when $x_j=g_j^{\mathrm V}$, since both sides are zero.
Squaring each coordinate inequality and summing yields
\[
\|\Phi_{\mathrm A}(x)-\Phi_{\mathrm A}(g^{\mathrm V})\|^2
\leq\sum_j\lambda_{\mathrm A,j}^2(x_j-g_j^{\mathrm V})^2
\leq\Lambda_{\mathrm A}^2\|x-g^{\mathrm V}\|^2.
\]
Taking square roots gives the bound for every input $x$; setting
$x=\widehat g$, the gradient of \method{}, proves
Eq.~\eqref{eq:app-adam-error}. For the mean-square bound, take conditional
expectations in the coordinate-weighted inequality at this input.
The coefficients are fixed under the
conditioning on $(\mathcal H_k,\mathcal Y_k)$, so its right-hand side
becomes $\sum_j\lambda_{\mathrm A,j}^2\mathcal E_j$.
Equation~\eqref{eq:app-coordinate-error} and
$\sum_jV_j=V$ give
\[
\sum_j\mathcal E_j
=\frac{V}{C^2}+\rho^2\|g^{\mathrm V}\|^2.
\]
Bounding each $\lambda_{\mathrm A,j}^2$ by $\Lambda_{\mathrm A}^2$
proves Eq.~\eqref{eq:app-adam-variance}.
\end{proof}

\begin{figure}[H]
\centering
\includegraphics[width=\textwidth]{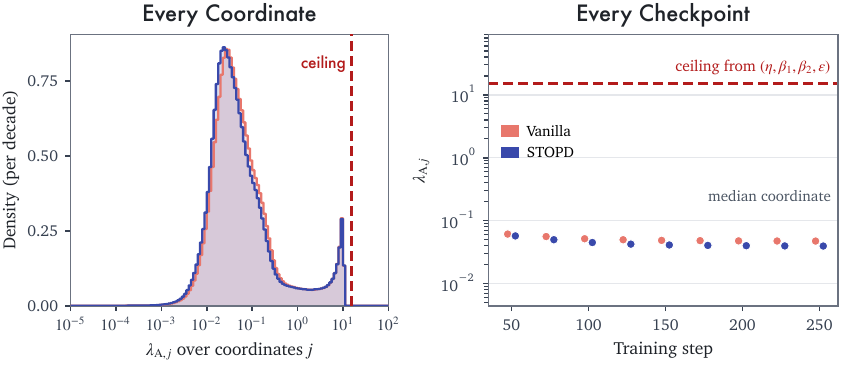}
\caption{Upper bounds on Adam's coefficients $\lambda_{\mathrm A,j}$: distribution across coordinates and checkpoints (left), the median coordinate at each step (right), and the theoretical ceiling (dashed).}
\label{fig:adam-lambda}
\end{figure}

\subsection{Muon}
\label{app:muon}

{For a matrix parameter block, let} $x\in\mathbb R^{d_{\rm out}\times d_{\rm in}}$ be its gradient
input. Fix the previous momentum $\boldsymbol M$, the momentum parameter
$\beta\in[0,1)$, and the stabilization parameter $\epsilon_{\mathrm M}>0$.
We use Frobenius norms except for the operator norm $\|\cdot\|_{\rm op}$,
and restrict $g^{\mathrm V}$, $\widehat g$, and $V$ to this block.
Muon forms the new momentum
$\boldsymbol M^+(x)=\beta\boldsymbol M+(1-\beta)x$ and then the Nesterov input
\[
\boldsymbol R(x)=(1-\beta)x+\beta\boldsymbol M^+(x)
=\beta^2\boldsymbol M+(1-\beta^2)x.
\]
We normalize the Nesterov input by clamping its Frobenius norm below at
$\epsilon_{\mathrm M}$:
\[
\boldsymbol X_0(x)=
\frac{\boldsymbol R(x)}{\max\{\|\boldsymbol R(x)\|_F,\epsilon_{\mathrm M}\}}.
\]
Muon then applies $J\geq0$ polynomial iterations, with $J$ an integer
\citep{jordan2024muon}:
\begin{equation}
\boldsymbol X_{j+1}=\omega_1\boldsymbol X_j+\omega_3\boldsymbol X_j\boldsymbol X_j^\top\boldsymbol X_j
+\omega_5(\boldsymbol X_j\boldsymbol X_j^\top)^2\boldsymbol X_j.
\label{eq:app-muon-iteration}
\end{equation}
Here $\omega_1,\omega_3,\omega_5\in\mathbb R$ are the optimizer's fixed
polynomial coefficients.
Let $\mathcal P_J$ denote the composition of these matrix iterations,
with $\mathcal P_0(X)=X$. Let $\gamma_{\rm shape}>0$ be the optimizer's
fixed scaling factor for this block, determined by its original dimensions
and the chosen scaling convention. Its parameter increment is
\[
\Phi_{\mathrm M}(x)=-\eta\gamma_{\rm shape}\mathcal P_J(\boldsymbol X_0(x)).
\]

The update comparison uses the reference Nesterov matrix and its normalization,
\[
\boldsymbol B=\boldsymbol R(g^{\mathrm V}),\qquad
b=\|\boldsymbol B\|_F,\qquad
Y=\boldsymbol X_0(g^{\mathrm V})=\frac{\boldsymbol B}{\max\{b,\epsilon_{\mathrm M}\}}.
\]
The effect of normalization on the distance from this reference is
controlled by
\begin{equation}
\kappa_b=\frac{2}{\max\{b,\epsilon_{\mathrm M}\}+\epsilon_{\mathrm M}}.
\label{eq:app-muon-radial-coefficient}
\end{equation}

For the polynomial iterations, write the scalar one-step polynomial as
$\varphi(s)=\omega_1s+\omega_3s^3+\omega_5s^5$ and its compositions as
$f_0(s)=s$, $f_{j+1}=\varphi\circ f_j$.
Their scalar sensitivity is the divided difference
\begin{equation}
\mathsf D_J(s,t)=
\begin{cases}
\dfrac{f_J(s)-f_J(t)}{s-t},&s\ne t,\\[1mm]
f_J'(t),&s=t.
\end{cases}
\label{eq:app-muon-response}
\end{equation}
Since $f_J$ is a polynomial, $\mathsf D_J$ is a polynomial in $(s,t)$
including on the diagonal, and $\mathsf D_0=1$.
To relate these scalar polynomials to the rectangular matrix iterations
\citep{andersson2016operator}, use the symmetric dilation
\[
\mathcal D(Z)=\begin{pmatrix}0&Z\\Z^\top&0\end{pmatrix},
\qquad \mathcal T_Y=\operatorname{spec}(\mathcal D(Y)).
\]
The dilation eigenvalues are the signed nonzero singular values,
together with the remaining zeros. Since normalized inputs have
Frobenius norm at most one, their dilation spectra lie in $[-1,1]$.
We combine $\kappa_b$ with divided differences over this interval and
the fixed reference spectrum $\mathcal T_Y$.

\begin{sopdproposition}[Update error with Muon]
\label{prop:app-muon}
For the current batch, let $\widehat g$ and $g^{\mathrm V}$ denote the
gradients of \method{} and Vanilla, respectively, restricted to a matrix parameter block. The
Muon updates defined by $\Phi_{\mathrm M}$ above, from the same parameter
values and previous momentum $\boldsymbol M$, satisfy
\begin{equation}
\boxed{\|\Phi_{\mathrm M}(\widehat g)-\Phi_{\mathrm M}(g^{\mathrm V})\|_F
\leq\Lambda_{\mathrm M}\|\widehat g-g^{\mathrm V}\|_F}.
\label{eq:app-muon-error}
\end{equation}
Here the coefficient is
\begin{equation}
\Lambda_{\mathrm M}=\eta\gamma_{\rm shape}(1-\beta^2)\kappa_b
\max_{\substack{|s|\leq1\\t\in\mathcal T_Y}}|\mathsf D_J(s,t)|.
\label{eq:app-muon-coefficient}
\end{equation}
This coefficient is finite and fixed conditional on the training history
and complete batch. The same pointwise bound holds with $\widehat g$
replaced by any matrix gradient input $x$ of the same shape.
Averaging over the current cutoffs of \method{} bounds the conditional
mean-square update error:
\begin{equation}
\E_S\|\Phi_{\mathrm M}(\widehat g)-\Phi_{\mathrm M}(g^{\mathrm V})\|_F^2
\leq\Lambda_{\mathrm M}^2
\left\{\frac{V}{C^2}+\rho^2\|g^{\mathrm V}\|_F^2\right\}.
\label{eq:app-muon-variance}
\end{equation}
\end{sopdproposition}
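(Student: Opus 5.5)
The proof factors the Muon increment into three maps and bounds the Lipschitz constant of each relative to the fixed reference. Write $\Phi_{\mathrm M}=-\eta\gamma_{\rm shape}\,\mathcal P_J\circ\boldsymbol X_0$. The normalized input depends on $x$ through the affine map $\boldsymbol R(x)=\beta^2\boldsymbol M+(1-\beta^2)x$, so
\[
\|\boldsymbol R(x)-\boldsymbol B\|_F=(1-\beta^2)\|x-g^{\mathrm V}\|_F .
\]
The argument has three steps:
\begin{itemize}
\item show that the Frobenius clamp $N(R)=R/\max\{\|R\|_F,\epsilon_{\mathrm M}\}$ satisfies $\|N(R)-N(\boldsymbol B)\|_F\leq\kappa_b\|R-\boldsymbol B\|_F$ for every $R$;
\item show that the polynomial iteration satisfies $\|\mathcal P_J(X)-\mathcal P_J(Y)\|_F\leq\max_{|s|\leq1,\,t\in\mathcal T_Y}|\mathsf D_J(s,t)|\,\|X-Y\|_F$ for every $X$ with $\|X\|_F\leq1$;
\item multiply the three constants, which gives Eq.~\eqref{eq:app-muon-error} for any input $x$.
\end{itemize}
The mean-square bound Eq.~\eqref{eq:app-muon-variance} then follows exactly as for Adam. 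Square the pointwise bound and take $\E_S$. The coefficient is fixed under conditioning on $(\mathcal H_k,\mathcal Y_k)$, because it depends only on $g^{\mathrm V}$, $\boldsymbol M$ and the hyperparameters. Finally apply Eq.~\eqref{eq:app-mse}, restricted to the block.

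\textbf{Normalization step.} Set $m_R=\max\{\|R\|_F,\epsilon_{\mathrm M}\}$ and $m_B=\max\{b,\epsilon_{\mathrm M}\}$. Decompose
\[
N(R)-N(\boldsymbol B)=\frac{R-\boldsymbol B}{m_R}+\boldsymbol B\Bigl(\frac1{m_R}-\frac1{m_B}\Bigr).
\]
A symmetric version, averaging the two ways of splitting, should yield the constant $2/(m_R+m_B)$ times $\|R-\boldsymbol B\|_F$. This uses the facts that $|m_R-m_B|\leq\|R-\boldsymbol B\|_F$ and $\|\boldsymbol B\|_F\leq m_B$, and similarly for $R$. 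Since $m_R\geq\epsilon_{\mathrm M}$, the constant $2/(m_R+m_B)$ is at most $2/(m_B+\epsilon_{\mathrm M})=\kappa_b$. I would verify this by a direct computation of $\|R/m_R-\boldsymbol B/m_B\|_F^2$. If the symmetric form is awkward, I would instead use the standard fact that the radial retraction onto a ball is $2/(\cdot+\cdot)$-Lipschitz, checking three cases: both points inside the ball, both outside, and mixed.

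\textbf{Polynomial step (main obstacle).} Because each iteration is an odd polynomial in $Z$, it commutes with the dilation:
\[
\mathcal D(\mathcal P_J(X))=f_J(\mathcal D(X)).
\]
The dilation is also an isometry up to a factor $\sqrt2$ in Frobenius norm, and that factor cancels on both sides. Hence it suffices to bound
\[
\|f_J(A)-f_J(C)\|_F\quad\text{for symmetric }A=\mathcal D(X),\ C=\mathcal D(Y).
\]
I would do this with the Daleckii--Krein divided-difference identity. Diagonalize $A=\sum_i s_iP_i$ and $C=\sum_k t_kQ_k$. Then
\[
P_i\bigl(f_J(A)-f_J(C)\bigr)Q_k=\mathsf D_J(s_i,t_k)\,P_i(A-C)Q_k .
\]
This follows for monomials by telescoping, and for polynomials by linearity. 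Summing the squared Frobenius norms over $(i,k)$ gives the bound $\max_{i,k}|\mathsf D_J(s_i,t_k)|\,\|A-C\|_F$, as in \citet{andersson2016operator}. The eigenvalues $s_i$ lie in $[-1,1]$, since normalized inputs have $\|X\|_{\rm op}\leq\|X\|_F\leq1$. The $t_k$ range over $\mathcal T_Y$, which yields exactly the max in Eq.~\eqref{eq:app-muon-coefficient}.

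The delicate points are keeping the asymmetric supremum (all $s$ but only reference $t$) and handling the zero eigenvalues of the dilation for rectangular blocks. Both are covered because $0\in[-1,1]$ and $\mathcal T_Y$ includes the zeros. For $J=0$ the step is trivial, since $\mathsf D_0=1$.
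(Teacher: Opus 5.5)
Your Nesterov step, polynomial step and mean-square step follow the paper's argument essentially verbatim: the dilation intertwines the odd iteration with $f_J$, the spectral divided-difference identity gives the bound, the $\sqrt2$ cancels, $s\in[-1,1]$ and $t\in\mathcal T_Y$. The gap is in the normalization step. The mechanism you lead with cannot produce the constant $2/(m_R+m_B)$. Averaging the two splittings gives
\[
N(R)-N(\boldsymbol B)=\tfrac12\Bigl(\tfrac1{m_R}+\tfrac1{m_B}\Bigr)(R-\boldsymbol B)+\tfrac12\Bigl(\tfrac1{m_R}-\tfrac1{m_B}\Bigr)(R+\boldsymbol B).
\]
With $|m_R-m_B|\leq\|R-\boldsymbol B\|_F$ and $\|R+\boldsymbol B\|_F\leq m_R+m_B$, the triangle inequality then yields only $1/m_R+1/m_B\geq4/(m_R+m_B)$. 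The better single splitting yields $2/\max\{m_R,m_B\}$, whose supremum over $R$ is $2/m_B$, strictly larger than $\kappa_b=2/(m_B+\epsilon_{\mathrm M})$. Your fallback does not help either: nonexpansiveness of the projection onto a ball gives only $1/\epsilon_{\mathrm M}$ for the clamp. The factor $2/(m_R+m_B)$ is the inner-product form of the Dunkl--Williams inequality, and it must use $\langle R,\boldsymbol B\rangle_F$, not just the norm facts you list.

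The direct computation you mention only as a check is what closes the step, and it is the paper's route. Set $r=\|R\|_F$, $a=r/m_R$, $c=b/m_B$ and $\cos\vartheta=\langle R,\boldsymbol B\rangle_F/(rb)$. The ratio $\|N(R)-N(\boldsymbol B)\|_F^2/\|R-\boldsymbol B\|_F^2=(a^2+c^2-2ac\cos\vartheta)/(r^2+b^2-2rb\cos\vartheta)$ is linear-fractional in $\cos\vartheta$ with positive denominator when $r\neq b$. It is therefore maximized at $\cos\vartheta=\pm1$, and the antipodal value $(a+c)/(r+b)$ dominates $|a-c|/|r-b|$. The cases $r=b$ and a zero radius are checked directly.

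From there your intermediate bound does hold. When $r,b$ lie on the same side of $\epsilon_{\mathrm M}$, $(a+c)/(r+b)=2/(m_R+m_B)$. In the mixed case the inequality $(a+c)/(r+b)\leq2/(m_R+m_B)$ reduces to $(r-\epsilon_{\mathrm M})(b-\epsilon_{\mathrm M})\leq0$. Your finish, $2/(m_R+m_B)\leq2/(m_B+\epsilon_{\mathrm M})=\kappa_b$, is a slightly tidier replacement for the paper's piecewise maximization of $q_b(r)$ over $r$. The paper also shows $\kappa_b$ is attained, which the proposition does not need. Once this computation is written out, the proposal is complete.
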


For a block of the globally clipped update in
Eq.~\eqref{eq:app-clipped-update}, evaluate
Eq.~\eqref{eq:app-muon-coefficient} with $\boldsymbol B=\boldsymbol R(y)$,
where $y$ is the corresponding block of the globally clipped full
Vanilla gradient.

\begin{proof}
We track the gradient difference through the Nesterov input,
normalization, and polynomial iterations, then combine their factors
to obtain the update-error coefficient.
Fix a matrix gradient input $x$. The shared previous momentum cancels
between $\boldsymbol R(x)$ and $\boldsymbol B=\boldsymbol R(g^{\mathrm V})$:
\[
\boldsymbol R(x)-\boldsymbol B=(1-\beta^2)(x-g^{\mathrm V}).
\]
Write $r=\|\boldsymbol R(x)\|_F$ and $b=\|\boldsymbol B\|_F$ for their
Frobenius norms. Normalization maps them to $\boldsymbol X_0(x)$ and
$Y=\boldsymbol X_0(g^{\mathrm V})$, with respective Frobenius norms
$a=r/\max\{r,\epsilon_{\mathrm M}\}$ and
$c=b/\max\{b,\epsilon_{\mathrm M}\}$.
Suppose first that $r,b>0$ and $r\ne b$, and let
$\cos\vartheta=\langle\boldsymbol R(x),\boldsymbol B\rangle_F/(rb)$.
Expanding the squared normalization-distance ratio
$\|\boldsymbol X_0(x)-Y\|_F^2/\|\boldsymbol R(x)-\boldsymbol B\|_F^2$
gives
\[
\frac{a^2+c^2-2ac\cos\vartheta}
{r^2+b^2-2rb\cos\vartheta}.
\]
Its denominator is at least $(r-b)^2>0$. Both numerator and denominator
are affine in $\cos\vartheta\in[-1,1]$, so differentiation gives a
constant sign and the maximum occurs at an endpoint. For
$\cos\vartheta=-1$ and $\cos\vartheta=1$, the square roots of the ratio
are, respectively,
\[
\frac{a+c}{r+b}
\quad\text{and}\quad
\frac{|a-c|}{|r-b|}.
\]
The first is at least the second. Indeed, when $r>b$, we have
$a\geq c$ and $a/r\leq c/b$, hence
$(a-c)(r+b)\leq(a+c)(r-b)$. Interchanging the radii proves the same
comparison when $r<b$.

The cases excluded from this ratio calculation give the same factor.
When $r=b>0$, both matrices are scaled by
$1/\max\{r,\epsilon_{\mathrm M}\}=(a+c)/(r+b)$, including when the
matrices coincide. If exactly one radius is zero, the distance ratio is
again $(a+c)/(r+b)$. If both vanish, both distances vanish. Define
$q_b(r)=(a+c)/(r+b)$ when $r+b>0$, and $q_0(0)=1/\epsilon_{\mathrm M}$.
Then every input satisfies
\begin{equation}
\|\boldsymbol X_0(x)-Y\|_F
\leq q_b(r)\|\boldsymbol R(x)-\boldsymbol B\|_F
\leq\kappa_b\|\boldsymbol R(x)-\boldsymbol B\|_F.
\label{eq:app-muon-radial-exact}
\end{equation}
To obtain the second inequality, first take $b\leq\epsilon_{\mathrm M}$.
Each normalized radius is at most its original radius divided by
$\epsilon_{\mathrm M}$, so $q_b(r)\leq1/\epsilon_{\mathrm M}=\kappa_b$.
For $b>\epsilon_{\mathrm M}$,
\[
q_b(r)=
\begin{cases}
\dfrac{r/\epsilon_{\mathrm M}+1}{r+b},&0\leq r\leq\epsilon_{\mathrm M},\\[3mm]
\dfrac2{r+b},&r\geq\epsilon_{\mathrm M}.
\end{cases}
\]
The first expression has derivative
$(b/\epsilon_{\mathrm M}-1)/(r+b)^2>0$; the second decreases with $r$.
Their common maximum at $r=\epsilon_{\mathrm M}$ is
$2/(b+\epsilon_{\mathrm M})=\kappa_b$.
For every $b>0$, choosing
$\boldsymbol R(x)=-\epsilon_{\mathrm M}\boldsymbol B/b$ attains the
factor $\kappa_b$. When $b=0$, any input with
$\|\boldsymbol R(x)\|_F=\epsilon_{\mathrm M}$ attains it.
Thus $\kappa_b$ is the exact distance-amplification factor of normalization
about $\boldsymbol B$.

For a matrix $X$ of the same shape as the normalized reference
$Y=\boldsymbol X_0(g^{\mathrm V})$, the symmetric dilation $\mathcal D$
relates the scalar polynomial $\varphi$ and its compositions $f_J$
to the matrix iterations. Block multiplication and iteration give
\[
\varphi(\mathcal D(X))=\mathcal D(\mathcal P_1(X)),
\qquad f_J(\mathcal D(X))=\mathcal D(\mathcal P_J(X)).
\]
The second identity includes $J=0$, since both compositions are then
the identity map.

For $\|X\|_{\rm op}\leq1$, choose eigenvalue--eigenvector pairs
$(s_u,\xi_u)$ of $\mathcal D(X)$ and $(t_v,\chi_v)$ of $\mathcal D(Y)$,
with both eigenvector bases orthonormal. The dilation eigenvalues are
signed singular values (and zeros), so the input eigenvalues satisfy
$|s_u|\leq1$, and the reference eigenvalues lie in
$\mathcal T_Y=\operatorname{spec}(\mathcal D(Y))$.
The divided difference $\mathsf D_J$ of $f_J$ relates the corresponding
entries in these two bases:
\begin{align*}
\xi_u^\top\{f_J(\mathcal D(X))-f_J(\mathcal D(Y))\}\chi_v
&=\{f_J(s_u)-f_J(t_v)\}\xi_u^\top \chi_v\\
&=\mathsf D_J(s_u,t_v)\,\xi_u^\top\{\mathcal D(X)-\mathcal D(Y)\}\chi_v.
\end{align*}
When $s_u=t_v$, both sides of the second equality are zero.
The sum of squared entries in these orthonormal bases is the squared
Frobenius norm. Bound every divided difference by
$\max_{|s|\leq1,\,t\in\mathcal T_Y}|\mathsf D_J(s,t)|$, sum the
squares, and take square roots. The dilation identity above and
$\|\mathcal D(Z)\|_F=\sqrt2\|Z\|_F$ then give
\[
\|\mathcal P_J(X)-\mathcal P_J(Y)\|_F
\leq\max_{\substack{|s|\leq1\\t\in\mathcal T_Y}}|\mathsf D_J(s,t)|\,\|X-Y\|_F.
\]

For the actual normalized input $X=\boldsymbol X_0(x)$,
$\|X\|_{\rm op}\leq\|X\|_F\leq1$, so this polynomial bound applies.
Its coefficient is finite: $\mathcal T_Y$ is finite, and each
$\mathsf D_J(\cdot,t)$ is a polynomial on the compact interval $[-1,1]$.
To compute $\Lambda_{\mathrm M}$ from the fixed state, form $\boldsymbol B$,
$b$, $Y$, and $f_J$. For each reference eigenvalue $t\in\mathcal T_Y$,
maximize $|\mathsf D_J(s,t)|$ over $s\in[-1,1]$ by evaluating the
endpoints and the stationary points of $\mathsf D_J(\cdot,t)$; a constant
polynomial needs only an endpoint. Take the largest value over $t$
and multiply by the displayed factors.

The parameter increments are $-\eta\gamma_{\rm shape}$ times the
polynomial outputs, so the two bounds give
\begin{align*}
\|\Phi_{\mathrm M}(x)-\Phi_{\mathrm M}(g^{\mathrm V})\|_F
&\leq\eta\gamma_{\rm shape}\kappa_b
\max_{\substack{|s|\leq1\\t\in\mathcal T_Y}}|\mathsf D_J(s,t)|
\,\|\boldsymbol R(x)-\boldsymbol B\|_F\\
&=\Lambda_{\mathrm M}\|x-g^{\mathrm V}\|_F.
\end{align*}
The last step uses the Nesterov-input difference established at the
start of the proof and the maximum defining $\Lambda_{\mathrm M}$.
This gives the bound for every matrix input $x$; setting $x=\widehat g$,
the gradient of \method{}, proves Eq.~\eqref{eq:app-muon-error}.

The coefficient is fixed under conditioning on history and the complete
batch. Squaring at $x=\widehat g$ and averaging over the current cutoffs gives
\[
\E_S\|\Phi_{\mathrm M}(\widehat g)-\Phi_{\mathrm M}(g^{\mathrm V})\|_F^2
\leq\Lambda_{\mathrm M}^2\E_S\|\widehat g-g^{\mathrm V}\|_F^2.
\]
Substituting the conditional gradient-error decomposition
Eq.~\eqref{eq:app-mse} for this block proves
Eq.~\eqref{eq:app-muon-variance}.
\end{proof}

We examine the size of Muon's coefficient bounds during training
(Figure~\ref{fig:muon-lambda}) and find that they remain moderate,
so small gradient errors also give small update-error bounds.
\begin{figure}[H]
\centering
\includegraphics[width=\textwidth]{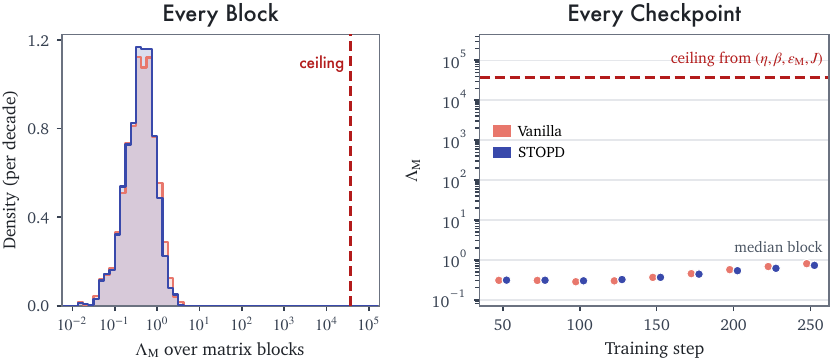}
\caption{Muon's coefficients $\Lambda_{\mathrm M}$: distribution across blocks and checkpoints (left), the median block (right), and the theoretical ceiling (dashed).}
\label{fig:muon-lambda}
\end{figure}

\section{Token and Time Savings}
\label{app:savings}

We first derive the expected token savings, then analyze how these savings translate into reduced generation time.

\subsection{Token savings}
\label{app:token-savings}

For a complete rollout of length $T$, random truncation skips token $t$ with probability $1-p_t$. The expected number of tokens saved is therefore
\[
D(T)=\sum_{t=1}^{T}(1-p_t).
\]
Tokens within the same block have the same inclusion probability. Position $t$ belongs to block $b=\lfloor(t-1)/K\rfloor$, where $b=0$ denotes the first block. Under the stopping rule in Eq.~\eqref{eq:stop}, its inclusion probability is
\begin{equation}
p_t=\frac{L-K\lfloor(t-1)/K\rfloor}{L}=1-\frac{bK}{L}>0.
\label{eq:app-survival}
\end{equation}
Every token in block $b$ is therefore skipped with probability $bK/L$. Write $T=mK+r$, where $m$ is the number of complete blocks and $0\leq r<K$ is the number of tokens in the next block. Each complete block contributes $K\cdot bK/L$ expected saved tokens, while the remaining $r$ tokens contribute $r\cdot mK/L$. Adding these contributions gives
\begin{equation}
D(T)=\frac KL\left(K\sum_{b=0}^{m-1}b+mr\right)
=\frac{K}{L}\left\{\frac{Km(m-1)}2+mr\right\}.
\label{eq:app-savings}
\end{equation}
Substituting $m=(T-r)/K$ makes the dependence on rollout length explicit:
\[
D(T)=\frac{T^2-KT+r(K-r)}{2L}.
\]
For $K\ll T$, the terms involving $K$ are small compared with $T^2$, giving
\[
D(T)\approx\frac{T^2}{2L},\qquad
\frac{D(T)}{T}\approx\frac{T}{2L}.
\]
Thus longer rollouts save a larger fraction of tokens: about $25\%$ for $T=L/2$ and nearly $50\%$ for $T=L$. At the max length, when $K$ divides $L$, the exact saving is $D(L)=(L-K)/2$. Rollouts ending within the first block ($T\leq K$) save no tokens.

Subtracting the expected saving from the complete length gives the expected generated length of rollout $i$:
\[
\E_S M_i=T_i-D(T_i).
\]
Here, $\E_S$ averages over the sampled stopping positions for a given complete rollout length $T_i$. In particular, a rollout reaching the max length retains $(L+K)/2$ tokens on average when $K$ divides $L$.

For a batch of $B$ rollouts, Vanilla OPD generates $N=\sum_{i=1}^BT_i$ tokens. Random truncation never generates more tokens than this. Summing the per-rollout expectations gives
\begin{equation}
\sum_i M_i\leq N\quad\text{for every set of stopping positions},
\qquad
\E_S\sum_i M_i=N-\sum_iD(T_i).
\label{eq:app-batch-cost}
\end{equation}

\subsection{Time savings}
\label{app:time-savings}
We model generation time using the number of decoding rounds and the
context lengths processed in those rounds. During autoregressive decoding,
the key--value (KV) cache stores attention keys and values for the prompt
and previously generated tokens. Generating each new token requires a
forward pass, whose attention reads this growing cached context. Random
truncation avoids forward-pass computation for skipped tokens, reduces
KV-cache reads, and limits the growth of each rollout's cache. This model explains when
random truncation can save more time than token savings alone would predict.
Let $\mathcal R_r$ be the set of active rollouts in decoding round $r$,
each generating one token. Over $R$ rounds, we approximate the time per
round and the total decoding time as
\begin{equation}
\tau_r
\approx
\phi
+
\psi\sum_{j\in\mathcal R_r}c_{j,r},
\qquad
t_{\mathrm{decode}}
\approx
\phi R+\psi\mathcal A,
\qquad
\mathcal A
=
\sum_{r=1}^{R}
\sum_{j\in\mathcal R_r}c_{j,r},
\label{eq:app-time-model}
\end{equation}
where $c_{j,r}$ is the context length of rollout $j$ in round $r$ and
$\phi\geq0$ is the fixed time cost of each round. The coefficient $\psi\geq0$
models the additional time per context token processed.
We call the cumulative sum $\mathcal A$ the KV area: it measures the total
cached context processed across decoding rounds. For a fixed model, it
serves as a proxy for attention work and KV-cache reads.

\noindent\textbf{KV area.}
For rollout $i$ with prompt length $T_i^{\mathrm p}$, the context before generating
token $t\geq1$ contains $T_i^{\mathrm p}$ prompt tokens and $t-1$ earlier rollout
tokens. Adding these context lengths over the generated tokens gives
\begin{equation}
\mathcal A_i
=
\sum_{t=1}^{M_i}(T_i^{\mathrm p}+t-1)
=
\sum_{t=1}^{T_i}\mathbf 1\{S_i\geq t\}(T_i^{\mathrm p}+t-1),
\qquad
\mathcal A=\sum_{i=1}^{B}\mathcal A_i.
\label{eq:app-area}
\end{equation}
Using $\E_S[\mathbf 1\{S_i\geq t\}]=p_t$ and
$\sum_{t=1}^{T_i}p_t=T_i-D(T_i)$ gives
\begin{align}
\E_S[\mathcal A_i]
&=
\sum_{t=1}^{T_i}p_t(T_i^{\mathrm p}+t-1)
=
T_i^{\mathrm p}\{T_i-D(T_i)\}
+
\sum_{t=1}^{T_i}(t-1)p_t,
\label{eq:app-area-expectation}\\
\mathcal A_i^{\mathrm V}
&=
\sum_{t=1}^{T_i}(T_i^{\mathrm p}+t-1)
=
T_i^{\mathrm p}T_i+\frac{T_i(T_i-1)}{2},
\qquad
\mathcal A_i^{\mathrm V}-\E_S[\mathcal A_i]
=
\sum_{t=1}^{T_i}(1-p_t)(T_i^{\mathrm p}+t-1).
\label{eq:app-area-saving}
\end{align}
Truncation is more likely to skip later tokens, each of which would
require processing a longer context.

To account for block boundaries, let $o_t$ count the positions before
token $t$ within its block. Eq.~\eqref{eq:survival} then gives
\begin{equation}
p_t
=
1-\frac{t-1-o_t}{L},
\qquad
o_t
=
(t-1)-K\left\lfloor\frac{t-1}{K}\right\rfloor,
\qquad
0\leq o_t<K.
\label{eq:app-survival-remainder}
\end{equation}
For $K\ll T_i$, summing over positions yields
\begin{align}
\E_S[M_i]
&=
\sum_{t=1}^{T_i}p_t
=
T_i-\frac{T_i(T_i-1)}{2L}
+\frac1L\sum_{t=1}^{T_i}o_t
\nonumber\\
&=
T_i-\frac{T_i^2}{2L}
+
O\!\left(\frac{KT_i}{L}\right),
\nonumber\\
\sum_{t=1}^{T_i}(t-1)p_t
&=
\frac{T_i(T_i-1)}{2}
-
\frac{T_i(T_i-1)(2T_i-1)}{6L}
+
\frac1L\sum_{t=1}^{T_i}(t-1)o_t
\nonumber\\
&=
\frac{T_i^2}{2}
-
\frac{T_i^3}{3L}
+
O\!\left(
T_i+\frac{KT_i^2}{L}
\right).
\label{eq:app-length-area-expansion}
\end{align}


Random truncation therefore saves the following amounts of tokens and
KV area from the generated rollout context:
\begin{equation}
T_i-\E_S[M_i]
\approx
\frac{T_i^2}{2L},
\qquad
\frac{T_i(T_i-1)}{2}
-
\sum_{t=1}^{T_i}(t-1)p_t
\approx
\frac{T_i^3}{3L}.
\label{eq:app-token-saving}
\end{equation}

The token saving counts positions whose forward-pass computation is avoided.
The KV-area saving measures the cached rollout context that attention no longer
needs to read at those positions.

Each generated token also uses the $T_i^{\mathrm p}$ prompt tokens. Adding this
contribution gives
\begin{equation}
\E_S[\mathcal A_i]
\approx
T_i^{\mathrm p}\left(T_i-\frac{T_i^2}{2L}\right)
+
\frac{T_i^2}{2}
-
\frac{T_i^3}{3L},
\qquad
\mathcal A_i^{\mathrm V}-\E_S[\mathcal A_i]
\approx
\frac{T_i^{\mathrm p} T_i^2}{2L}
+
\frac{T_i^3}{3L}.
\label{eq:app-area-with-prompt}
\end{equation}


When prompts are much shorter than rollouts, we can neglect their
contribution to the KV area. Writing $u_i=T_i/L$ for the rollout length
as a fraction of the max length gives
\begin{equation}
\E_S[M_i]
\approx
T_i\left(1-\frac{u_i}{2}\right),
\qquad
\E_S[\mathcal A_i]
\approx
\frac{T_i^2}{2}
\left(1-\frac{2u_i}{3}\right).
\end{equation}
Dividing Vanilla OPD's token count and KV area by the corresponding
expectations under random truncation gives
\begin{equation}
\frac{T_i}{\E_S[M_i]}
\approx
\frac{1}{1-u_i/2},
\qquad
\frac{\mathcal A_i^{\mathrm V}}{\E_S[\mathcal A_i]}
\approx
\frac{1}{1-2u_i/3}.
\label{eq:app-area-ratios}
\end{equation}
Dividing the area ratio by the token-count ratio shows how much the
average context per generated token decreases:
\begin{equation}
\frac{
\mathcal A_i^{\mathrm V}/T_i
}{
\E_S[\mathcal A_i]/\E_S[M_i]
}
\approx
\frac{1-u_i/2}{1-2u_i/3}.
\label{eq:app-context-ratio}
\end{equation}

For $T_i=L$, the expected generated length is
\begin{equation}
\E_S[M_i]
=
\frac{L+K}{2}
\end{equation}
when $K$ divides $L$. Under the same small-block and short-prompt
approximations, a rollout of length $T_i=L/2$ saves about $25\%$ of its
tokens and $33\%$ of its KV area in expectation. At $T_i=L$, these savings
increase to about $50\%$ and $67\%$, respectively. Random truncation thus
reduces the context processed by a larger fraction than the token count.

\noindent\textbf{Effect of scheduling.}
When processing long contexts dominates the cost in Eq.~\eqref{eq:app-time-model},
decoding time is approximately proportional to KV area, so time savings
can exceed the fraction of tokens saved. Scheduling also affects the
speedup by determining the number of decoding rounds $R$. We examine two
cases, with batch areas $\mathcal A=\sum_i\mathcal A_i$ and
$\mathcal A^{\mathrm V}=\sum_i\mathcal A_i^{\mathrm V}$.

First, suppose new rollouts can start as others finish, keeping about
$Q$ context tokens active in the KV cache of both methods during most
rounds. We ignore the brief periods at the beginning and end when fewer
rollouts are active:
\begin{equation}
\sum_{j\in\mathcal R_r}c_{j,r}
\approx
Q.
\end{equation}
Each round contributes about $Q$ to the KV area, so
\begin{equation}
\mathcal A
\approx
QR,
\qquad
R
\approx
\frac{\mathcal A}{Q},
\qquad
t_{\mathrm{decode}}
\approx
\left(
\frac{\phi}{Q}
+\psi
\right)\mathcal A.
\label{eq:app-full-cache}
\end{equation}
Dividing the generated-token count $N^{\mathrm S}=\sum_iM_i$ by decoding
time gives the throughput:
\begin{equation}
\frac{N^{\mathrm S}}{t_{\mathrm{decode}}}
\approx
\frac{1}{\phi/Q+\psi}
\frac{N^{\mathrm S}}{\mathcal A}.
\label{eq:app-throughput}
\end{equation}
Let $t_{\mathrm V}$ denote Vanilla OPD's decoding time. The predicted
speedup under this scheduling regime is
\begin{equation}
\frac{
t_{\mathrm V}
}{
\E_S[t_{\mathrm{decode}}]
}
\approx
\frac{
\mathcal A^{\mathrm V}
}{
\E_S[\mathcal A]
}
=
\frac{
N
}{
\E_S[N^{\mathrm S}]
}
\frac{
\mathcal A^{\mathrm V}/N
}{
\E_S[\mathcal A]/\E_S[N^{\mathrm S}]
}.
\label{eq:app-speedup}
\end{equation}
Equation~\eqref{eq:app-speedup} separates the predicted speedup into
the reduction in token count and the reduction in mean context per token.
The first factor reflects fewer tokens and hence less forward-pass computation.
The second captures the throughput gain from shorter cached contexts:
at the same active cache occupancy, more rollouts can generate concurrently.
When the prompt contribution is neglected, both factors are at least one.

Second, suppose all rollouts start together and generate one token per
round without pauses. No new rollouts enter the batch, so generation
ends when the longest rollout finishes:
\begin{equation}
R
=
\max_iM_i,
\qquad
t_{\mathrm{decode}}
\approx
\phi\max_iM_i+\psi\mathcal A.
\label{eq:app-fixed-batch}
\end{equation}
Consider $n$ rollouts whose complete lengths equal $L$. Ignoring block
rounding, let
\begin{equation}
U_1,\ldots,U_n
\stackrel{\mathrm{iid}}{\sim}
\mathrm{Uniform}(0,L),
\qquad
Z
=
\max_iU_i.
\end{equation}
For $0\leq z\leq L$, the largest stopping position is at most $z$ only
when all $n$ stopping positions are at most $z$. Each has probability
$z/L$ of doing so.
Independence lets us multiply these probabilities; integrating the
resulting tail probability gives the mean:
\begin{equation}
\Prob(Z\leq z)
=
\left(\frac zL\right)^n,
\qquad
\E[Z]
=
\int_0^L
\left[
1-\left(\frac zL\right)^n
\right]\mathrm dz
=
\frac{nL}{n+1}.
\label{eq:app-max}
\end{equation}
The expected gap between the max length and the largest stopping position is therefore
\begin{equation}
L-\E[Z]
=
\frac{L}{n+1}.
\end{equation}

Rounding $U_i$ up to a block boundary without exceeding the max length $L$ gives
\begin{equation}
U_i
\leq
S_i
\leq
\min\{U_i+K,L\}.
\end{equation}
Taking the maximum over rollouts and then the expectation gives
\begin{equation}
\frac{nL}{n+1}
\leq
\E_S\!\left[\max_iS_i\right]
\leq
\min\left\{
L,\,
\frac{nL}{n+1}+K
\right\}.
\label{eq:app-max-block}
\end{equation}
Random truncation can accelerate generation even when the longest retained
rollout remains close to $L$. It still removes many tokens across the batch,
avoiding their forward-pass computation and KV-cache reads. In the time
model, this reduces $\psi\mathcal A$ even when $\phi R$ changes little.
The greater the share of decoding time spent processing cached contexts,
the larger the resulting speedup.

\noindent\textbf{Practical implications.}
A larger batch can improve generation throughput while reducing truncation
noise. When the scheduler admits waiting rollouts as others finish, it can
reuse the KV-cache space freed by random truncation. A larger batch supplies
more waiting rollouts, helping sustain concurrency and cache occupancy.
The student update then combines gradient contributions from these same
rollouts, averaging over more independent stopping positions and reducing
the gradient variance caused by random truncation
(Appendix~\ref{app:batch-size}).

\section{Accuracy by Step}
\label{app:accuracy}

Table~\ref{tab:results} lists the accuracy behind Figures~\ref{fig:main-time} and~\ref{fig:scaling} at every evaluated checkpoint, and Tables~\ref{tab:results-optim}--\ref{tab:results-noreweight} the accuracy behind Figures~\ref{fig:optim}, \ref{fig:blockwidth}, \ref{fig:batchsize-math}, \ref{fig:esr}, and~\ref{fig:noreweight}.

\begin{table}[h]
\centering
\caption{Accuracy (\%) every 25 steps, mean $\pm$ standard deviation, from the base model and the shared step-25 Vanilla warmup to 250 steps.}
\label{tab:results}
\footnotesize
\setlength{\tabcolsep}{2pt}
\begin{tabular}{llccccccccccc}
\toprule
 & & & \multicolumn{10}{c}{\textbf{Accuracy (\%) at step}}\\
\cmidrule(lr){4-13}
\textbf{Setting} & \textbf{Method} & \textbf{Base} & \textbf{25} & \textbf{50} & \textbf{75} & \textbf{100} & \textbf{125} & \textbf{150} & \textbf{175} & \textbf{200} & \textbf{225} & \textbf{250}\\
\midrule
\multicolumn{13}{l}{\textit{Main results: Qwen3-1.7B-Base $\leftarrow$ Qwen3-4B-Instruct-2507}}\\
\multirow{2}{*}{Math} & Vanilla & \multirow{2}{*}{11.7} & \multirow{2}{*}{23.4$_{\pm0.8}$} & 24.1$_{\pm0.0}$ & 27.9$_{\pm0.7}$ & 26.9$_{\pm0.2}$ & 28.4$_{\pm0.6}$ & 28.5$_{\pm0.7}$ & 29.3$_{\pm0.4}$ & 28.1$_{\pm1.8}$ & 30.0$_{\pm0.4}$ & 30.1$_{\pm0.5}$\\
 & \method{} & & & 24.4$_{\pm0.9}$ & 27.3$_{\pm0.7}$ & 27.5$_{\pm0.5}$ & 27.7$_{\pm0.6}$ & 29.0$_{\pm0.5}$ & 29.1$_{\pm0.1}$ & 27.9$_{\pm1.4}$ & 28.5$_{\pm0.6}$ & 29.4$_{\pm0.6}$\\
\multirow{2}{*}{Code} & Vanilla & \multirow{2}{*}{21.8} & \multirow{2}{*}{27.6$_{\pm0.2}$} & 26.4$_{\pm0.2}$ & 25.5$_{\pm0.3}$ & 26.5$_{\pm0.2}$ & 27.2$_{\pm0.3}$ & 27.6$_{\pm0.2}$ & 27.6$_{\pm0.3}$ & 28.1$_{\pm0.3}$ & 28.5$_{\pm0.1}$ & 28.5$_{\pm0.1}$\\
 & \method{} & & & 26.3$_{\pm0.3}$ & 25.7$_{\pm0.4}$ & 25.8$_{\pm0.3}$ & 26.9$_{\pm0.3}$ & 27.4$_{\pm0.3}$ & 27.2$_{\pm0.1}$ & 27.5$_{\pm0.6}$ & 27.8$_{\pm0.4}$ & 28.1$_{\pm0.4}$\\
\multirow{2}{*}{\shortstack[l]{Instruction\\following}} & Vanilla & \multirow{2}{*}{15.6} & \multirow{2}{*}{24.6$_{\pm0.1}$} & 34.8$_{\pm0.5}$ & 44.4$_{\pm0.7}$ & 49.9$_{\pm1.0}$ & 52.2$_{\pm0.2}$ & 54.3$_{\pm0.2}$ & 57.0$_{\pm0.5}$ & 58.6$_{\pm1.1}$ & 59.9$_{\pm1.4}$ & 59.8$_{\pm1.3}$\\
 & \method{} & & & 35.0$_{\pm0.7}$ & 44.3$_{\pm0.5}$ & 50.6$_{\pm0.2}$ & 51.3$_{\pm0.5}$ & 53.6$_{\pm0.3}$ & 56.2$_{\pm0.3}$ & 59.0$_{\pm0.5}$ & 59.1$_{\pm1.2}$ & 60.1$_{\pm0.9}$\\
\midrule
\multicolumn{13}{l}{\textit{Scaling: mathematics, Qwen3-30B-A3B-Instruct-2507 teacher}}\\
\multirow{2}{*}{4B} & Vanilla & \multirow{2}{*}{15.2} & \multirow{2}{*}{34.9$_{\pm0.8}$} & 34.7$_{\pm0.7}$ & 41.1$_{\pm0.7}$ & 44.0$_{\pm0.6}$ & 44.2$_{\pm0.2}$ & 45.0$_{\pm0.5}$ & 45.0$_{\pm0.4}$ & 46.2$_{\pm0.9}$ & 46.3$_{\pm0.3}$ & 46.2$_{\pm0.2}$\\
 & \method{} & & & 34.7$_{\pm0.6}$ & 40.1$_{\pm0.6}$ & 43.5$_{\pm0.2}$ & 43.7$_{\pm0.3}$ & 44.9$_{\pm0.7}$ & 44.9$_{\pm0.5}$ & 45.5$_{\pm0.6}$ & 46.3$_{\pm0.7}$ & 45.3$_{\pm0.4}$\\
\multirow{2}{*}{8B} & Vanilla & \multirow{2}{*}{23.5} & \multirow{2}{*}{38.0$_{\pm0.3}$} & 39.3$_{\pm1.3}$ & 45.8$_{\pm0.8}$ & 47.1$_{\pm0.2}$ & 48.3$_{\pm0.4}$ & 48.3$_{\pm0.2}$ & 49.3$_{\pm0.4}$ & 50.1$_{\pm0.1}$ & 50.2$_{\pm0.3}$ & 50.1$_{\pm0.1}$\\
 & \method{} & & & 38.7$_{\pm0.9}$ & 44.6$_{\pm0.1}$ & 47.3$_{\pm0.8}$ & 47.2$_{\pm0.9}$ & 48.8$_{\pm0.7}$ & 48.6$_{\pm0.4}$ & 49.9$_{\pm0.8}$ & 49.1$_{\pm0.6}$ & 49.7$_{\pm0.8}$\\
\bottomrule
\end{tabular}

\end{table}

\begin{table}[h]
\centering
\caption{Accuracy (\%) every 25 steps under each optimizer in mathematics, mean $\pm$ standard deviation.}
\label{tab:results-optim}
\footnotesize
\setlength{\tabcolsep}{2pt}
\begin{tabular}{llccccccccccc}
\toprule
 & & & \multicolumn{10}{c}{\textbf{Accuracy (\%) at step}}\\
\cmidrule(lr){4-13}
\textbf{Optimizer} & \textbf{Method} & \textbf{Base} & \textbf{25} & \textbf{50} & \textbf{75} & \textbf{100} & \textbf{125} & \textbf{150} & \textbf{175} & \textbf{200} & \textbf{225} & \textbf{250}\\
\midrule
\multicolumn{13}{l}{\textit{Mathematics: Qwen3-1.7B-Base $\leftarrow$ Qwen3-4B-Instruct-2507}}\\
\multirow{2}{*}{AdamW} & Vanilla & \multirow{2}{*}{11.7} & \multirow{2}{*}{23.4$_{\pm0.8}$} & 24.1$_{\pm0.0}$ & 27.9$_{\pm0.7}$ & 26.9$_{\pm0.2}$ & 28.4$_{\pm0.6}$ & 28.5$_{\pm0.7}$ & 29.3$_{\pm0.4}$ & 28.1$_{\pm1.8}$ & 30.0$_{\pm0.4}$ & 30.1$_{\pm0.5}$\\
 & \method{} & & & 24.4$_{\pm0.9}$ & 27.3$_{\pm0.7}$ & 27.5$_{\pm0.5}$ & 27.7$_{\pm0.6}$ & 29.0$_{\pm0.5}$ & 29.1$_{\pm0.1}$ & 27.9$_{\pm1.4}$ & 28.5$_{\pm0.6}$ & 29.4$_{\pm0.6}$\\
\multirow{2}{*}{SGD} & Vanilla & \multirow{2}{*}{11.7} & \multirow{2}{*}{22.6$_{\pm0.5}$} & 24.4$_{\pm0.3}$ & 24.3$_{\pm0.3}$ & 24.5$_{\pm1.6}$ & 24.9$_{\pm0.1}$ & 24.4$_{\pm0.6}$ & 24.9$_{\pm0.2}$ & 25.2$_{\pm0.8}$ & 25.3$_{\pm0.3}$ & 25.2$_{\pm0.5}$\\
 & \method{} & & & 24.2$_{\pm0.5}$ & 24.6$_{\pm0.2}$ & 24.7$_{\pm0.3}$ & 24.2$_{\pm0.2}$ & 25.0$_{\pm0.3}$ & 24.1$_{\pm0.4}$ & 25.2$_{\pm0.2}$ & 25.2$_{\pm0.6}$ & 25.1$_{\pm0.6}$\\
\multirow{2}{*}{Muon} & Vanilla & \multirow{2}{*}{11.7} & \multirow{2}{*}{21.9$_{\pm0.3}$} & 24.6$_{\pm0.9}$ & 20.1$_{\pm0.5}$ & 25.0$_{\pm0.4}$ & 25.9$_{\pm0.9}$ & 26.5$_{\pm0.4}$ & 28.2$_{\pm0.1}$ & 28.8$_{\pm0.5}$ & 27.8$_{\pm0.1}$ & 29.5$_{\pm0.2}$\\
 & \method{} & & & 24.5$_{\pm0.2}$ & 20.7$_{\pm0.2}$ & 24.1$_{\pm0.9}$ & 25.8$_{\pm0.5}$ & 26.0$_{\pm0.4}$ & 27.6$_{\pm0.8}$ & 28.2$_{\pm0.2}$ & 28.4$_{\pm0.6}$ & 29.8$_{\pm0.2}$\\
\bottomrule
\end{tabular}

\end{table}

\begin{table}[h]
\centering
\caption{Accuracy (\%) every 25 steps for each block width $K$ in mathematics, mean $\pm$ standard deviation.}
\label{tab:results-blockwidth}
\footnotesize
\setlength{\tabcolsep}{2pt}
\begin{tabular}{llccccccccccc}
\toprule
 & & & \multicolumn{10}{c}{\textbf{Accuracy (\%) at step}}\\
\cmidrule(lr){4-13}
\textbf{\boldmath$K$} & \textbf{Method} & \textbf{Base} & \textbf{25} & \textbf{50} & \textbf{75} & \textbf{100} & \textbf{125} & \textbf{150} & \textbf{175} & \textbf{200} & \textbf{225} & \textbf{250}\\
\midrule
\multicolumn{13}{l}{\textit{Mathematics: Qwen3-1.7B-Base $\leftarrow$ Qwen3-4B-Instruct-2507}}\\
 & Vanilla & \multirow{5}{*}{11.7} & \multirow{5}{*}{23.4$_{\pm0.8}$} & 24.1$_{\pm0.0}$ & 27.9$_{\pm0.7}$ & 26.9$_{\pm0.2}$ & 28.4$_{\pm0.6}$ & 28.5$_{\pm0.7}$ & 29.3$_{\pm0.4}$ & 28.1$_{\pm1.8}$ & 30.0$_{\pm0.4}$ & 30.1$_{\pm0.5}$\\
32 & \method{} &  &  & 24.1$_{\pm0.2}$ & 27.3$_{\pm0.4}$ & 27.1$_{\pm0.5}$ & 27.6$_{\pm0.1}$ & 28.2$_{\pm0.6}$ & 29.2$_{\pm0.4}$ & 29.2$_{\pm0.5}$ & 28.8$_{\pm0.3}$ & 29.8$_{\pm0.7}$\\
128 & \method{} &  &  & 24.4$_{\pm0.9}$ & 27.3$_{\pm0.7}$ & 27.5$_{\pm0.5}$ & 27.7$_{\pm0.6}$ & 29.0$_{\pm0.5}$ & 29.1$_{\pm0.1}$ & 27.9$_{\pm1.4}$ & 28.5$_{\pm0.6}$ & 29.4$_{\pm0.6}$\\
512 & \method{} &  &  & 24.1$_{\pm0.7}$ & 27.7$_{\pm0.5}$ & 27.7$_{\pm0.5}$ & 27.5$_{\pm0.8}$ & 28.8$_{\pm0.2}$ & 28.9$_{\pm0.7}$ & 28.7$_{\pm0.8}$ & 28.8$_{\pm1.1}$ & 29.7$_{\pm0.2}$\\
2048 & \method{} &  &  & 23.7$_{\pm0.2}$ & 27.5$_{\pm0.2}$ & 26.9$_{\pm0.3}$ & 27.7$_{\pm0.5}$ & 28.4$_{\pm0.6}$ & 29.1$_{\pm0.6}$ & 28.5$_{\pm1.0}$ & 29.4$_{\pm0.6}$ & 30.0$_{\pm0.2}$\\
\bottomrule
\end{tabular}

\end{table}

\begin{table}[h]
\centering
\caption{Accuracy (\%) every 25 steps for each batch size $B$ in mathematics, mean $\pm$ standard deviation.}
\label{tab:results-batch-math}
\footnotesize
\setlength{\tabcolsep}{8pt}
\begin{tabular}{llccccc}
\toprule
 & & & \multicolumn{4}{c}{\textbf{Accuracy (\%) at step}}\\
\cmidrule(lr){4-7}
\textbf{\boldmath$B$} & \textbf{Method} & \textbf{Base} & \textbf{25} & \textbf{50} & \textbf{75} & \textbf{100}\\
\midrule
\multicolumn{7}{l}{\textit{Mathematics: Qwen3-1.7B-Base $\leftarrow$ Qwen3-4B-Instruct-2507}}\\
\multirow{2}{*}{128} & Vanilla & \multirow{2}{*}{11.7} & \multirow{2}{*}{23.1$_{\pm0.3}$} & 23.5$_{\pm0.6}$ & 27.6$_{\pm0.1}$ & 27.8$_{\pm0.3}$\\
 & \method{} & & & 24.3$_{\pm0.4}$ & 27.9$_{\pm0.6}$ & 28.4$_{\pm0.8}$\\
\multirow{2}{*}{256} & Vanilla & \multirow{2}{*}{11.7} & \multirow{2}{*}{23.4$_{\pm0.8}$} & 24.1$_{\pm0.0}$ & 27.9$_{\pm0.7}$ & 26.9$_{\pm0.2}$\\
 & \method{} & & & 24.4$_{\pm0.9}$ & 27.3$_{\pm0.7}$ & 27.5$_{\pm0.5}$\\
\multirow{2}{*}{512} & Vanilla & \multirow{2}{*}{11.7} & \multirow{2}{*}{24.4$_{\pm0.4}$} & 24.6$_{\pm0.1}$ & 27.3$_{\pm0.2}$ & 27.5$_{\pm0.2}$\\
 & \method{} & & & 24.9$_{\pm0.7}$ & 27.1$_{\pm0.2}$ & 27.0$_{\pm0.3}$\\
\bottomrule
\end{tabular}

\end{table}

\begin{table}[h]
\centering
\caption{Accuracy (\%) every 25 steps for ESR in mathematics, mean $\pm$ standard deviation.}
\label{tab:results-esr}
\footnotesize
\setlength{\tabcolsep}{2pt}
\begin{tabular}{lccccccccccc}
\toprule
 & & \multicolumn{10}{c}{\textbf{Accuracy (\%) at step}}\\
\cmidrule(lr){3-12}
\textbf{Method} & \textbf{Base} & \textbf{25} & \textbf{50} & \textbf{75} & \textbf{100} & \textbf{125} & \textbf{150} & \textbf{175} & \textbf{200} & \textbf{225} & \textbf{250}\\
\midrule
\multicolumn{12}{l}{\textit{Mathematics: Qwen3-1.7B-Base $\leftarrow$ Qwen3-4B-Instruct-2507}}\\
Vanilla & \multirow{3}{*}{11.7} & \multirow{3}{*}{23.4$_{\pm0.8}$} & 24.1$_{\pm0.0}$ & 27.9$_{\pm0.7}$ & 26.9$_{\pm0.2}$ & 28.4$_{\pm0.6}$ & 28.5$_{\pm0.7}$ & 29.3$_{\pm0.4}$ & 28.1$_{\pm1.8}$ & 30.0$_{\pm0.4}$ & 30.1$_{\pm0.5}$\\
\method{} &  &  & 24.4$_{\pm0.9}$ & 27.3$_{\pm0.7}$ & 27.5$_{\pm0.5}$ & 27.7$_{\pm0.6}$ & 29.0$_{\pm0.5}$ & 29.1$_{\pm0.1}$ & 27.9$_{\pm1.4}$ & 28.5$_{\pm0.6}$ & 29.4$_{\pm0.6}$\\
ESR &  &  & 24.5$_{\pm0.1}$ & 23.7$_{\pm1.2}$ & 23.4$_{\pm0.4}$ & 24.0$_{\pm0.4}$ & 23.4$_{\pm0.1}$ & 23.2$_{\pm0.4}$ & 23.0$_{\pm0.4}$ & 23.2$_{\pm0.5}$ & 23.5$_{\pm0.6}$\\
\bottomrule
\end{tabular}

\end{table}

\begin{table}[h]
\centering
\caption{Accuracy (\%) every 25 steps for \method{} with $w_t=1$ in mathematics, mean $\pm$ standard deviation.}
\label{tab:results-noreweight}
\footnotesize
\setlength{\tabcolsep}{2pt}
\begin{tabular}{lccccccccccc}
\toprule
 & & \multicolumn{10}{c}{\textbf{Accuracy (\%) at step}}\\
\cmidrule(lr){3-12}
\textbf{Method} & \textbf{Base} & \textbf{25} & \textbf{50} & \textbf{75} & \textbf{100} & \textbf{125} & \textbf{150} & \textbf{175} & \textbf{200} & \textbf{225} & \textbf{250}\\
\midrule
\multicolumn{12}{l}{\textit{Mathematics: Qwen3-1.7B-Base $\leftarrow$ Qwen3-4B-Instruct-2507}}\\
Vanilla & \multirow{3}{*}{11.7} & \multirow{3}{*}{23.4$_{\pm0.8}$} & 24.1$_{\pm0.0}$ & 27.9$_{\pm0.7}$ & 26.9$_{\pm0.2}$ & 28.4$_{\pm0.6}$ & 28.5$_{\pm0.7}$ & 29.3$_{\pm0.4}$ & 28.1$_{\pm1.8}$ & 30.0$_{\pm0.4}$ & 30.1$_{\pm0.5}$\\
\method{} &  &  & 24.4$_{\pm0.9}$ & 27.3$_{\pm0.7}$ & 27.5$_{\pm0.5}$ & 27.7$_{\pm0.6}$ & 29.0$_{\pm0.5}$ & 29.1$_{\pm0.1}$ & 27.9$_{\pm1.4}$ & 28.5$_{\pm0.6}$ & 29.4$_{\pm0.6}$\\
\method{}, $w_t=1$ &  &  & 23.8$_{\pm0.2}$ & 27.4$_{\pm0.1}$ & 26.9$_{\pm0.4}$ & 27.7$_{\pm0.4}$ & 27.9$_{\pm0.5}$ & 29.2$_{\pm0.6}$ & 29.0$_{\pm0.5}$ & 29.6$_{\pm0.3}$ & 29.1$_{\pm0.3}$\\
\midrule
\multicolumn{12}{l}{\textit{Mathematics: Qwen3-4B-Base $\leftarrow$ Qwen3-30B-A3B-Instruct-2507}}\\
Vanilla & \multirow{3}{*}{15.2} & \multirow{3}{*}{34.9$_{\pm0.8}$} & 34.7$_{\pm0.7}$ & 41.1$_{\pm0.7}$ & 44.0$_{\pm0.6}$ & 44.2$_{\pm0.2}$ & 45.0$_{\pm0.5}$ & 45.0$_{\pm0.4}$ & 46.2$_{\pm0.9}$ & 46.3$_{\pm0.3}$ & 46.2$_{\pm0.2}$\\
\method{} &  &  & 34.7$_{\pm0.6}$ & 40.1$_{\pm0.6}$ & 43.5$_{\pm0.2}$ & 43.7$_{\pm0.3}$ & 44.9$_{\pm0.7}$ & 44.9$_{\pm0.5}$ & 45.5$_{\pm0.6}$ & 46.3$_{\pm0.7}$ & 45.3$_{\pm0.4}$\\
\method{}, $w_t=1$ &  &  & 33.9$_{\pm0.7}$ & 38.8$_{\pm0.2}$ & 41.8$_{\pm0.9}$ & 43.7$_{\pm1.0}$ & 44.7$_{\pm0.8}$ & 45.1$_{\pm1.1}$ & 45.3$_{\pm0.3}$ & 45.7$_{\pm0.7}$ & 45.9$_{\pm0.2}$\\
\bottomrule
\end{tabular}

\end{table}


\FloatBarrier   
\section{Additional Time Decomposition}
\label{app:scaling-cost}

Tables~\ref{tab:speed-scaling}, \ref{tab:time-optim}, \ref{tab:time-blockwidth}, and~\ref{tab:time-batch-math} break down the training step for the 4B and 8B students (Section~\ref{sec:scaling}), the optimizers (Section~\ref{sec:optimizers}), the block widths (Section~\ref{sec:block-width}), and the batch sizes (Section~\ref{sec:batch-size-math}), respectively.

\begin{table}[h]
\centering
\caption{Cost of a training step for the 4B and 8B students in mathematics.}
\label{tab:speed-scaling}
\small
\setlength{\tabcolsep}{6pt}
\begin{tabular}{llrrrrrr}
\toprule
 & & & & \multicolumn{4}{c}{\textbf{Time per step (s)}}\\
\cmidrule(lr){5-8}
\textbf{Student} & \textbf{Method} & \textbf{Tokens} & \textbf{k tok/s} & \textbf{Generation} & \textbf{Update} & \textbf{Other} & \textbf{Total}\\
\midrule
\multirow{3}{*}{4B} & Vanilla & 10{,}005 & 3.4 & 753 & 360 & 95 & 1209\\
 & \method{} & 6{,}511 & 4.3 & 392 & 219 & 61 & 672\\
 & \textcolor{cornellred}{\textit{Speedup}} & \textcolor{cornellred}{1.5$\times$} & \textcolor{cornellred}{1.3$\times$} & \textcolor{cornellred}{1.9$\times$} & \textcolor{cornellred}{1.6$\times$} & \textcolor{cornellred}{1.6$\times$} & \textcolor{cornellred}{\textbf{1.8$\times$}}\\
\midrule
\multirow{3}{*}{8B} & Vanilla & 9{,}432 & 5.6 & 434 & 303 & 81 & 818\\
 & \method{} & 6{,}276 & 6.5 & 246 & 190 & 54 & 490\\
 & \textcolor{cornellred}{\textit{Speedup}} & \textcolor{cornellred}{1.5$\times$} & \textcolor{cornellred}{1.2$\times$} & \textcolor{cornellred}{1.8$\times$} & \textcolor{cornellred}{1.6$\times$} & \textcolor{cornellred}{1.5$\times$} & \textcolor{cornellred}{\textbf{1.7$\times$}}\\
\bottomrule
\end{tabular}

\end{table}

\begin{table}[h]
\centering
\caption{Cost of a training step under each optimizer in mathematics.}
\label{tab:time-optim}
\small
\setlength{\tabcolsep}{6pt}
\begin{tabular}{llrrrrrr}
\toprule
\multicolumn{4}{l}{\textit{Qwen3-1.7B-Base $\leftarrow$ Qwen3-4B-Instruct-2507}} & \multicolumn{4}{c}{\textbf{Time per step (s)}}\\
\cmidrule(lr){5-8}
\textbf{Optimizer} & \textbf{Method} & \textbf{Tokens} & \textbf{k tok/s} & \textbf{Generation} & \textbf{Update} & \textbf{Other} & \textbf{Total}\\
\midrule
\multirow{3}{*}{AdamW} & Vanilla & 9{,}516 & 3.5 & 697 & 189 & 56 & 941\\
 & \method{} & 6{,}255 & 4.5 & 358 & 119 & 36 & 513\\
 & \textcolor{cornellred}{\textit{Speedup}} & \textcolor{cornellred}{1.5$\times$} & \textcolor{cornellred}{1.3$\times$} & \textcolor{cornellred}{2.0$\times$} & \textcolor{cornellred}{1.6$\times$} & \textcolor{cornellred}{1.5$\times$} & \textcolor{cornellred}{\textbf{1.8$\times$}}\\
\midrule
\multirow{3}{*}{SGD} & Vanilla & 9{,}278 & 2.9 & 810 & 185 & 50 & 1045\\
 & \method{} & 5{,}474 & 4.1 & 342 & 109 & 34 & 485\\
 & \textcolor{cornellred}{\textit{Speedup}} & \textcolor{cornellred}{1.7$\times$} & \textcolor{cornellred}{1.4$\times$} & \textcolor{cornellred}{2.4$\times$} & \textcolor{cornellred}{1.7$\times$} & \textcolor{cornellred}{1.5$\times$} & \textcolor{cornellred}{\textbf{2.2$\times$}}\\
\midrule
\multirow{3}{*}{Muon} & Vanilla & 9{,}991 & 3.3 & 784 & 192 & 52 & 1029\\
 & \method{} & 6{,}247 & 4.2 & 379 & 124 & 39 & 542\\
 & \textcolor{cornellred}{\textit{Speedup}} & \textcolor{cornellred}{1.6$\times$} & \textcolor{cornellred}{1.3$\times$} & \textcolor{cornellred}{2.1$\times$} & \textcolor{cornellred}{1.6$\times$} & \textcolor{cornellred}{1.3$\times$} & \textcolor{cornellred}{\textbf{1.9$\times$}}\\
\bottomrule
\end{tabular}

\end{table}

\begin{table}[h]
\centering
\caption{Cost of a training step for each block width $K$ in mathematics.}
\label{tab:time-blockwidth}
\small
\setlength{\tabcolsep}{6pt}
\begin{tabular}{llrrrrrr}
\toprule
\multicolumn{4}{l}{\textit{Qwen3-1.7B-Base $\leftarrow$ Qwen3-4B-Instruct-2507}} & \multicolumn{4}{c}{\textbf{Time per step (s)}}\\
\cmidrule(lr){5-8}
\textbf{\boldmath$K$} & \textbf{Method} & \textbf{Tokens} & \textbf{k tok/s} & \textbf{Generation} & \textbf{Update} & \textbf{Other} & \textbf{Total}\\
\midrule
 & Vanilla & 9{,}516 & 3.5 & 697 & 189 & 56 & 941\\
\midrule
\multirow{2}{*}{32} & \method{} & 6{,}227 & 4.4 & 366 & 124 & 41 & 531\\
 & \textcolor{cornellred}{\textit{Speedup}} & \textcolor{cornellred}{1.5$\times$} & \textcolor{cornellred}{1.2$\times$} & \textcolor{cornellred}{1.9$\times$} & \textcolor{cornellred}{1.5$\times$} & \textcolor{cornellred}{1.4$\times$} & \textcolor{cornellred}{\textbf{1.8$\times$}}\\
\midrule
\multirow{2}{*}{128} & \method{} & 6{,}255 & 4.5 & 358 & 119 & 36 & 513\\
 & \textcolor{cornellred}{\textit{Speedup}} & \textcolor{cornellred}{1.5$\times$} & \textcolor{cornellred}{1.3$\times$} & \textcolor{cornellred}{2.0$\times$} & \textcolor{cornellred}{1.6$\times$} & \textcolor{cornellred}{1.5$\times$} & \textcolor{cornellred}{\textbf{1.8$\times$}}\\
\midrule
\multirow{2}{*}{512} & \method{} & 6{,}365 & 4.4 & 372 & 124 & 39 & 535\\
 & \textcolor{cornellred}{\textit{Speedup}} & \textcolor{cornellred}{1.5$\times$} & \textcolor{cornellred}{1.3$\times$} & \textcolor{cornellred}{1.9$\times$} & \textcolor{cornellred}{1.5$\times$} & \textcolor{cornellred}{1.4$\times$} & \textcolor{cornellred}{\textbf{1.8$\times$}}\\
\midrule
\multirow{2}{*}{2048} & \method{} & 6{,}792 & 4.4 & 397 & 131 & 41 & 569\\
 & \textcolor{cornellred}{\textit{Speedup}} & \textcolor{cornellred}{1.4$\times$} & \textcolor{cornellred}{1.3$\times$} & \textcolor{cornellred}{1.8$\times$} & \textcolor{cornellred}{1.4$\times$} & \textcolor{cornellred}{1.4$\times$} & \textcolor{cornellred}{\textbf{1.7$\times$}}\\
\bottomrule
\end{tabular}

\end{table}


\begin{table}[h]
\centering
\caption{Cost of a training step for each batch size in mathematics over the first 100 steps.}
\label{tab:time-batch-math}
\small
\setlength{\tabcolsep}{6pt}
\begin{tabular}{llrrrrrr}
\toprule
\multicolumn{4}{l}{\textit{Qwen3-1.7B-Base $\leftarrow$ Qwen3-4B-Instruct-2507}} & \multicolumn{4}{c}{\textbf{Time per step (s)}}\\
\cmidrule(lr){5-8}
\textbf{Batch size} & \textbf{Method} & \textbf{Tokens} & \textbf{k tok/s} & \textbf{Generation} & \textbf{Update} & \textbf{Other} & \textbf{Total}\\
\midrule
\multirow{3}{*}{128} & Vanilla & 9{,}403 & 3.0 & 399 & 91 & 26 & 516\\
 & \method{} & 5{,}781 & 4.0 & 185 & 53 & 17 & 256\\
 & \textcolor{cornellred}{\textit{Speedup}} & \textcolor{cornellred}{1.6$\times$} & \textcolor{cornellred}{1.3$\times$} & \textcolor{cornellred}{2.2$\times$} & \textcolor{cornellred}{1.7$\times$} & \textcolor{cornellred}{1.5$\times$} & \textcolor{cornellred}{\textbf{2.0$\times$}}\\
\midrule
\multirow{3}{*}{256} & Vanilla & 9{,}650 & 3.1 & 793 & 195 & 57 & 1045\\
 & \method{} & 5{,}940 & 4.3 & 356 & 112 & 33 & 501\\
 & \textcolor{cornellred}{\textit{Speedup}} & \textcolor{cornellred}{1.6$\times$} & \textcolor{cornellred}{1.4$\times$} & \textcolor{cornellred}{2.2$\times$} & \textcolor{cornellred}{1.7$\times$} & \textcolor{cornellred}{1.7$\times$} & \textcolor{cornellred}{\textbf{2.1$\times$}}\\
\midrule
\multirow{3}{*}{512} & Vanilla & 9{,}779 & 3.3 & 1531 & 385 & 107 & 2024\\
 & \method{} & 6{,}041 & 4.5 & 682 & 229 & 67 & 978\\
 & \textcolor{cornellred}{\textit{Speedup}} & \textcolor{cornellred}{1.6$\times$} & \textcolor{cornellred}{1.4$\times$} & \textcolor{cornellred}{2.2$\times$} & \textcolor{cornellred}{1.7$\times$} & \textcolor{cornellred}{1.6$\times$} & \textcolor{cornellred}{\textbf{2.1$\times$}}\\
\bottomrule
\end{tabular}

\end{table}

\FloatBarrier   


%
%

\FloatBarrier
\section{Comparison with Early Stopping Rollout}
\label{app:esr}


\begin{figure}[h]
\centering
\includegraphics[width=\textwidth]{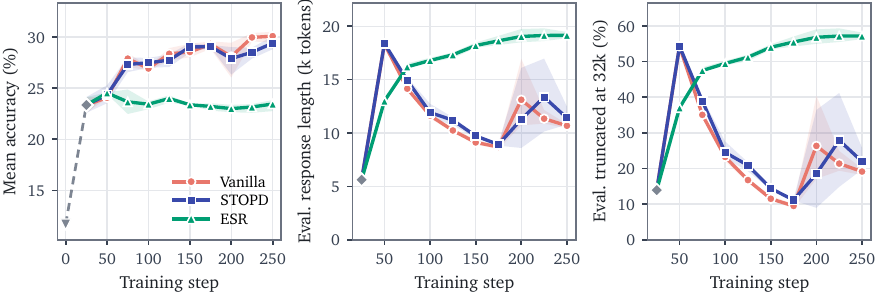}
\caption{ESR in mathematics: accuracy (left), response length at evaluation (middle), and truncated share (right).}
\label{fig:esr}
\end{figure}

Early Stopping Rollout (ESR) \citep{ziheng2026moreearlystoppingrollout} truncates each rollout after its first 100 tokens, or at natural termination, and distills only that prefix. We evaluate ESR in the same mathematics setting as our main experiments (Figure~\ref{fig:esr}). Accuracy every 25 steps is shown in Table~\ref{tab:results-esr} in Appendix~\ref{app:accuracy}.
The performance of ESR saturates after approximately 50 training steps, while \method{} and Vanilla continue increasing and outperform ESR by 5.9 and 6.6 points at step 250.
ESR's response length and truncation rate continue to increase throughout training, reaching approximately 20k tokens and 57\% at step 250, while Vanilla and \method{} peak around step 50 before declining to approximately 10k tokens and 20\%. This divergence arises from fixed-prefix distillation, which excludes end-of-sequence supervision whenever a response extends beyond the truncation boundary, weakening the student's ability to terminate generation. 
In contrast, \method{} preserves termination supervision by including every token position with positive probability, closely matching Vanilla's accuracy.

\FloatBarrier
\section{Removing the Blockwise Inverse-Probability Weights}
\label{app:noreweight}
\begin{figure}[h]
\centering
\includegraphics[width=\textwidth]{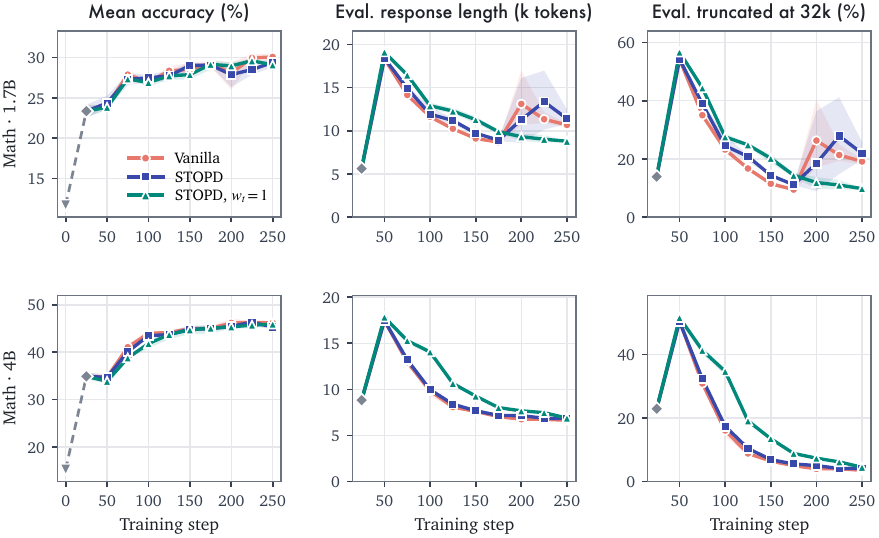}
\caption{\method{} with every weight set to $w_t=1$, in mathematics with the 1.7B student (top) and the 4B student (bottom): accuracy (left), response length at evaluation (middle), and truncated share (right).}
\label{fig:noreweight}
\end{figure}

To isolate the effect of blockwise inverse-probability weighting, we set $w_t=1$ in \method{} and evaluate it in two mathematics settings: 1.7B and 4B students with Qwen3-4B-Instruct-2507 and Qwen3-30B-A3B-Instruct-2507 teachers, respectively (Figure~\ref{fig:noreweight}). Accuracy every 25 steps is shown in Table~\ref{tab:results-noreweight} in Appendix~\ref{app:accuracy}. The $w_t=1$ variant maintains accuracy close to Vanilla and \method{} throughout training for both students. Its evaluation responses are initially longer and more frequently truncated than Vanilla's, but both differences decrease over training. For the 4B student, responses are $43\%$ longer at step 100, with a truncation rate of $34.7\%$ compared with $16.1\%$ for Vanilla. By step 250, the response-length gap narrows to $3.6\%$. For the 1.7B student, responses are $4\%$--$24\%$ longer over steps 50--175. In comparison, \method{} stays closer to Vanilla's response length, within $6\%$ for the 4B student and $10\%$ for the 1.7B student over steps 50--175. Compared with ESR, whose response length and truncation rate continue to increase, random block-end truncation with $w_t=1$ maintains accuracy close to Vanilla, while blockwise weighting further aligns response length and truncation behavior with Vanilla.

\FloatBarrier

\section{Per-Benchmark Accuracy}
\label{app:benchmarks}

Figures~\ref{fig:bench-math}--\ref{fig:bench-noreweight-4b} break the average accuracy of Sections~\ref{sec:main-results}--\ref{sec:batch-size-math} and Appendices~\ref{app:esr} and~\ref{app:noreweight} into their benchmarks, against training step: the base model sits at step 0 and the shared warmup checkpoint at step 25, and each point averages the three seeds with the range across seeds shaded. 

\begin{figure}[h]
\centering
\includegraphics[width=\textwidth]{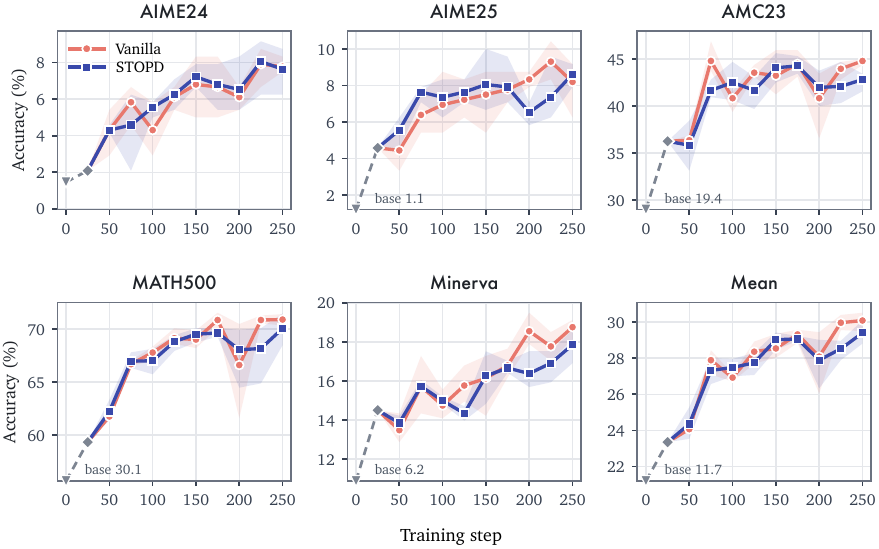}
\caption{Mathematics, Qwen3-1.7B-Base $\leftarrow$ Qwen3-4B-Instruct-2507: accuracy per benchmark against training step.}
\label{fig:bench-math}
\end{figure}

\begin{figure}[h]
\centering
\includegraphics[width=\textwidth]{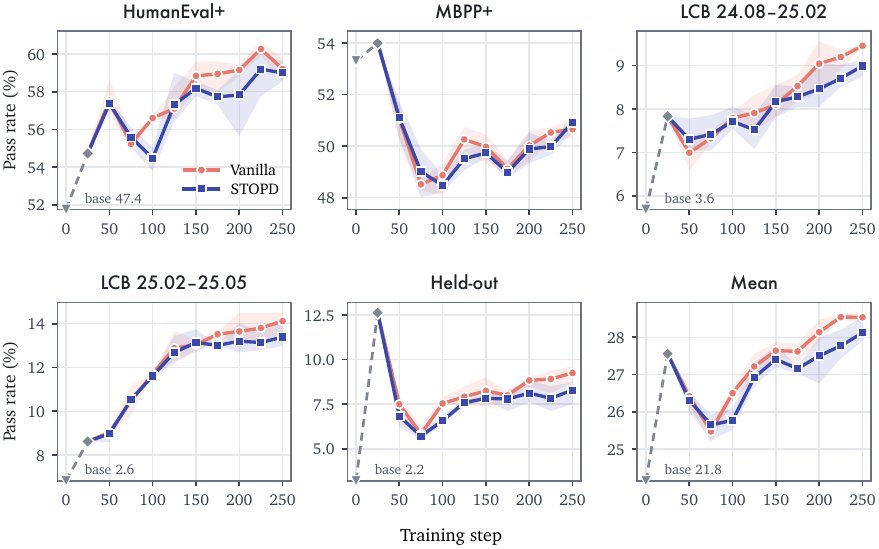}
\caption{Code, Qwen3-1.7B-Base $\leftarrow$ Qwen3-4B-Instruct-2507: pass rate per benchmark against training step.}
\label{fig:bench-code}
\end{figure}

\begin{figure}[h]
\centering
\includegraphics[width=\textwidth]{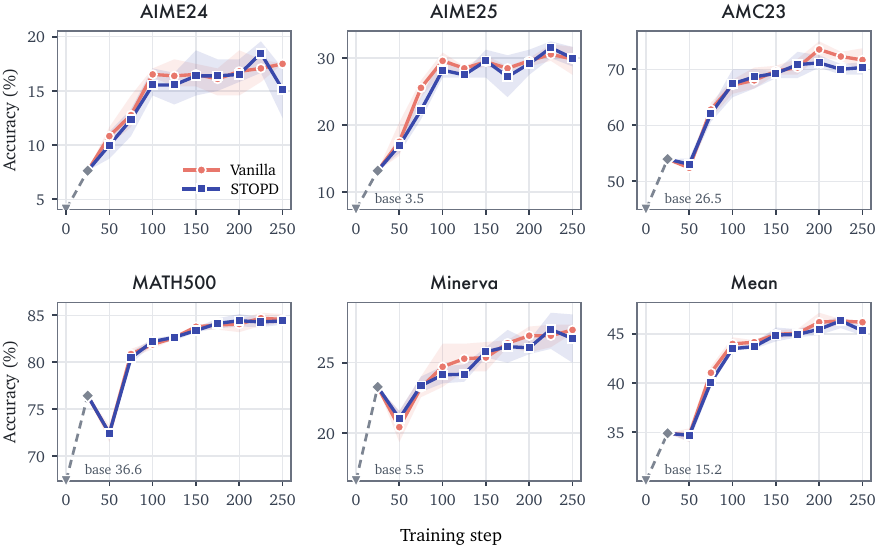}
\caption{Mathematics, Qwen3-4B-Base $\leftarrow$ Qwen3-30B-A3B-Instruct-2507: accuracy per benchmark against training step.}
\label{fig:bench-math-4b}
\end{figure}

\begin{figure}[h]
\centering
\includegraphics[width=\textwidth]{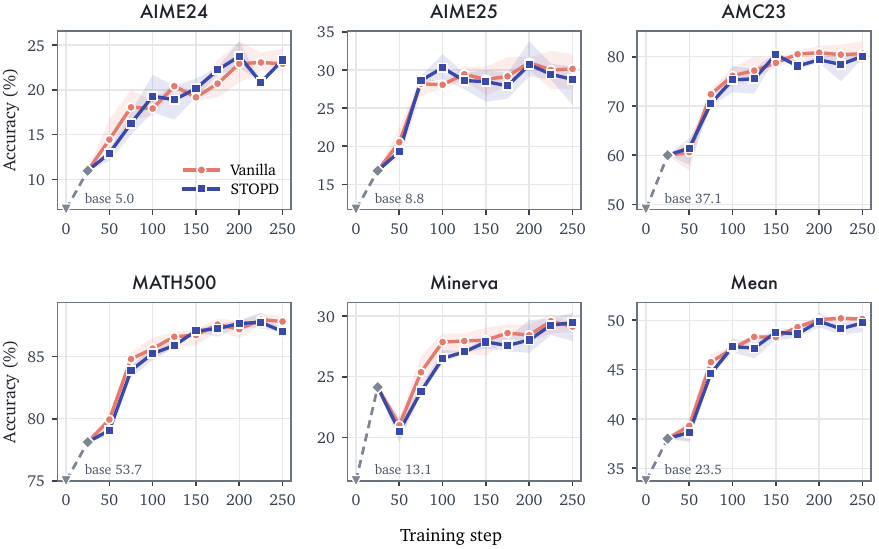}
\caption{Mathematics, Qwen3-8B-Base $\leftarrow$ Qwen3-30B-A3B-Instruct-2507: accuracy per benchmark against training step.}
\label{fig:bench-math-8b}
\end{figure}

\begin{figure}[h]
\centering
\includegraphics[width=\textwidth]{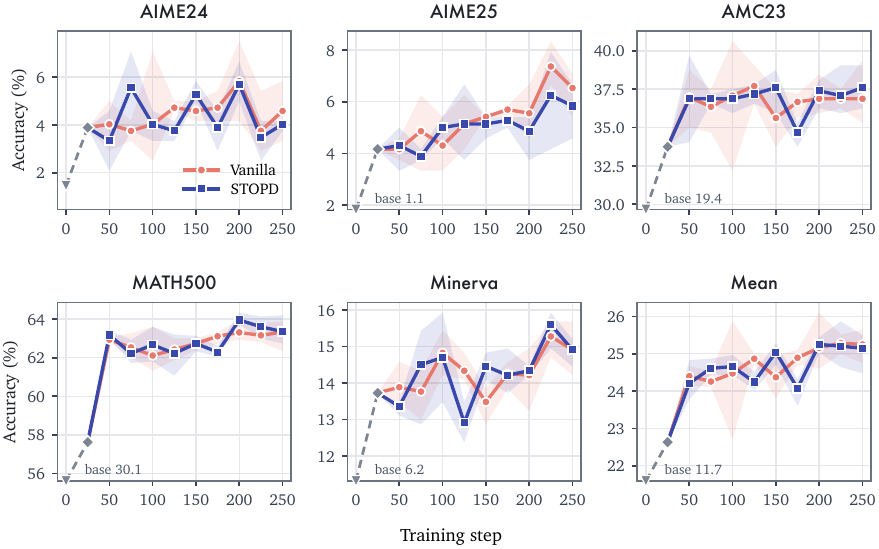}
\caption{Mathematics under SGD, Qwen3-1.7B-Base $\leftarrow$ Qwen3-4B-Instruct-2507: accuracy per benchmark against training step.}
\label{fig:bench-sgd}
\end{figure}

\begin{figure}[h]
\centering
\includegraphics[width=\textwidth]{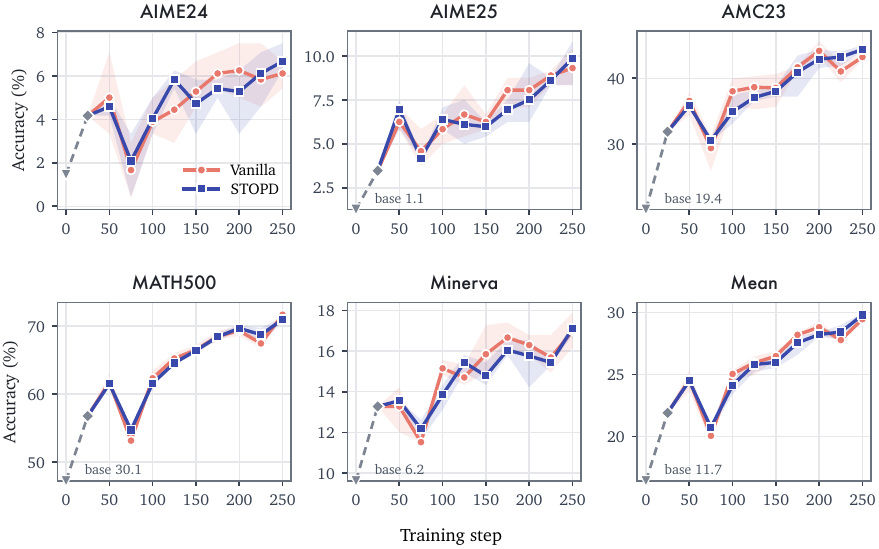}
\caption{Mathematics under Muon, Qwen3-1.7B-Base $\leftarrow$ Qwen3-4B-Instruct-2507: accuracy per benchmark against training step.}
\label{fig:bench-muon}
\end{figure}

\begin{figure}[h]
\centering
\includegraphics[width=\textwidth]{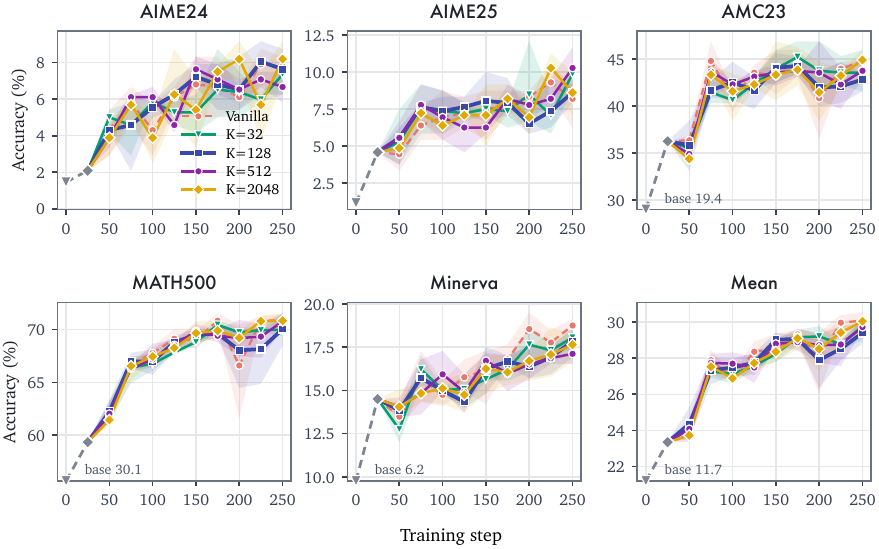}
\caption{Block width in mathematics, Qwen3-1.7B-Base $\leftarrow$ Qwen3-4B-Instruct-2507: accuracy per benchmark against training step.}
\label{fig:bench-blockwidth}
\end{figure}

\begin{figure}[h]
\centering
\includegraphics[width=\textwidth]{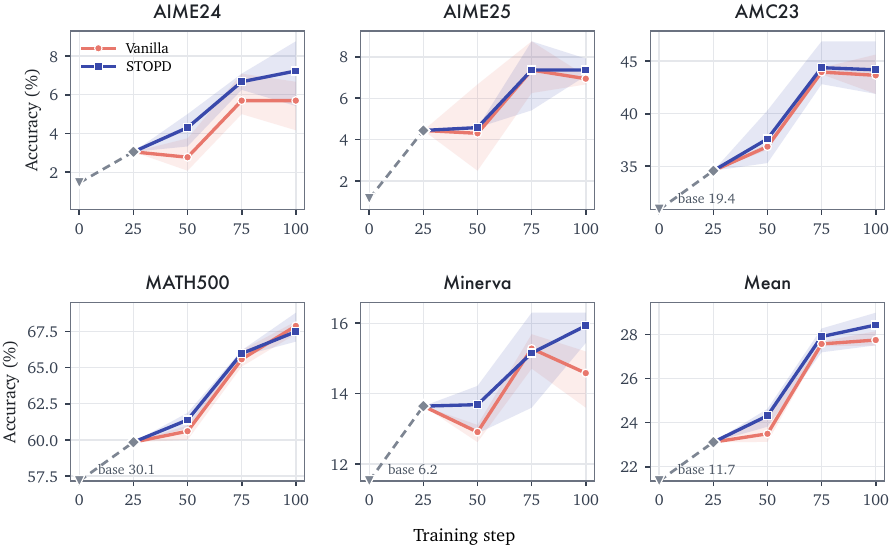}
\caption{Mathematics at $B=128$, Qwen3-1.7B-Base $\leftarrow$ Qwen3-4B-Instruct-2507: accuracy per benchmark against training step.}
\label{fig:bench-batch-math-128}
\end{figure}

\begin{figure}[h]
\centering
\includegraphics[width=\textwidth]{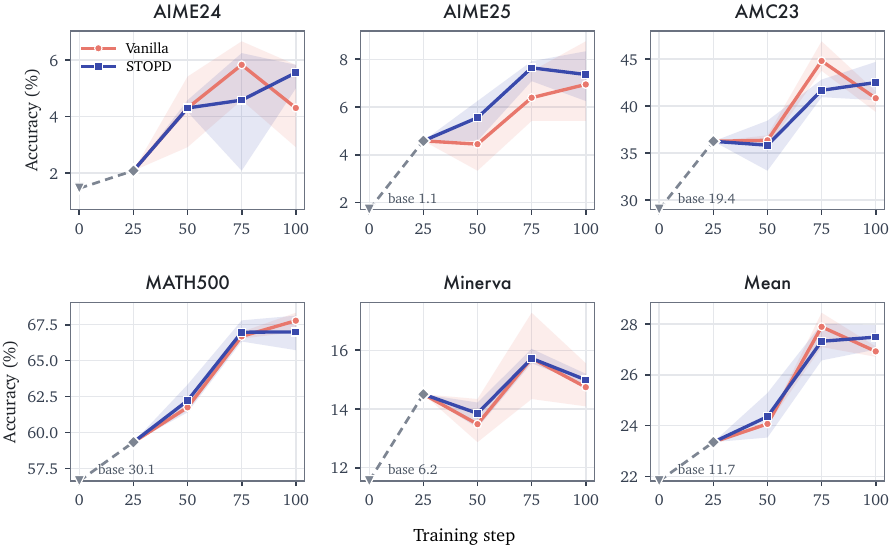}
\caption{Mathematics at $B=256$, Qwen3-1.7B-Base $\leftarrow$ Qwen3-4B-Instruct-2507: accuracy per benchmark against training step.}
\label{fig:bench-batch-math-256}
\end{figure}

\begin{figure}[h]
\centering
\includegraphics[width=\textwidth]{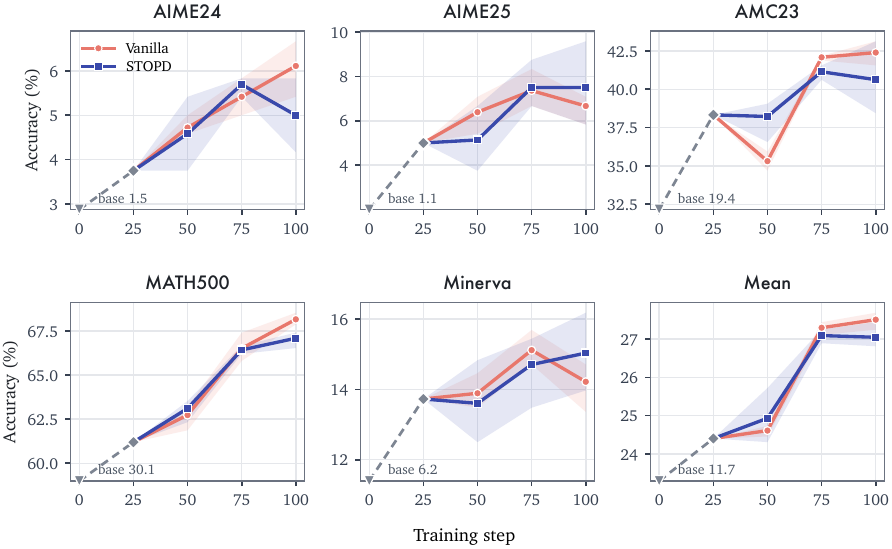}
\caption{Mathematics at $B=512$, Qwen3-1.7B-Base $\leftarrow$ Qwen3-4B-Instruct-2507: accuracy per benchmark against training step.}
\label{fig:bench-batch-math-512}
\end{figure}

\begin{figure}[h]
\centering
\includegraphics[width=\textwidth]{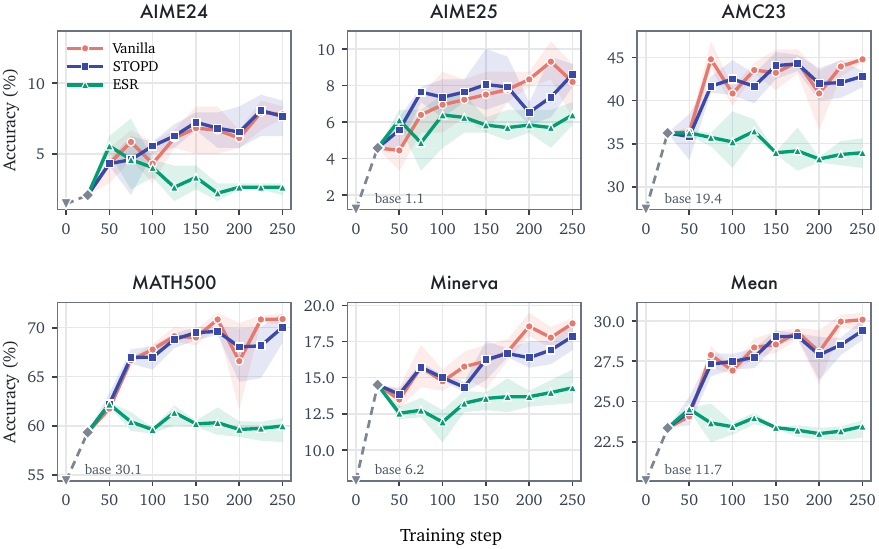}
\caption{ESR in mathematics, Qwen3-1.7B-Base $\leftarrow$ Qwen3-4B-Instruct-2507: accuracy per benchmark against training step.}
\label{fig:bench-esr}
\end{figure}

\begin{figure}[h]
\centering
\includegraphics[width=\textwidth]{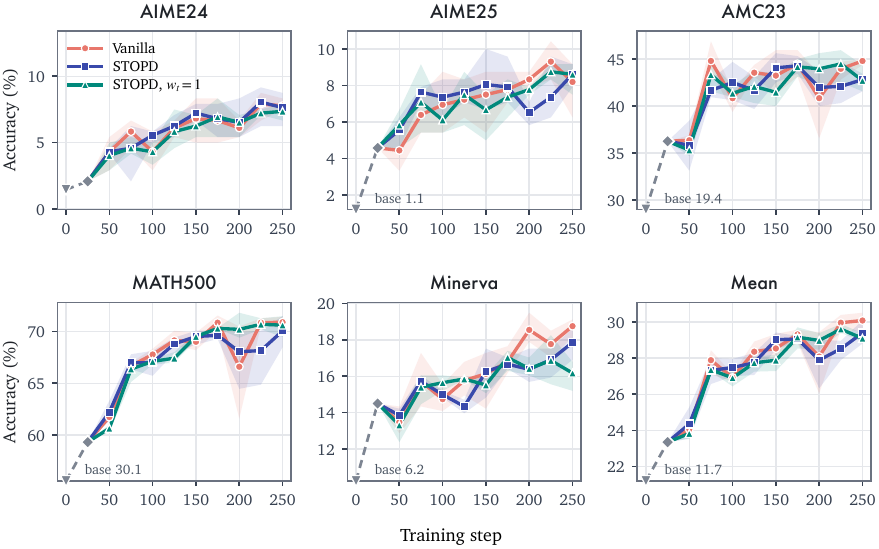}
\caption{\method{} with $w_t=1$ in mathematics, Qwen3-1.7B-Base $\leftarrow$ Qwen3-4B-Instruct-2507: accuracy per benchmark against training step.}
\label{fig:bench-noreweight-17b}
\end{figure}

\begin{figure}[h]
\centering
\includegraphics[width=\textwidth]{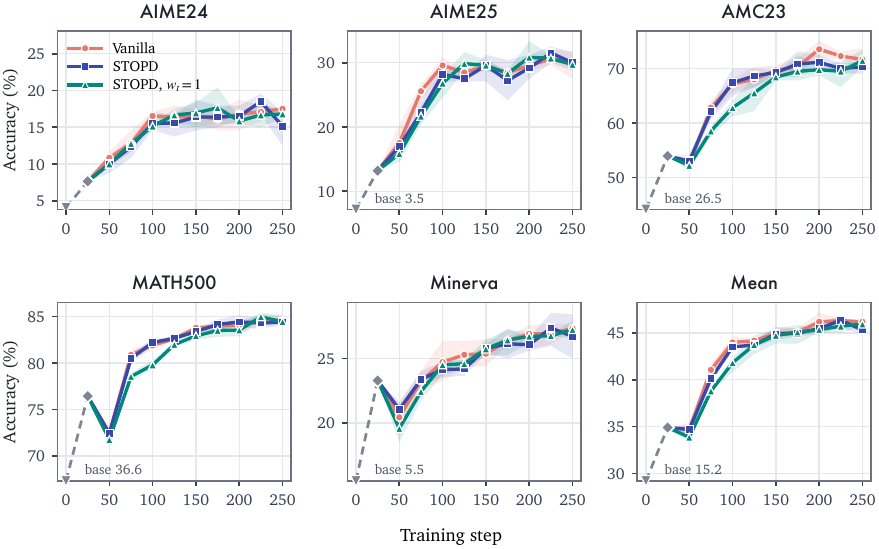}
\caption{\method{} with $w_t=1$ in mathematics, Qwen3-4B-Base $\leftarrow$ Qwen3-30B-A3B-Instruct-2507: accuracy per benchmark against training step.}
\label{fig:bench-noreweight-4b}
\end{figure}



\end{document}